\PassOptionsToPackage{table}{xcolor}
\documentclass[10pt, a4paper, copyright]{krafton-ai}

\usepackage[authoryear, sort&compress, round]{natbib}
\usepackage{nicefrac}
\usepackage{algorithm}
\usepackage{algorithmic}
\usepackage{multirow}
\usepackage{makecell}
\usepackage{wrapfig}
\usepackage{colortbl}
\usepackage{subcaption}
\usepackage{fontawesome}

\makeatletter
\@ifundefined{c@theorem}{%
  \newtheorem{theorem}{Theorem}[section]
  \newtheorem{lemma}[theorem]{Lemma}
  \newtheorem{proposition}[theorem]{Proposition}
  \newtheorem{corollary}[theorem]{Corollary}
  \newtheorem{definition}[theorem]{Definition}
  \newtheorem{remark}[theorem]{Remark}
  \newtheorem{assumption}[theorem]{Assumption}
}{}
\makeatother

\DeclareMathOperator*{\argmax}{arg\,max}

\NewTColorBox[auto counter]{example}{ O{!htbp} m O{} }{
  enhanced,
  breakable,
  float*,
  floatplacement={#1},
  width=\textwidth,
  colback=gray!5!white,
  colframe=gray!75!black,
  title={Example~\thetcbcounter: #2},
  enlarge top by=5mm,
  enlarge bottom by=5mm,
  label={#3}
}

\newif\ifreview
\reviewtrue

\ifreview
  
\else
  
\fi

\providecolor{palGood}{RGB}{46,125,50}

\uselogo{}

\title{\centering Pruning and Distilling Mixture-of-Experts \\ into Dense Language Models}
\usepackage[useregional=false]{datetime2}
\DTMsetstyle{iso}
\paperdate{\DTMtoday}

\author[1]{Junhyuck Kim}
\author[1]{Jihun Yun}
\author[2]{Haechan Kim}
\author[1]{Gyeongman Kim}
\author[1]{Joonghyun Bae}
\author[1]{Jaewoong Cho}

\affil[1]{KRAFTON}
\affil[2]{KAIST}

\makeatletter
\renewcommand{\maketitle}{\bgroup\setlength{\parindent}{0pt}
  \begin{adjustwidth}{0pt}{24pt}
    \begin{flushleft}
      {
        {\raggedright \titlefont \@title\par}%
        \vskip11pt
        {\centering\large\@author\par}
        \vskip20pt%
      }%
    \end{flushleft}
  \end{adjustwidth}
  \egroup
  {%
    {\abscontent}
  }%
  \thispagestyle{firststyle}
}
\renewcommand{\abscontent}{%
  \begin{tcolorbox}[
    enhanced, frame hidden, colback=KraftonLightGray, arc=4pt,
    left=12pt, right=12pt, top=12pt, bottom=12pt,
    before skip=0pt, after skip=0pt
  ]
  {\absfont \theabstract}
  \vskip0.8em
  \noindent{\textcolor{black}{\faGithub~GitHub:}\enspace%
    \textcolor{cyan!60!black}{\href{https://github.com/krafton-ai/moe-to-dense}{https://github.com/krafton-ai/moe-to-dense}}}
  \end{tcolorbox}
}
\makeatother

\begin{abstract}
    Mixture-of-Experts (MoE) is now the dominant architecture for frontier language models, yet it requires all expert parameters to be loaded in memory, making it less preferable for memory-constrained deployment.
    Existing compression methods reduce the number of experts but the output remains an MoE model with the same fundamental limitation.
    We present the first systematic framework for converting a trained MoE into a standard \emph{fully dense} architecture: experts are scored, selected, and grouped, then concatenated into a dense FFN and refined by knowledge distillation from the MoE teacher.
    We evaluate 7 scoring, 5 grouping, and 2 magnitude scaling methods across a range of selected expert counts on Qwen3-30B-A3B, yielding 350 configurations.
    We find that the choice of scoring method is the most impactful, with our novel diversity-aware scoring consistently outperforming prior methods on Qwen3-30B-A3B, DeepSeek-V2-Lite, and GPT-OSS-20B.
    Under a controlled comparison at matched parameter count, MoE-to-dense outperforms dense-to-dense pruning by $+$6.3~pp in average downstream accuracy after ${\sim}$4B-token distillation at 1.6$\times$ faster training wall-clock speed.
    
\end{abstract}

\begin{document}

\maketitle

\begin{figure}[b]
    \centering
    \includegraphics[width=\linewidth]{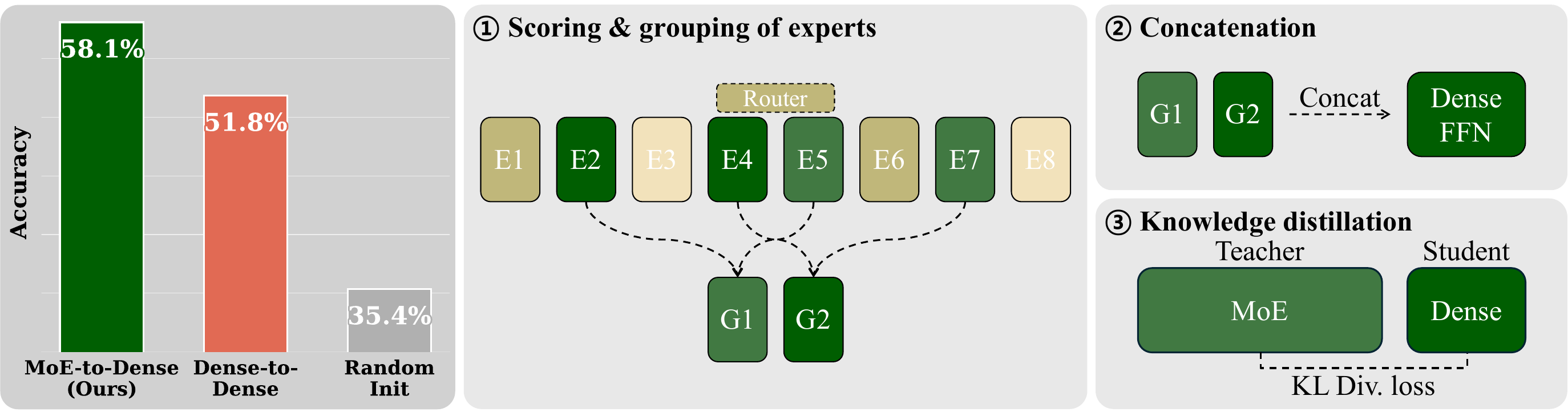}
    \caption{\textbf{Left:}~Average downstream accuracy (Section~\ref{sec:setup}) after ${\sim}$4B-token distillation, comparing three approaches to producing a 3B-parameter dense student.  \emph{MoE-to-dense} uses the MoE teacher (Qwen3-30B-A3B) to initialize and distill the student; \emph{Dense-to-dense} prunes a dense teacher of matched total parameter count (Qwen3-32B); \emph{Random Init} trains the same architecture from scratch via distillation.  \textbf{Right:}~The MoE-to-dense pipeline.  From the original MoE layer, we score and select a subset, group and merge them into $k$ groups, then concatenate into a standard dense FFN with appropriate magnitude scaling.  We evaluate 7 scoring, 5 grouping, and 2 magnitude scaling methods to identify the best configuration.}
    \label{fig:overview}
\end{figure}

\section{Introduction}
\label{sec:introduction}

Mixture-of-Experts (MoE)~\citep{shazeer2017outrageously,fedus2022switch} has become the dominant architecture for the most capable models from LLM providers~\citep{deepseekai2026deepseekv4,qwen3,llama4,zeng2026glm}
as it enables scaling the total number of parameters while remaining practical to train and serve.
However, MoE models require loading all expert parameters into memory despite activating only a fraction per token, making them less preferable for memory-constrained scenarios such as on-device deployment or single-GPU serving.
To address these deployment needs, many providers release families of small dense models alongside their flagship MoE~\citep{qwen3,gemma4,ministral3}.
These dense models are typically either trained from scratch or produced through cascaded pruning and distillation of a large dense teacher, which must itself first be trained~\citep{muralidharan2024compact,sreenivas2024llmpruning,ministral3}.
Both approaches require substantial compute and do not leverage the capability present in the MoE model, which, as we show, provides a stronger starting point for distillation than a dense teacher of matched parameter count.

Recent work has demonstrated that MoE models can be compressed into a smaller MoE by reducing the number of experts (Table~\ref{tab:framework_comparison}).
While highly effective, the resulting models still require loading all remaining experts into memory, preserving the fundamental memory inefficiency.
Although individual methods can be adapted to produce dense outputs, no prior work has systematically studied MoE-to-dense conversion.

To fill this gap, we introduce the first systematic framework for converting a trained MoE language model into a standard \emph{fully dense} architecture.
Our approach proceeds in three steps:
(1)~experts are scored by importance, and the top-scoring experts are selected, grouped, and merged within groups,
(2)~the merged experts are concatenated into a dense FFN, with down-projection matrices scaled to preserve output magnitude, and
(3)~knowledge distillation~\citep{hinton2015distilling} from the MoE teacher recovers quality lost during compression.
Figure~\ref{fig:overview} illustrates the pipeline.

\begin{table}[t]
\caption{MoE compression methods and their design choices.  Prior work produces smaller MoE models, while ours is the first to target a fully dense architecture.  See Section~\ref{sec:related_work} for a detailed comparison.}
\label{tab:framework_comparison}
\centering
\small
\setlength{\tabcolsep}{2.5pt}
\begin{tabular}{llccc}
\toprule
\textbf{Output} & \textbf{Method} & \textbf{Scoring} & \textbf{Grouping} & \textbf{Distill} \\
\midrule
\multirow{6}{*}{MoE}
& MC-SMoE~\citep{li2024mcsmoe}    & Selection freq  & Router-logit anchor-based      & \checkmark \\
& HC-SMoE~\citep{chen2025hcsmoe}    & Selection freq  & Output-sim agglomerative   & -- \\
& Sub-MoE~\citep{li2025submoe}  & Selection freq  & Output-sim K-means   & --\\
& MergeMoE~\citep{miao2025mergemoe} & Selection freq  & Weight-sim anchor-based   & --\\
& PuzzleMoE~\citep{zhao2025puzzlemoe} & Weight $\times$ activation & Entry-wise diff & --\\
& REAP~\citep{lasby2025reap}       & Gate $\times$ act.\ mag.\ & --   & --\\
\midrule
\shortstack[l]{\textbf{Dense}\\[3pt]\phantom{.}} & \shortstack[l]{\textbf{This work}\\[3pt]\phantom{.}}  & \shortstack[c]{\textbf{7 methods}\\\scriptsize(best: gate $\times$ act.\ mag.\ + diversity)}  & \shortstack[c]{\textbf{5 methods}\\[3pt]\phantom{.}}  & \shortstack[c]{\checkmark\\[3pt]\phantom{.}} \\
\bottomrule
\end{tabular}
\end{table}

To identify the best recipe, we evaluate 7 different expert scoring methods, including our novel diversity-aware scoring metric, 5 different grouping methods, 2 different down-projection scaling options, and a wide range of the number of selected experts on Qwen3-30B-A3B, resulting in 350 combinations.
Below are our key findings, which are further validated across DeepSeek-V2-Lite and GPT-OSS-20B:
\begin{enumerate}[nosep]
    \item \textbf{Expert scoring is the dominant factor.}  Best and worst scoring methods differ by 5.7~pp in average downstream accuracy, while grouping strategy contributes only ${\sim}$1~pp.  Our diversity-aware scoring achieves the best accuracy across all 35 scoring-grouping combinations (Section~\ref{sec:design_eval}).

    \item \textbf{Selecting diverse experts without merging is the best strategy.}  On all three models, the best configuration uses diversity-aware scoring with pure pruning (no weight averaging).  In contrast, frequency-based scoring selects redundant experts and benefits from merging them (Sections~\ref{sec:design_eval} and~\ref{sec:cross_model}).

    \item \textbf{MoE-to-dense outperforms dense-to-dense pruning.}  In a controlled comparison at matched total parameter count (${\sim}$30B teacher, 3B student), our best configuration outperforms dense-to-dense pruning~\citep{muralidharan2024compact} by $+$6.3~pp after ${\sim}$4B-token distillation, and is 1.6$\times$ faster in training wall-clock time (Sections~\ref{sec:design_eval} and~\ref{sec:scaling_results}).
\end{enumerate}

\noindent Our contributions are:
\begin{enumerate}[nosep]
    \item \textbf{First MoE-to-dense pruning and distillation framework.}  We present the first end-to-end study of converting a trained MoE language model into a fully dense architecture.

    \item \textbf{Diversity-aware expert selection via Gram log-determinant.}  We introduce a D-optimal selection criterion that maximizes the log-determinant of an importance-weighted Gram matrix, jointly capturing expert importance and mutual diversity.  Combined with activation-weighted scoring (DO-ACP), it achieves the best accuracy across all configurations and all three models.

    \item \textbf{Comprehensive evaluation and cross-model validation.}  We evaluate 350 configurations and identify the best recipe under 0.3B-token distillation.  The advantage persists under extended training (${\sim}$4B tokens) and generalizes to DeepSeek-V2-Lite and GPT-OSS-20B.
\end{enumerate}

\section{Related work}
\label{sec:related_work}

\paragraph{MoE expert pruning.}
A growing line of work reduces the number of active experts while retaining sparse routing.
REAP~\citep{lasby2025reap} scores experts by gate value $\times$ activation norm and removes the lowest-scoring ones. 
We adapt this principle into a factorized variant and combine it with the D-optimal selection criterion (Section~\ref{sec:scoring}).
SlimMoE~\citep{li2025slimmoe} retains all experts but progressively prunes neurons within each expert through multi-stage distillation.
MoE-Pruner~\citep{xie2024moepruner} one-shot prunes weights within experts using a scoring rule that combines weight magnitude, input activation, and router weight.
DiEP~\citep{bai2025diep} learns non-uniform pruning rates per layer via differentiable optimization.
MoE-I\textsuperscript{2}~\citep{yang2024moei2} scores experts by per-expert loss degradation, requiring $E \times L$ forward passes.
Earlier task-specific MoE pruning~\citep{chen2022task} selects experts specialized for a target downstream task.
EMO~\citep{wang2026emopretrainingmixtureexperts} modifies MoE pretraining to encourage emergent expert modularity, enabling subset pruning at deployment time.
All these methods produce \emph{smaller MoE} models, whereas our work produces a fully dense model.

\paragraph{MoE expert merging.}
Rather than removing experts, merging methods combine multiple experts into fewer groups.
REAM~\citep{jha2026ream} shows that merging experts by router-weighted averaging can outperform pruning them entirely.
MC-SMoE~\citep{li2024mcsmoe} groups experts around most frequently activated experts using router-logit similarity, then merges each group via permutation-aligned frequency-weighted averaging.
HC-SMoE~\citep{chen2025hcsmoe} showed that output-similarity clustering outperforms router-logit and weight similarity for grouping.
Sub-MoE~\citep{li2025submoe} uses output-based K-means++ clustering before joint SVD merging.
MergeMoE~\citep{miao2025mergemoe} proves the optimality of frequency-based weighting and clusters by concatenated gate/up weight similarity.
PuzzleMoE~\citep{zhao2025puzzlemoe} constructs new experts via entry-wise selection that combines weight similarity and activation-weight saliency.
NAMEx~\citep{nguyen2025namex} computes merging coefficients via Nash Bargaining optimization.
These methods all produce smaller MoE or specialized sparse models, while our pipeline uses expert selection and merging as an initialization for a dense model refined by knowledge distillation.

\paragraph{Dense model compression and distillation.}
Minitron~\citep{muralidharan2024compact,sreenivas2024llmpruning} showed that activation-based importance scoring followed by width/depth pruning and logit-level knowledge distillation produces compact LLMs competitive with models trained from scratch.
Other approaches include targeted depth removal~\citep{kim2024shortened} and joint structured pruning with continued pre-training~\citep{xia2024sheared}.
Our pipeline follows the general flow of pruning first and then recovering with knowledge distillation, but the pruning focuses on selecting a smaller set of experts and converting to a dense model, and the teacher model stays MoE during distillation.

\paragraph{Subset selection and D-optimal design.}
Selecting a diverse, high-quality subset from a large candidate pool is a classical problem in experimental design~\citep{pukelsheim2006optimal}.
The D-optimality criterion maximizes the log-determinant of an information matrix and is monotone submodular, admitting greedy $(1{-}1/e)$-approximation~\citep{nemhauser1978analysis}.
Our D-Optimal expert selection (Section~\ref{sec:scoring}) instantiates this in the expert output space: the importance-weighted Gram matrix captures both individual quality and pairwise redundancy, so maximizing its log-determinant selects experts that are jointly informative rather than individually top-ranked.
To our knowledge, this is the first application of D-Optimal subset selection to MoE compression.

\section{Method}
\label{sec:method}

We present a method for converting a Mixture-of-Experts (MoE) language model into a dense model with a total number of parameters equivalent to the active number of parameters of the teacher MoE model.  Given $E$ experts per MoE layer with $k$ routed per token, the method selects the top-$K$ experts by importance ($K \geq k$), assigns them to $k$ groups (copying directly when $K{=}k$, or merging via score-weighted averaging when $K{>}k$), and concatenates the resulting weights into a standard dense FFN.  Knowledge distillation from the MoE teacher then recovers quality lost during compression.  Throughout this section, $k$ denotes the MoE router's top-$k$ count (an architecture constant), while $K$ denotes the number of routed experts we select for conversion (a design choice).

\subsection{MoE-to-dense conversion}
\label{sec:conversion}

Consider an MoE transformer with $L$ layers, each containing a router that maps a hidden representation $\mathbf{h} \in \mathbb{R}^d$ to logits over $E$ experts.  The router selects the top-$k$ experts per token and computes a weighted combination of their outputs.  
We consider the case where each expert is a gated MLP (SwiGLU~\citep{shazeer2020glu}) with intermediate dimension $d_{\text{expert}}$, parameterized by $\mathbf{W}^{(e)}_{\text{gate}}, \mathbf{W}^{(e)}_{\text{up}} \in \mathbb{R}^{d_{\text{expert}} \times d}$ and $\mathbf{W}^{(e)}_{\text{down}} \in \mathbb{R}^{d \times d_{\text{expert}}}$.
MoE architectures typically size experts so that $d_{\text{expert}} \times k$ falls in the typical dense FFN range of ${\sim}3$--$5\times d$ (Table~\ref{tab:model_dims}), keeping the active computation per token comparable to a dense model.

\begin{table}[t]
\caption{MoE expert dimensions across models used in this work.  The active FFN width per token ($d_{\text{expert}} \times k$) falls in the typical dense range of ${\sim}3$--$5\times d$.}
\label{tab:model_dims}
\centering
\small
\begin{tabular}{lcccccc}
\toprule
\textbf{Model} & $d$ & $d_{\text{expert}}$ & $E$ & $k$ & $d_{\text{dense}} {=} d_{\text{expert}} {\times} k$ & $d_{\text{dense}}/d$ \\
\midrule
Qwen3-30B-A3B  & 2048 & 768  & 128 & 8 & 6144  & 3.0$\times$ \\
GPT-OSS-20B    & 2880 & 2880 & 32  & 4 & 11520 & 4.0$\times$ \\
DeepSeek-V2-Lite & 2048 & 1408 & 64  & 6 & 8448\rlap{$^*$} & 4.1$\times$ \\
\bottomrule
\end{tabular}

\smallskip
{\scriptsize $^*$DeepSeek-V2-Lite also has 2 shared (always-on) experts; the full dense equivalent is $2 {\times} 1408 + 6 {\times} 1408 = 11264$.}
\end{table}

We convert each MoE layer into a dense FFN by first scoring all $E$ experts by importance and retaining the top-$K$.  How experts are scored is detailed in Section~\ref{sec:scoring}; we evaluate seven methods and find that the choice of scoring is the most impactful decision.  When $K{>}k$, the selected experts are assigned to $k$ groups and merged within each group via score-weighted averaging (grouping strategies are compared in Section~\ref{sec:grouping}).  Let $s^{(e)}$ denote the importance score of expert $e$ and $\mathcal{G}_g$ the set of experts assigned to group $g$:
\begin{equation}
    \label{eq:weighted_avg}
    \mathbf{W}_{\text{proj}}^{(g)} = \sum_{e \in \mathcal{G}_g} \frac{s^{(e)}}{\sum_{e' \in \mathcal{G}_g} s^{(e')}} \, \mathbf{W}_{\text{proj}}^{(e)}, \quad \text{proj} \in \{\text{gate}, \text{up}, \text{down}\}.
\end{equation}
When $K{=}k$, each group contains exactly one expert and no merging is needed.

The $k$ resulting weight matrices are then concatenated into a single dense FFN with intermediate dimension $d_{\text{dense}} = k \times d_{\text{expert}}$:
\begin{align}
    \label{eq:concat}
    \mathbf{W}_{\text{gate}} &= \bigl[\mathbf{W}_{\text{gate}}^{(1)};\; \ldots;\; \mathbf{W}_{\text{gate}}^{(k)}\bigr] \in \mathbb{R}^{d_{\text{dense}} \times d}, \quad
    \mathbf{W}_{\text{up}} = \bigl[\mathbf{W}_{\text{up}}^{(1)};\; \ldots;\; \mathbf{W}_{\text{up}}^{(k)}\bigr] \in \mathbb{R}^{d_{\text{dense}} \times d}, \notag \\
    \mathbf{W}_{\text{down}} &= \bigl[\tilde{\mathbf{W}}_{\text{down}}^{(1)},\; \ldots,\; \tilde{\mathbf{W}}_{\text{down}}^{(k)}\bigr] \in \mathbb{R}^{d \times d_{\text{dense}}},
\end{align}
where $[\,\cdot\,;\,\cdot\,]$ denotes row-concatenation, $[\,\cdot\,,\,\cdot\,]$ column-concatenation, each $\mathbf{W}_{\text{proj}}^{(g)}$ is the weight matrix for group $g$, and $\tilde{\mathbf{W}}_{\text{down}}^{(g)} = \alpha_g \, \mathbf{W}_{\text{down}}^{(g)}$ is the down-projection scaled to approximate the average routing behavior (Section~\ref{sec:scaling}).  Block concatenation preserves the intermediate activations of the constructed group representatives exactly (Appendix~\ref{app:concat_equivalence}); when $K{=}k$, these representatives are copied experts, while for $K{>}k$ they are parameter-averaged proxies.  The final aggregation still differs from the original MoE because static $\alpha_g$ cannot simulate token-dependent router weights.
Attention layers, embeddings, and layer norms are copied unchanged from teacher to student.

The parameter $K$ controls a prune/merge tradeoff: $K{=}k$ copies experts directly (pure pruning), $K{>}k$ averages $K/k$ experts per group, and $K{=}E$ retains all experts (pure merging).  The full algorithmic pseudocode is provided in Algorithm~\ref{alg:merging} in Appendix~\ref{app:algorithm}.

\subsection{Scoring}
\label{sec:scoring}

Expert scores determine which experts to retain (top-$K$ selection) and how to weight them during merging.  We evaluate seven methods spanning three families, all computed from statistics collected in the calibration forward pass.  For each token $t$ and layer $\ell$, the router produces a distribution $p_\ell^{(e)}(t)$ via softmax over all $E$ experts, then selects $\mathcal{S}_\ell(t) = \operatorname{top\text{-}k}\{p_\ell^{(e)}(t)\}_{e=1}^{E}$.

\paragraph{Frequency-based scoring (SF, PP, PS).}
Prior work~\citep{li2024mcsmoe,chen2025hcsmoe,miao2025mergemoe} universally uses \textbf{selection frequency (SF)}, the fraction of tokens for which expert $e$ is among the top-$k$, as the importance metric.  We additionally evaluate \textbf{pre-selection probability (PP)}, the average softmax probability over all tokens regardless of selection, and \textbf{post-selection probability (PS)}, the average over all tokens, taking the softmax probability when $e$ was selected and zero otherwise.  All three favor generalist experts that appear frequently.  Formal definitions are in Appendix~\ref{app:scoring_defs}.

\begin{figure}[t]
    \centering
    \begin{minipage}[t]{0.48\linewidth}
        \centering
        \includegraphics[width=\linewidth]{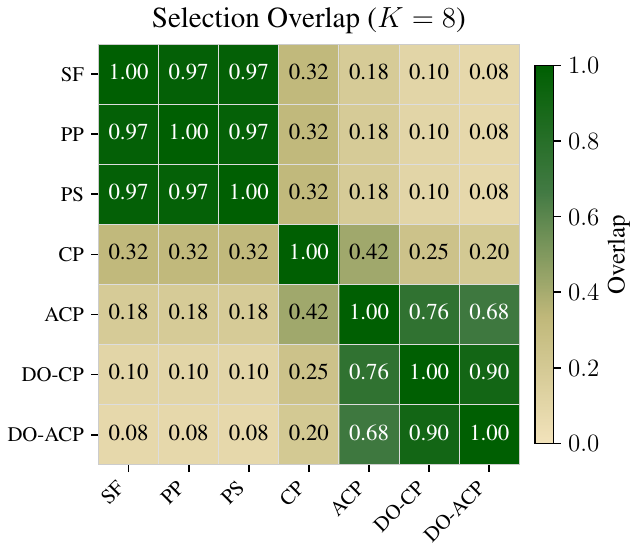}
        \caption{Selection overlap between the top-$K{=}8$ experts chosen by each scoring method, averaged across all layers of Qwen3-30B-A3B.  Frequency-based methods (SF, PP, PS) select nearly identical experts, while DO-ACP shares ${\leq}$0.08 overlap with them.}
        \label{fig:scoring_overlap}
    \end{minipage}\hfill
    \begin{minipage}[t]{0.48\linewidth}
        \centering
        \includegraphics[width=\linewidth]{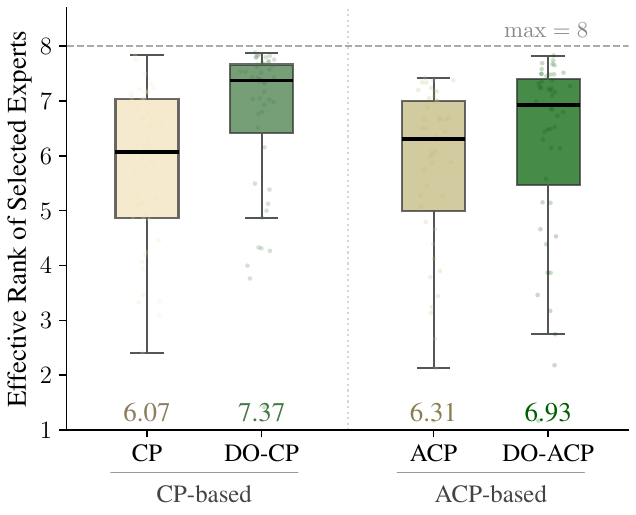}
        \caption{Effective rank~\citep{roy2007effective} of the $K{=}8$ selected expert kernel submatrix (max 8) for Qwen3-30B-A3B.  D-Optimal selection increases effective rank from 6.07 to 7.37 (CP) and 6.31 to 6.93 (ACP).}
        \label{fig:effective_rank}
    \end{minipage}
\end{figure}

\paragraph{Conditional probability (CP).}
  The frequency-based scores above conflate how often an expert is chosen with how confident the router is when it chooses.  CP isolates the latter as the average softmax probability over only the tokens where $e$ was selected:
\begin{equation}
    \label{eq:conditional_prob}
    s_\ell^{(e)} = \frac{\sum_{t : e \in \mathcal{S}_\ell(t)} p_\ell^{(e)}(t)}{\bigl|\{t : e \in \mathcal{S}_\ell(t)\}\bigr|}.
\end{equation}
CP is not diluted by frequency, so specialist experts that are rarely selected but confidently routed score highly. 
As shown in Figure~\ref{fig:scoring_overlap}, CP selects a very different top-$8$ set of experts from the frequency-based methods.

\paragraph{Activation-weighted conditional probability (ACP).}
Routing probabilities ignore the magnitude of expert outputs.  Following REAP~\citep{lasby2025reap}, which scores experts by the product of routing weight and output norm, we define a factorized variant:
$s_\ell^{(e)} = \bar{p}_\ell^{(e)} \cdot \sqrt{\mathbb{E}_{t}[\|f_e(t)\|^2]}$,
where $\bar{p}_\ell^{(e)}$ is the conditional probability from Eq.~\ref{eq:conditional_prob} and $f_e(t)$ is the output of expert $e$ on token $t$.  This factorization decouples routing confidence from output magnitude, producing a per-expert scalar that composes directly with the D-optimal selection criterion we introduce next.

\paragraph{D-Optimal selection (DO).}
The methods above rank experts independently, ignoring redundancy among high-scoring experts.  We introduce a D-optimal criterion that jointly maximizes importance and diversity.  Given base importance $I_e = s^{(e)}$ (instantiated as CP or ACP) and the expert output Gram matrix $\mathbf{G}_{ij} = \mathbb{E}_t[\langle f_i(t), f_j(t) \rangle]$, we form the importance-weighted kernel $\mathcal{K}_{ij} = \sqrt{I_i I_j} \cdot \mathbf{G}_{ij}$ and select:
\begin{equation}
    \label{eq:logdet}
    S^* = \argmax_{|S|=K} \log\det\!\left(\boldsymbol{\mathcal{K}}_S + \lambda_{\text{reg}}\, \mathbf{I}\right),
\end{equation}
where $\lambda_{\text{reg}} = \frac{1}{KE}\sum_{e=1}^{E}\mathcal{K}_{ee}$.  Since exhaustive search over $\binom{E}{K}$ subsets is intractable, we solve this greedily: starting from $S = \emptyset$, we iteratively add the expert that maximally increases $\log\det(\boldsymbol{\mathcal{K}}_S + \lambda_{\text{reg}}\mathbf{I})$, computed efficiently via Schur complement evaluations in $O(K^3 E)$ total time (Algorithm~\ref{alg:do_greedy} in Appendix~\ref{app:algorithm}).  Applying DO to the two best base scores yields \textbf{DO-CP} and \textbf{DO-ACP}; DO-ACP is the overall best method (Section~\ref{sec:design_eval}).  We measure the diversity of a selected expert set by its effective rank~\citep{roy2007effective}: the number of significant independent directions in the expert output space, ranging from 1 (all experts are near-duplicates) to $K$ (maximally diverse).  Figure~\ref{fig:effective_rank} confirms that D-Optimal selection substantially increases the effective rank compared to independent scoring, avoiding redundant experts as predicted by Theorem~\ref{thm:redundancy}.  Excluded scoring methods are discussed in Appendix~\ref{app:excluded_methods}.

\paragraph{Theoretical properties.}
\label{sec:theory}
The log-determinant objective in Eq.~\ref{eq:logdet} has several useful properties that justify its use over independent scoring.  We state three results (proofs in Appendix~\ref{app:theory_proofs}).

\begin{theorem}[Independent scoring can fail under redundancy]
\label{thm:redundancy}
There exists a family of MoE layers, calibration distributions, and a positive regularization choice for which selecting the top-$K$ experts by ACP incurs a constant reconstruction error, while a size-$K$ subset maximizing the log-determinant objective achieves zero error.
\end{theorem}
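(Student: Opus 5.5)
The proof is by explicit construction. First we fix what ``reconstruction error'' means. The natural choice is the least-squares error of approximating the full layer output $F(t)=\sum_e w_e f_e(t)$ (or, more simply, each discarded expert's output) by linear combinations of the selected experts' outputs:
\[
\mathrm{err}(S)=\min_{\beta}\mathbb{E}_t\bigl\|F(t)-\textstyle\sum_{e\in S}\beta_e f_e(t)\bigr\|^2 .
\]
This equals a Schur-complement quantity of the Gram matrix $\mathbf{G}$. It is therefore zero exactly when the span of $\{f_e\}_{e\in S}$ in $L^2(\text{calibration})$ contains the target.

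The construction takes $K=2$, $E=3$, output space $\mathbb{R}^2$, and a calibration distribution with two equally likely token types $t_1,t_2$. Expert outputs are arranged so that:
\begin{itemize}
\item experts $1$ and $2$ are exact duplicates, with $f_1=f_2=a\,\mathbf{u}$ on both tokens;
\item expert $3$ has $f_3=b\,\mathbf{v}$, with $\mathbf{v}\perp\mathbf{u}$ and $b$ slightly smaller than $a$;
\item the router assigns equal conditional probability $\bar p$ to all three.
\end{itemize}
The experts can be realized by a single row of SwiGLU weights. Then ACP scores $\bar p\,a,\bar p\,a,\bar p\,b$, so top-$2$ ACP picks $\{1,2\}$. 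This subset spans only $\mathbf{u}$, so the target $F=w_1f_1+w_2f_2+w_3f_3$ has residual $w_3^2b^2>0$, independent of any scale we let grow. To present this as a family, we vary $E$ and pad with further duplicates or low-importance experts. The error stays a fixed constant.

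Next we verify the log-det side. The kernel is
\[
\mathcal{K}=\bar p\,\begin{pmatrix} a^3 & a^3 & 0\\ a^3 & a^3 & 0\\ 0&0& b^3\end{pmatrix}
\]
(since $\sqrt{I_iI_j}\,\mathbf{G}_{ij}$ carries an extra factor $a$ or $b$). The regularizer $\lambda=\lambda_{\text{reg}}>0$ is the one from Eq.~\ref{eq:logdet}. For $S=\{1,2\}$ we get $\det=\lambda(2\bar p a^3+\lambda)$. For $S=\{1,3\}$ we get $\det=(\bar p a^3+\lambda)(\bar p b^3+\lambda)$. The difference is $\bar p^2a^3b^3+\lambda\bar p(b^3-a^3)$. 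This is positive whenever $\lambda<\bar p a^3b^3/(a^3-b^3)$, which holds for $b$ close to $a$ given the stated $\lambda_{\text{reg}}=\mathrm{tr}(\mathcal{K})/(KE)$. Hence the log-det maximizer is $\{1,3\}$ (or $\{2,3\}$), and these span both $\mathbf{u}$ and $\mathbf{v}$, giving zero error.

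The main obstacle is the positivity of that determinant gap. With the paper's specific $\lambda_{\text{reg}}$, we must check that it is genuinely small enough. The statement only demands ``a positive regularization choice,'' so if the paper's formula does not fit, we simply choose $\lambda$ small. The second point requiring care is matching the paper's precise notion of reconstruction error. Whichever notion is used, it suffices that it vanishes when the span of the selected outputs contains all expert outputs, and is bounded below by the orthogonal residual otherwise.
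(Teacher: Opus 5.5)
\paragraph{Where the argument breaks.} Your construction is structurally the $K=2$ member of the paper's family. The paper takes $E=2K-1$ scalar indicator experts: $K$ of them identical and $K-1$ mutually orthogonal specialists. At $K=2$ this gives two identical experts plus one orthogonal to them, with orthogonality coming from disjoint token supports rather than orthogonal output directions. Your kernel, the determinant gap $\bar p^2a^3b^3+\lambda\bar p(b^3-a^3)$, and the identification of $\{1,3\}$ and $\{2,3\}$ as log-det maximizers are all correct.

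What fails is your handling of the reconstruction target. With a static target $\sum_e w_e f_e$, or with per-discarded-expert reconstruction, your argument goes through, but neither is the MoE layer's output. The paper measures error against $F_{\mathrm{MoE}}(t)=\sum_{e\in\mathcal{S}(t)}p^{(e)}(t)f_e(t)$, whose weights vary with the token. For that target, your closing claim is false: it is not enough that the selected span contains all expert outputs, because a token-dependent combination of your token-independent outputs need not be a static combination of them.

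Specifying the router only through ``equal conditional probability $\bar p$'' does not rule this out. Take $k=2$, $p(t_1)=(0.4,0.11,0.49)$ and $p(t_2)=(0.29,0.4,0.31)$. All three CP scores equal $0.4$, yet the $\mathbf{v}$-component of the teacher output is $0.49b$ on $t_1$ and $0.31b$ on $t_2$. So \emph{every} subset, including $\{1,3\}$, has positive error. Also, with two token types and $k=1$, some expert is never selected and has no CP, so you need $k\ge 2$.

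\paragraph{The fix.} Pin the router down so that the teacher output is token-independent, e.g.\ $k=2$, $p(t_1)=(0.4,0.2,0.4)$, $p(t_2)=(0.2,0.4,0.4)$. Then every CP equals $0.4$, and the teacher output is $0.4(a\mathbf{u}+b\mathbf{v})$ on both tokens (or $\tfrac12(a\mathbf{u}+b\mathbf{v})$ after renormalization). The subset $\{1,2\}$ leaves residual $(0.4b)^2$, and $\{1,3\}$ is exact. The paper avoids the issue with a deterministic top-1 router, under which the teacher output is identically $1$ and equals one duplicate plus the sum of the specialists.

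\paragraph{Comparison with the paper.} The paper keeps $K$ general and parameterizes each size-$K$ subset by the number $t$ of duplicates it contains. With the bespoke choice $\lambda_{\mathrm{reg}}=\beta$ (the specialist diagonal), it shows $\det(\boldsymbol{\mathcal{K}}_S+\lambda_{\mathrm{reg}}\mathbf{I})=\beta^{K-1}2^{K-t}(\beta+t\alpha)$ is strictly decreasing in $t$. This yields the failure for every $K\ge 2$ with error $(K-1)/(2K-1)\ge 1/3$, which is what ``constant'' refers to.

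Your padding remark does not replace this. Padded experts that the router actually uses change both the teacher output and $\lambda_{\mathrm{reg}}$, and you have checked neither.

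In exchange, you verify the default rule $\lambda_{\mathrm{reg}}=\mathrm{tr}(\boldsymbol{\mathcal{K}})/(KE)$. For your instance it works exactly when $x=b^3/a^3$ satisfies $x^2+7x-2>0$, i.e.\ $x>(\sqrt{57}-7)/2\approx 0.27$. The paper's family cannot offer this in general: at $K=4$ the default rule gives $\lambda_{\mathrm{reg}}=\tfrac{35}{28}\beta$, and the log-det maximizer then keeps two duplicates and misses a specialist.
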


If several high-scoring experts are near-duplicates, selecting all of them wastes dense capacity.  Appendix~\ref{app:proof_redundancy} constructs such a failure mode explicitly for ACP and shows that the log-determinant objective can avoid it.

\begin{proposition}[Greedy D-Optimal selection is near-optimal]
\label{prop:greedy}
For any positive semidefinite kernel $\boldsymbol{\mathcal{K}}$ and any $\lambda_{\mathrm{reg}}>0$, the normalized objective $\widetilde{F}(S) = \log\det(\mathbf{I} + \lambda_{\mathrm{reg}}^{-1}\boldsymbol{\mathcal{K}}_S)$ is monotone submodular.  The greedy algorithm returns $S_{\mathrm{greedy}}$ with $\widetilde{F}(S_{\mathrm{greedy}}) \geq (1 - 1/e) \max_{|S|=K} \widetilde{F}(S)$.
\end{proposition}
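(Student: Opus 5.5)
We write $\mathbf{M} = \lambda_{\mathrm{reg}}^{-1}\boldsymbol{\mathcal{K}}$, which is positive semidefinite, so that $\widetilde{F}(S) = \log\det(\mathbf{I} + \mathbf{M}_S)$ with $\widetilde{F}(\emptyset)=0$. The plan has three parts. First we compute the marginal gains via a Schur complement identity. These gains give monotonicity directly, and a comparison argument gives diminishing returns. The approximation bound then follows from the classical result of \citet{nemhauser1978analysis}.

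\textbf{Marginal gains.} For $S$ and $e\notin S$, apply the block determinant formula to $\mathbf{I}+\mathbf{M}_{S\cup\{e\}}$:
\begin{equation*}
\widetilde{F}(S\cup\{e\}) - \widetilde{F}(S) = \log\bigl(1 + M_{ee} - \mathbf{m}_{S,e}^\top(\mathbf{I}+\mathbf{M}_S)^{-1}\mathbf{m}_{S,e}\bigr),
\end{equation*}
where $\mathbf{m}_{S,e}$ is the column of $\mathbf{M}$ indexed by $S$ and $e$. The quantity inside the logarithm is the Schur complement of $\mathbf{I}+\mathbf{M}_S$ in the positive definite matrix $\mathbf{I}+\mathbf{M}_{S\cup\{e\}}$, so it is positive. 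Write it as $1+\delta_S(e)$. We show $\delta_S(e)\ge 0$ by factoring $\mathbf{M}=\mathbf{V}^\top\mathbf{V}$ with columns $\mathbf{v}_i$. The push-through identity then gives
\begin{equation*}
\delta_S(e) = \mathbf{v}_e^\top\bigl(\mathbf{I}+\mathbf{V}_S\mathbf{V}_S^\top\bigr)^{-1}\mathbf{v}_e \ge 0 .
\end{equation*}
Hence every marginal gain is nonnegative, and $\widetilde{F}$ is monotone.

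\textbf{Submodularity.} For $S\subseteq T$ and $e\notin T$, we have $\mathbf{V}_S\mathbf{V}_S^\top \preceq \mathbf{V}_T\mathbf{V}_T^\top$. This implies
\begin{equation*}
\bigl(\mathbf{I}+\mathbf{V}_T\mathbf{V}_T^\top\bigr)^{-1} \preceq \bigl(\mathbf{I}+\mathbf{V}_S\mathbf{V}_S^\top\bigr)^{-1},
\end{equation*}
because matrix inversion is operator-antitone on positive definite matrices. Therefore $\delta_T(e)\le\delta_S(e)$, and since $\log(1+\cdot)$ is increasing, the marginal gain at $T$ is at most the marginal gain at $S$. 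This is the diminishing-returns property.

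\textbf{Approximation guarantee.} $\widetilde{F}$ is monotone, submodular and normalized, and we maximize it under the cardinality constraint $|S|\le K$. The theorem of \citet{nemhauser1978analysis} therefore gives $\widetilde{F}(S_{\mathrm{greedy}})\ge(1-1/e)\max_{|S|\le K}\widetilde{F}(S)$. By monotonicity, the maximum over $|S|\le K$ equals the maximum over $|S|=K$, which is the stated bound. If full rigor is wanted, we reproduce the standard argument. Let $S^*$ be optimal and $S_i$ the greedy set after $i$ steps. Monotonicity and submodularity give $\mathrm{OPT}-\widetilde{F}(S_i)\le K\bigl(\widetilde{F}(S_{i+1})-\widetilde{F}(S_i)\bigr)$, so $\mathrm{OPT}-\widetilde{F}(S_K)\le(1-1/K)^K\,\mathrm{OPT}\le \mathrm{OPT}/e$.

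\textbf{Link to the algorithm.} The greedy procedure of Eq.~\ref{eq:logdet} maximizes $\log\det(\boldsymbol{\mathcal{K}}_S+\lambda_{\mathrm{reg}}\mathbf{I})$. This differs from $\widetilde{F}(S)$ only by the additive term $|S|\log\lambda_{\mathrm{reg}}$, which is the same for every candidate at a given step. So its choices coincide with greedy maximization of $\widetilde{F}$.

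\textbf{Main obstacle.} The only step that is not routine is the submodularity comparison. It rests on operator antitonicity of the inverse together with the factorization trick, which avoids the unequal sizes of $\mathbf{M}_S$ and $\mathbf{M}_T$. One alternative is the entropy interpretation: $\widetilde{F}(S)$ is, up to constants, the differential entropy of a Gaussian vector $\mathbf{V}_S^\top\mathbf{z}+\boldsymbol{\varepsilon}$, and "conditioning reduces entropy" then gives submodularity. Another is to appeal to the known submodularity of $\log\det$ of a principal submatrix of a positive definite matrix, here $\mathbf{I}+\mathbf{M}$.
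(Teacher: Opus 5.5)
Your proof is correct. It shares the paper's skeleton: both write the marginal gain as the log of a normalized Schur complement, and your $1+\delta_S(e)$ is exactly the paper's $\sigma_e(S)/\lambda_{\mathrm{reg}}$. Both then finish with the Nemhauser--Wolsey--Fisher theorem.

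The difference is in how the two properties of that Schur complement are proved.

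\emph{Nonnegativity.} The paper notes that the bordered matrix with blocks $A_S$, $\boldsymbol{\mathcal{K}}_{Se}$, $\boldsymbol{\mathcal{K}}_{eS}$, $\mathcal{K}_{ee}$ is a sum of PSD matrices, and takes its Schur complement with respect to $A_S$. You instead factor $\mathbf{M}=\mathbf{V}^\top\mathbf{V}$ and apply the push-through (Woodbury) identity.

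\emph{Diminishing returns.} The paper uses the variational formula $\mathbf{b}^\top A_T^{-1}\mathbf{b}=\max_{\mathbf{z}}\{2\mathbf{z}^\top\mathbf{b}-\mathbf{z}^\top A_T\mathbf{z}\}$ and restricts $\mathbf{z}$ to be supported on $S$. Because $A_S$ is a principal subblock of $A_T$, this compares the differently sized matrices directly, with no factorization. You lift everything to a common feature space, where $\mathbf{I}+\sum_{i\in S}\mathbf{v}_i\mathbf{v}_i^\top$ is L\"{o}wner-monotone in $S$, and then use antitonicity of the inverse.

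The paper's route is shorter and works directly with the entries of $\boldsymbol{\mathcal{K}}$. Yours yields the interpretable formula $\delta_S(e)=\mathbf{v}_e^\top(\mathbf{I}+\mathbf{V}_S\mathbf{V}_S^\top)^{-1}\mathbf{v}_e$. This is the novelty of expert $e$ relative to the information matrix of the experts already chosen, which makes the D-optimal-design reading explicit.

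You also spell out two points the paper only touches on or skips. First, greedy on $\log\det(\boldsymbol{\mathcal{K}}_S+\lambda_{\mathrm{reg}}\mathbf{I})$, as in Algorithm~\ref{alg:do_greedy}, makes the same choices as greedy on $\widetilde{F}$, since all candidate sets at a given step have equal size. Second, monotonicity lets the maximum over $|S|\le K$ in the classical theorem be replaced by the maximum over $|S|=K$.
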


D-Optimal selection is therefore not a heuristic but the canonical cardinality-constrained D-optimal design in expert-output space, with a standard greedy guarantee (Appendix~\ref{app:proof_submodularity}).  The result holds for any base importance $I_e$, and we instantiate it with both CP and ACP (yielding DO-CP and DO-ACP).

\begin{theorem}[When does D-Optimal selection help the most?]
\label{thm:incoherence}
Let $F(S) := \log\det(\boldsymbol{\mathcal{K}}_S + \lambda_{\text{reg}}\mathbf{I})$ denote the D-optimal objective from Eq.~\ref{eq:logdet}, and let
\begin{align*}
    G(S) := \sum_{e \in S} \log(\mathcal{K}_{ee} + \lambda_{\mathrm{reg}})
\end{align*}
and let
\begin{align*}
    S_{\mathrm{diag}} \in \argmax_{|S|=K} G(S).
\end{align*}
Let $\mu = \max_{i \neq j} |\mathcal{K}_{ij}| / \sqrt{\mathcal{K}_{ii}\mathcal{K}_{jj}}$ be the mutual coherence of the expert kernel.  If $(K-1)\mu < 1$, then
\begin{align*}
    F(S_{\mathrm{diag}}) \ge \max_{|S|=K} F(S) - K \log \left(\frac{1 + (K-1)\mu}{1-(K-1)\mu}\right).
\end{align*}
\end{theorem}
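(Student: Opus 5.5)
Our approach is to sandwich $F$ between $G$ and a multiplicative perturbation of it, using a normalized form of the kernel. For any $S$ with $|S|=K$, let $\mathbf{D}_S = \mathrm{diag}(\mathcal{K}_{ee}+\lambda_{\mathrm{reg}})_{e\in S}$. Then
\begin{align*}
\boldsymbol{\mathcal{K}}_S+\lambda_{\mathrm{reg}}\mathbf{I} = \mathbf{D}_S^{1/2}\,\mathbf{C}_S\,\mathbf{D}_S^{1/2},
\end{align*}
so $F(S) = G(S) + \log\det \mathbf{C}_S$. The normalized matrix $\mathbf{C}_S$ has unit diagonal and off-diagonal entries
\begin{align*}
\frac{\mathcal{K}_{ij}}{\sqrt{(\mathcal{K}_{ii}+\lambda_{\mathrm{reg}})(\mathcal{K}_{jj}+\lambda_{\mathrm{reg}})}}.
\end{align*}
Because $\lambda_{\mathrm{reg}}>0$, each off-diagonal entry is at most $|\mathcal{K}_{ij}|/\sqrt{\mathcal{K}_{ii}\mathcal{K}_{jj}}\le\mu$ in absolute value. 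If some $\mathcal{K}_{ii}=0$, then positive semidefiniteness forces $\mathcal{K}_{ij}=0$, and the bound holds trivially.

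The key step is a two-sided bound on $\log\det\mathbf{C}_S$. By Gershgorin's theorem, every eigenvalue of $\mathbf{C}_S$ lies in $[1-(K-1)\mu,\,1+(K-1)\mu]$, and this interval is positive by the assumption $(K-1)\mu<1$. Therefore
\begin{align*}
K\log(1-(K-1)\mu)\le \log\det\mathbf{C}_S\le K\log(1+(K-1)\mu).
\end{align*}
Hadamard's inequality gives the sharper upper bound $\log\det\mathbf{C}_S\le 0$, since $\mathbf{C}_S$ is positive definite with unit diagonal. The Gershgorin bound suffices for the stated constant, however.

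Now take $S^*\in\argmax_{|S|=K}F(S)$ and chain the estimates:
\begin{align*}
F(S_{\mathrm{diag}}) &\ge G(S_{\mathrm{diag}}) + K\log(1-(K-1)\mu) \ge G(S^*) + K\log(1-(K-1)\mu)\\
&\ge F(S^*) - K\log(1+(K-1)\mu) + K\log(1-(K-1)\mu).
\end{align*}
The first inequality is the Gershgorin lower bound applied to $S_{\mathrm{diag}}$. The second holds because $S_{\mathrm{diag}}$ maximizes $G$. The third is the upper bound applied to $S^*$. The result is exactly the claimed bound $F(S_{\mathrm{diag}})\ge \max_{|S|=K}F(S) - K\log\frac{1+(K-1)\mu}{1-(K-1)\mu}$. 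Using Hadamard in place of the Gershgorin upper bound would even drop the $1+(K-1)\mu$ factor.

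The only delicate point is the passage from the coherence of $\boldsymbol{\mathcal{K}}$ to the coherence of the regularized normalization. We must check that adding $\lambda_{\mathrm{reg}}$ to the diagonal can only shrink the normalized off-diagonal entries, and handle zero-diagonal experts. Both follow from monotonicity of the square root and the inequality $|\mathcal{K}_{ij}|\le\sqrt{\mathcal{K}_{ii}\mathcal{K}_{jj}}$ for PSD kernels. Beyond this, the argument uses only Gershgorin's theorem and the optimality of $S_{\mathrm{diag}}$ for $G$.
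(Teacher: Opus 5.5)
Your proof is correct and follows the paper's argument: your $\mathbf{C}_S$ is exactly the paper's $\mathbf{I} + R_S$ with $R_S = B_S^{-1/2}(\boldsymbol{\mathcal{K}}_S - \mathrm{diag}(\mathcal{K}_{ee}))B_S^{-1/2}$, bounded by the same Gershgorin sandwich and chained through $G(S_{\mathrm{diag}}) \ge G(S^\star)$ in the same way. The paper does not make your two additions, and both are valid: handling zero-diagonal experts via positive semidefiniteness, and the Hadamard bound $\log\det\mathbf{C}_S \le 0$, which tightens the loss term to $-K\log(1-(K-1)\mu)$.
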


When experts are nearly incoherent ($\mu \approx 0$), the diagonal proxy is a good approximation to the full log-determinant objective, thus the off-diagonal diversity interactions contribute less (proof in Appendix~\ref{app:proof_incoherence}).  This supports the intuition that diversity corrections matter most when redundant experts create large off-diagonal interactions.  This is consistent with the cross-model trend observed in Section~\ref{sec:cross_model}, where Qwen3 (128 experts) shows a large D-Optimal gain while GPT-OSS (32 experts) shows a compressed scoring gap.  Additional theoretical results (finite-sample calibration, grouping recovery, merging optimality) are in Appendix~\ref{app:theory_extensions}.

\subsection{Grouping}
\label{sec:grouping}

After selecting the top-$K$ experts, we assign them to $k$ groups.  We evaluate five strategies: \textbf{round-robin (RR)}, which assigns score-sorted experts to groups cyclically, producing balanced groups by construction; \textbf{weight clustering (WC)} and \textbf{router clustering (RC)}, which cluster experts by weight or router-vector similarity; \textbf{anchor-based (AB)}~\citep{li2024mcsmoe}, which uses the $k$ highest-scoring experts as anchors and assigns the rest by router-vector similarity; and \textbf{output clustering (OC)}~\citep{chen2025hcsmoe}, which clusters on hidden-state outputs.  Full definitions are in Appendix~\ref{app:grouping_defs}.

\subsection{Down-projection scaling}
\label{sec:scaling}

In the original MoE, the router dynamically weights each expert's output per token.  The concatenated dense FFN has no router, so we apply static scaling factors $\alpha_g$ to each group's down-projection to approximate the average routing behavior. We consider \textbf{uniform} scaling ($\alpha_g = 1/k$, splitting routing mass equally across the $k$ groups) and \textbf{proportional} scaling ($\alpha_g \propto \sum_{e \in \mathcal{G}_g} s^{(e)}$, weighting each group by its share of selected-expert importance).  Full equations are in Appendix~\ref{app:scaling_defs}.

\subsection{Knowledge distillation}
\label{sec:distillation}

The concatenated dense model provides an initialization for knowledge distillation from the MoE teacher.  We minimize the forward KL divergence between the teacher and student output distributions:
\begin{equation}
    \label{eq:kl_loss}
    \mathcal{L}_{\text{KD}} = \frac{1}{|\mathcal{T}|} \sum_{t \in \mathcal{T}} D_{\text{KL}}\!\left(\, p_{\text{teacher}}(\cdot \mid x_t) \;\|\; p_{\text{student}}(\cdot \mid x_t) \,\right),
\end{equation}
where $\mathcal{T}$ is the set of tokens in a batch and $x_t$ is the context preceding token $t$.  Training hyperparameters are given in Section~\ref{sec:setup}.

We also evaluate \textbf{reverse KL} ($D_{\text{KL}}(p_{\text{student}} \| p_{\text{teacher}})$) and an \textbf{intermediate loss} combining logit-level KL with hidden-state MSE across all layers~\citep{muralidharan2024compact}.
 
Furthermore, motivated by~\citet{kim2025expert}, who show that non-activated experts contain valuable knowledge for distillation, we also test overriding the teacher's routing to activate $k' > k$ experts during distillation:
\begin{equation}
    \label{eq:allexpert_teacher}
    F_{\text{teacher}}(t) = \sum_{e \in \mathcal{S}_{k'}(t)} p_\ell^{(e)}(t)\, f_e(t),
\end{equation}
where $\mathcal{S}_{k'}(t)$ is the set of $k'$ highest-probability experts for token $t$.  This exposes the student to knowledge from experts the router would not normally select, at the cost of ${\sim}k'/k$ times more expert activations per MoE layer.
Results on these distillation methods are in Section~\ref{sec:design_eval}.

\section{Experiments}
\label{sec:experiments}

We validate our pipeline on Qwen3-30B-A3B, converting the 30B-parameter MoE into a 3.3B dense model.  Section~\ref{sec:setup} describes the setup and experimental workflow.  Section~\ref{sec:design_eval} evaluates every possible scoring$\times$grouping combination after knowledge distillation.  Section~\ref{sec:distill_exploration} explores distillation method choices.  Section~\ref{sec:scaling_results} reports extended training, and Section~\ref{sec:qualitative} provides a qualitative analysis of these checkpoints.  Section~\ref{sec:cross_model} validates findings on DeepSeek-V2-Lite and GPT-OSS-20B.

\subsection{Experimental setup}
\label{sec:setup}

\paragraph{Workflow.}
Our evaluation proceeds in three stages.  First, we sweep all combinations of 7 scoring methods, 5 grouping strategies, 2 down-projection (DP) scalings, and $K \in \{8, 16, 32, 64, 128\}$, yielding 350 configurations.  Each is evaluated by WikiText-2 perplexity before distillation (we refer to it as pre-distill PPL), measuring initialization quality.  Second, for each of the 35 scoring$\times$grouping pairs, we select the best $(K, \text{DP scaling})$ by pre-distill PPL and distill for 0.3B tokens.  Third, all distilled models are evaluated on five downstream benchmarks.

\paragraph{Models.}
Our primary teacher is \textbf{Qwen3-30B-A3B}~\citep{qwen3}, a 30B-parameter MoE with $E{=}128$ experts and $k{=}8$ active per token.  We use the post-trained variant. 
(We find that the main findings are consistent when using the base model as the teacher model, see Appendix~\ref{sec:base_vs_instruct} for a detailed comparison.)
The dense student has 3.3B parameters with $d_{\text{dense}} = k \times d_{\text{expert}} = 6{,}144$.  For cross-model validation (Section~\ref{sec:cross_model}), we test two additional architectures: \textbf{DeepSeek-V2-Lite}~\citep{dai2024deepseekmoe,deepseekai2024deepseekv2}, a 16B MoE with $E{=}64$ routed experts plus 2 shared experts and $k{=}6$ (2.4B active), and \textbf{GPT-OSS-20B}~\citep{openai2025gptoss}, a 21B MoE with $E{=}32$ experts and $k{=}4$ (3.6B active).  Model dimensions are in Table~\ref{tab:model_dims}.

\paragraph{Training.}
Expert importance is calibrated on 512 WikiText-103~\citep{merity2017wikitext} sequences of 2048 tokens.  Unless otherwise noted, distillation uses forward KL (Eq.~\ref{eq:kl_loss}) on FineWeb-Edu~\citep{fineweb} with global batch size 384, sequence length 4096, learning rate $10^{-4}$ with cosine decay to $10^{-5}$, and AdamW ($\beta_1{=}0.9$, $\beta_2{=}0.95$).  Full hyperparameters and infrastructure details are in Appendix~\ref{app:hyperparameters}.

\paragraph{Evaluation.}
Following~\citet{muralidharan2024compact,sreenivas2024llmpruning}, downstream accuracy is measured on Winogrande (5-shot), HellaSwag (10-shot), ARC-Easy (25-shot), ARC-Challenge (25-shot), and MMLU (5-shot).  We report the unweighted average across all five tasks.

\subsection{Scoring and grouping evaluation}
\label{sec:design_eval}

From the 350-configuration pre-distill PPL sweep (see Appendix~\ref{app:full_grid} for the complete 350-configuration pre-distill PPL grid), the best $K$ for each scoring$\times$grouping pair is overwhelmingly $K{=}8$ or $K{=}16$ (32 of 35 configs, only 3 select $K{\geq}32$), indicating that retaining a small number of high-quality experts is more effective than merging many.  We distill all 35 best scoring$\times$grouping combinations for 0.3B tokens each and evaluate on five downstream benchmarks (Figure~\ref{fig:design_axis}).  The full 35-combination accuracy matrix with per-benchmark breakdown is in Appendix~\ref{app:full_distillation} (Table~\ref{tab:distillation}).

\begin{figure}[t]
    \centering
    \includegraphics[width=\linewidth]{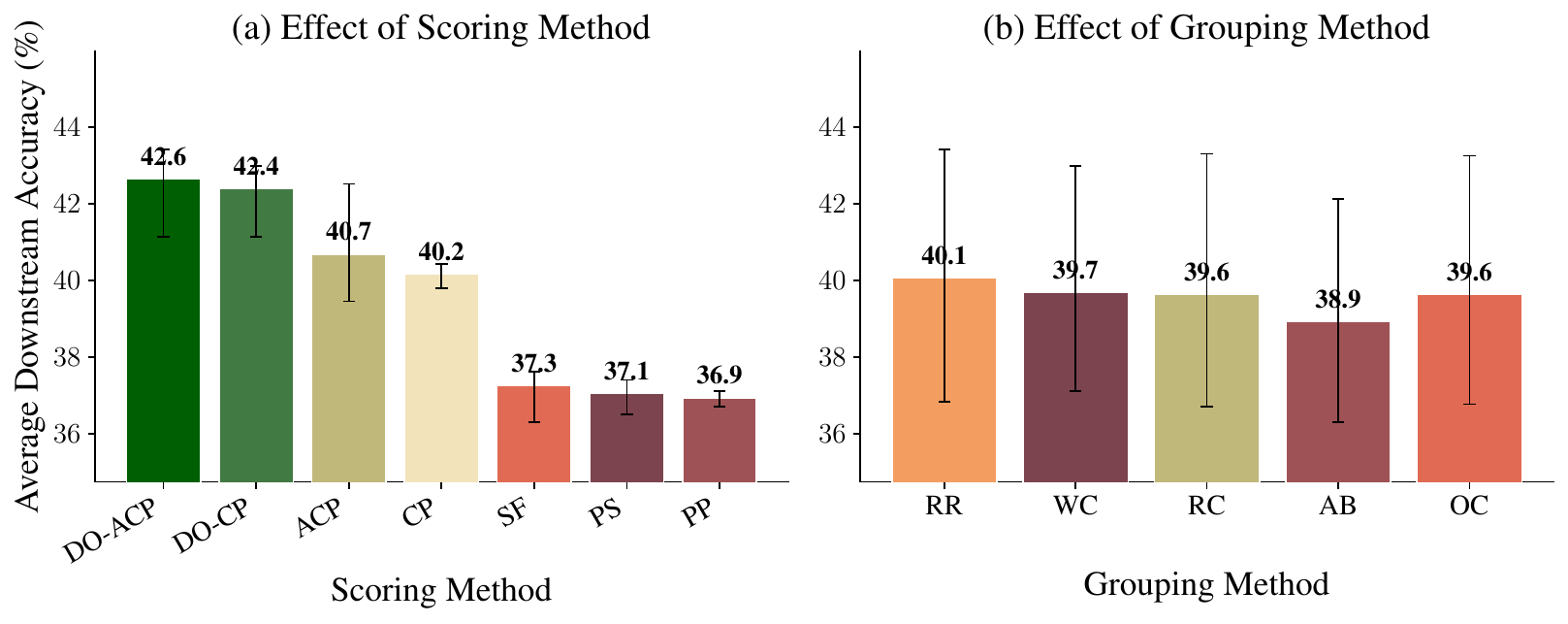}
    \caption{Marginal effect of each design axis on post-distillation downstream accuracy (0.3B tokens).  \textbf{(a)}~Scoring: DO-ACP dominates (42.64\% mean), with a 5.7~pp gap to PP (36.94\%).  \textbf{(b)}~Grouping: small effect (${\sim}$1~pp spread); round-robin (40.08\%) narrowly leads.}
    \label{fig:design_axis}
\end{figure}

\paragraph{Specialist selection and diversity are crucial.}
\label{sec:scoring_comparison}
Three distinct tiers emerge across the 35-combination grid (Figure~\ref{fig:design_axis}), with a 5.7~pp gap between the best and worst scoring families, roughly $5\times$ the grouping spread (1.2~pp).  The tier structure validates the intuitions behind our proposed metrics.  At the bottom tier (${\sim}$37\%), frequency-based methods (SF, PP, PS) favor generalist experts that serve many tokens but capture broadly shared features; they select nearly identical expert sets (Figure~\ref{fig:scoring_overlap}), explaining their tight clustering.  The middle tier (40--41\%) shows that \emph{focusing on rare-but-confident experts matters}: CP removes the frequency factor, promoting specialists that are rarely selected but confidently routed, yielding $+$3~pp.  ACP further adds output magnitude information ($+$0.5~pp).  The top tier (42--43\%) demonstrates that \emph{diversity among selected experts matters}: DO-ACP and DO-CP apply the D-optimal criterion to explicitly penalize redundancy, gaining another $+$2~pp.  This progression, from generalist to specialist to diverse specialist, demonstrates that routing confidence and inter-expert diversity provide complementary signals for expert selection, with additive gains when combined.

\paragraph{Grouping matters less when few experts are selected.}
The best-performing configurations all use small $K$ (8 or 16), which limits the role of grouping (Appendix~\ref{app:full_grid}).  At $K{=}8$ ($= k$), each group contains exactly one expert, making grouping completely irrelevant; all five strategies produce identical results.  At $K{=}16$, pre-distill PPL varies by up to ${\sim}$3$\times$ across grouping methods within a scoring family (ACP 2{,}002--6{,}334, CP 2{,}148--6{,}033; Table~\ref{tab:distillation}), but distillation largely compensates and the post-distill PPL spread collapses to within 3 perplexity points.  Across the 35 best-per-row configurations, the grouping-induced accuracy spread remains small: RR achieves the highest column mean at 40.08\%, the three clustering methods trail by only ${\sim}$0.4~pp (WC 39.70\%, RC 39.64\%, OC 39.63\%), and AB underperforms at 38.92\% (Figure~\ref{fig:design_axis}).

\paragraph{Pure pruning outperforms merging for diversity-aware scoring.}
\label{sec:pruning_vs_merging}
The top-7 configurations all use $K{=}8$ (pure pruning: one expert per group, no weight averaging; see Table~\ref{tab:distillation}).  Although $K{>}8$ yields lower pre-distill PPL, distillation reverses the ranking: ACP $K{=}8$ reaches 42.52\% post-distill vs.\ 40.50\% for ACP$\times$OC $K{=}16$, a $+$2.0~pp gap.  This is consistent with a base vs.\ instruct teacher comparison (Appendix~\ref{sec:base_vs_instruct}); cross-model results reveal a more nuanced interaction between scoring and $K$ (Section~\ref{sec:cross_model}).  DO-ACP at $K{=}8$ achieves the highest average accuracy across all 35 configurations (43.41\%; Table~\ref{tab:distillation}).  Since $K{=}8$ dominates and grouping is irrelevant at $K{=}k$, we refer to the best configuration simply as DO-ACP hereafter.

\begin{table}[t]
\caption{Comparison to baselines on Qwen3-30B-A3B (0.3B-token distillation).  DO-ACP at $K{=}8$ uses our diversity-aware scoring; SF$\times$AB and SF$\times$OC use the metrics of MC-SMoE~\citep{li2024mcsmoe} and HC-SMoE~\citep{chen2025hcsmoe} repurposed for a dense student.  D2D pruning is Dense-to-dense pruning of a matched-parameter dense teacher (Qwen3-32B).  Random FFN + teacher attn copies the teacher's attention with a random FFN.  Random initialization trains the same architecture from scratch.}
\label{tab:baselines}
\centering
\small
\begin{tabular}{l ccccc c}
\toprule
\textbf{Configuration} & \textbf{Wino} & \textbf{Hella} & \textbf{ARC-E} & \textbf{ARC-C} & \textbf{MMLU} & \textbf{Avg (\%)} \\
\midrule
DO-ACP, $K{=}8$ (best) & \textbf{57.0} & \textbf{41.1} & \textbf{57.4} & \textbf{29.9} & \textbf{31.7} & \textbf{43.41} \\
SF$\times$AB (MC-SMoE metrics) & 52.2 & 31.6 & 47.1 & 23.1 & 27.5 & 36.31 \\
SF$\times$OC (HC-SMoE metrics) & 53.1 & 33.1 & 49.3 & 24.7 & 27.3 & 37.52 \\
\midrule
D2D pruning (Qwen3-32B $\to$ 3.4B) & 51.7 & 27.7 & 38.9 & 22.4 & 25.7 & 33.28 \\
Random FFN + teacher attn & 53.4 & 28.0 & 34.7 & 22.2 & 25.2 & 32.70 \\
Random initialization & 51.2 & 25.3 & 28.9 & 22.4 & 23.0 & 30.15 \\
\midrule
\rowcolor{gray!10}
Teacher (Qwen3-30B-A3B) & 82.6 & 82.0 & 85.3 & 65.3 & 83.2 & 79.65 \\
\rowcolor{gray!10}
D2D Teacher (Qwen3-32B) & 76.8 & 84.0 & 89.9 & 73.1 & 81.9 & 81.14 \\
\bottomrule
\end{tabular}
\end{table}

\paragraph{Comparison to baselines.}
\label{sec:baselines}
We find that MC-SMoE's~\citep{li2024mcsmoe} (SF$\times$AB, 36.31\%) and HC-SMoE's~\citep{chen2025hcsmoe} (SF$\times$OC, 37.52\%) scoring$\times$grouping metrics, evaluated within our framework, lag DO-ACP by 7.1~pp and 5.9~pp (Table~\ref{tab:baselines}).  Beyond our framework, the strongest comparison is dense-to-dense (D2D) pruning: following the Minitron approach~\citep{muralidharan2024compact}, we search over five student architectures at matched parameter count (${\sim}$3.4B) by varying pruning strategy (width-only vs.\ width+depth), number of layers, and hidden dimensions (Appendix~\ref{app:d2d}, Table~\ref{tab:d2d_configs}).  We select the best candidate by pre-distill PPL (3.44B, width-only, all 64 layers preserved) and distill it with its dense teacher (Qwen3-32B) using the same hyperparameters and token budget as all our experiments.  Despite this careful setup, D2D reaches only 33.28\%, $+$10.1~pp below our best configuration (DO-ACP, 43.41\%).  D2D is also barely above the \emph{random FFN} baseline (32.70\%, $+$0.6~pp), which copies the teacher's attention layers but initializes FFN weights randomly; this gap suggests that dense pruning provides little structural advantage at this compression ratio.  A full random initialization baseline (30.15\%) confirms that expert structure provides a powerful initialization for distillation.

\subsection{Distillation method exploration}
\label{sec:distill_exploration}

Starting from the best initialization (DO-ACP, $K{=}8$), we explore whether distillation method choices can further improve quality.

\paragraph{Expanded teacher routing.}
\citet{kim2025expert} show that non-activated MoE experts contain useful knowledge for distillation.  We test overriding the teacher's routing to activate $k' > k$ experts (Eq.~\ref{eq:allexpert_teacher}), sweeping $k' \in \{8, 16, 32, 64, 96, 128\}$.
Consistent with~\citet{kim2025expert}, an intermediate routing breadth ($k'{=}16$, i.e., $2k$) improves quality by $+$0.70~pp (Table~\ref{tab:all_expert}), likely because the next-highest-scoring experts provide complementary knowledge.  Beyond $k'{=}32$, performance degrades monotonically as low-ranked experts contribute noise.  However, the gain is modest and comes at ${\sim}2\times$ teacher FLOPs per MoE layer; when training is not bottlenecked by data availability, this tradeoff favors standard routing.

\begin{table}[t]
\caption{Effect of expanded teacher routing during distillation (0.3B tokens each).  All configurations use the same pruned student (DO-ACP).  An intermediate $k'{=}16$ yields the best accuracy ($+$0.70~pp), not the default 8 or all 128 experts.}
\label{tab:all_expert}
\centering
\small
\begin{tabular}{r ccccc c c}
\toprule
\textbf{$k'$} & \textbf{Wino} & \textbf{Hella} & \textbf{ARC-E} & \textbf{ARC-C} & \textbf{MMLU} & \textbf{Avg (\%)} & $\Delta$ \\
\midrule
8 (default) & 57.0 & \textbf{41.1} & 57.4 & 29.9 & 31.7 & 43.41 & -- \\
\textbf{16} & \textbf{57.9} & 40.9 & \textbf{57.8} & \textbf{31.1} & \textbf{32.8} & \textbf{44.11} & $+$0.70 \\
32          & 57.1 & 40.6 & 57.0 & 31.0 & 32.5 & 43.63 & $+$0.22 \\
64          & 55.0 & 39.5 & 56.1 & 30.8 & 32.1 & 42.71 & $-$0.70 \\
96          & 56.0 & 38.8 & 55.8 & 29.8 & 31.6 & 42.39 & $-$1.02 \\
128 (all)   & 55.6 & 38.3 & 54.5 & 29.0 & 31.6 & 41.78 & $-$1.63 \\
\bottomrule
\end{tabular}
\end{table}

\begin{table}[t]
\caption{Loss function ablation (0.3B tokens each).  All use the same pruned student (DO-ACP).  Forward KL is the clear winner, while reverse KL and intermediate loss both degrade quality.}
\label{tab:loss_ablation}
\centering
\small
\begin{tabular}{l ccccc c c}
\toprule
\textbf{Loss} & \textbf{Wino} & \textbf{Hella} & \textbf{ARC-E} & \textbf{ARC-C} & \textbf{MMLU} & \textbf{Avg (\%)} & $\Delta$ \\
\midrule
\textbf{Forward KL} & \textbf{57.0} & \textbf{41.1} & \textbf{57.4} & \textbf{29.9} & \textbf{31.7} & \textbf{43.41} & -- \\
Intermediate (logit + hidden MSE) & 55.5 & 39.1 & 53.3 & 28.7 & 30.9 & 41.50 & $-$1.91 \\
Reverse KL & 53.0 & 33.8 & 42.4 & 25.6 & 31.0 & 37.17 & $-$6.24 \\
\bottomrule
\end{tabular}
\end{table}

\paragraph{Loss function ablation.}
We compare forward KL (baseline), reverse KL, and forward KL augmented with intermediate hidden-state MSE loss~\citep{muralidharan2024compact}.  Forward KL substantially outperforms both alternatives (Table~\ref{tab:loss_ablation}).  Reverse KL loses $-$6.24~pp, likely because the student benefits more by recovering the full teacher distribution rather than concentrating on high-probability modes.  Adding intermediate hidden-state MSE also hurts ($-$1.91~pp).  This is consistent with the Minitron finding that logit-only distillation is optimal when the student preserves all teacher layers~\citep{muralidharan2024compact}.

For the extended training in the next section, we use forward KL with standard teacher routing ($k'{=}k$) to maximize training throughput.

\subsection{Extended training}
\label{sec:scaling_results}

\begin{wrapfigure}{r}{0.47\textwidth}
    \vspace{-12pt}
    \centering
    \includegraphics[width=0.46\textwidth]{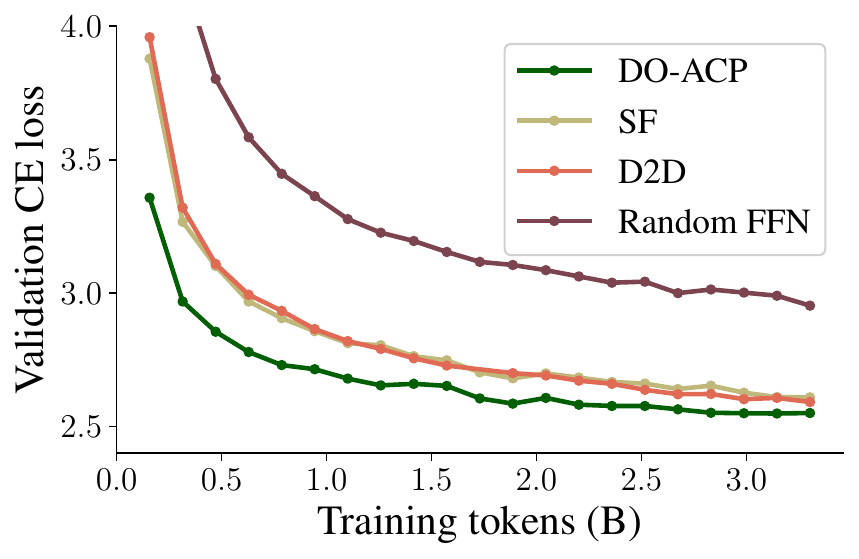}
    \caption{Validation CE loss during extended training.  DO-ACP maintains the lowest loss throughout.}
    \label{fig:scaleup_val_loss}
    \vspace{-10pt}
\end{wrapfigure}

All results above use 0.3B-token distillation.  To investigate whether our findings persist at scale, we train four configurations for ${\sim}$4B tokens.  The four configurations are: \textbf{DO-ACP} $K{=}8$ (our best), \textbf{SF} $K{=}16$ with weight clustering (best selection-frequency configuration from Section~\ref{sec:scoring_comparison}), \textbf{D2D} (dense-to-dense pruning baseline), and \textbf{Random FFN} (random FFN + teacher attention).

\begin{table}[t]
\caption{Extended training results after ${\sim}$4B tokens.  The ranking established at 0.3B tokens holds at scale: DO-ACP maintains its advantage, reaching 58.10\% average accuracy, $+$6.3~pp over D2D and $+$12.7~pp over random FFN.}
\label{tab:scaling}
\centering
\small
\begin{tabular}{l ccccc c}
\toprule
\textbf{Configuration} & \textbf{Wino} & \textbf{Hella} & \textbf{ARC-E} & \textbf{ARC-C} & \textbf{MMLU} & \textbf{Avg (\%)} \\
\midrule
DO-ACP, $K{=}8$ & \textbf{63.1} & \textbf{60.3} & \textbf{75.6} & \textbf{45.4} & \textbf{46.1} & \textbf{58.10} \\
SF, $K{=}16$ & 61.2 & 56.3 & 74.0 & 43.1 & 32.7 & 53.46 \\
D2D pruning (Qwen3-32B $\to$ 3.4B) & 60.5 & 57.5 & 73.1 & 41.5 & 26.6 & 51.84 \\
Random FFN + teacher attn & 54.4 & 45.4 & 66.0 & 34.2 & 27.1 & 45.44 \\
\midrule
\rowcolor{gray!10}
Qwen3-1.7B (pretrained reference) & 66.1 & 67.1 & 81.9 & 55.5 & 62.6 & 66.63 \\
\rowcolor{gray!10}
Qwen3-4B (pretrained reference) & 72.0 & 75.8 & 86.2 & 64.6 & 73.1 & 74.34 \\
\bottomrule
\end{tabular}
\end{table}

The ranking established at 0.3B tokens holds at scale.  DO-ACP reaches \textbf{58.10\%} average accuracy, outperforming D2D pruning (51.84\%) by $+$6.3~pp, SF (53.46\%) by $+$4.6~pp, and random FFN (45.44\%) by $+$12.7~pp (Table~\ref{tab:scaling}).
The gap is especially pronounced on MMLU: DO-ACP reaches 46.1\% versus 32.7\% (SF), 26.6\% (D2D), and 27.1\% (random), suggesting that diversity-aware expert selection may particularly benefit knowledge-intensive tasks during extended training.
Figure~\ref{fig:scaleup_val_loss} confirms that DO-ACP maintains the lowest validation CE loss throughout training.

We also highlight that MoE-to-dense distillation is faster than dense-to-dense (D2D) pruning: 73\,s/step vs.\ 116\,s/step on identical hardware (2$\times$B200 GPUs, GBS$=$384), a \textbf{1.6$\times$ speedup}.  This is due to the more efficient inference of the MoE teacher (Qwen3-30B-A3B), which activates only 3B parameters per token, while the dense teacher (Qwen3-32B) runs all 32B.  Combined with the accuracy advantage, MoE-to-dense produces a better student in less wall-clock time.

Compared to pretrained models of similar size, the 3.3B distilled student (58.10\%) closes to within 8.5~pp of Qwen3-1.7B (66.63\%, trained on substantially more data).

\subsection{Qualitative analysis}
\label{sec:qualitative}

To complement the benchmark evaluation, we analyze the generation quality of the four model checkpoints from Section~\ref{sec:scaling_results}.  Each model is prompted on 567 MMLU samples (10 per subject, 57 subjects) with a chain-of-thought zero-shot format and generates free-form responses (temperature 0.7, up to 2048 tokens).  We classify each response into six categories using rule-based heuristics for surface-level failures and LLM-as-a-judge (Claude Opus 4.6) for semantic errors: \emph{incoherent} (nonsensical output), \emph{repetitive loop} (same phrases cycling without progress), \emph{knowledge error} (coherent but factually wrong), \emph{reasoning error} (flawed logic in STEM), and \emph{other} (topic drift, truncation, out-of-range).  Category definitions and representative examples are in Appendix~\ref{app:error_taxonomy}.

\begin{figure}[t]
    \centering
    \includegraphics[width=\linewidth]{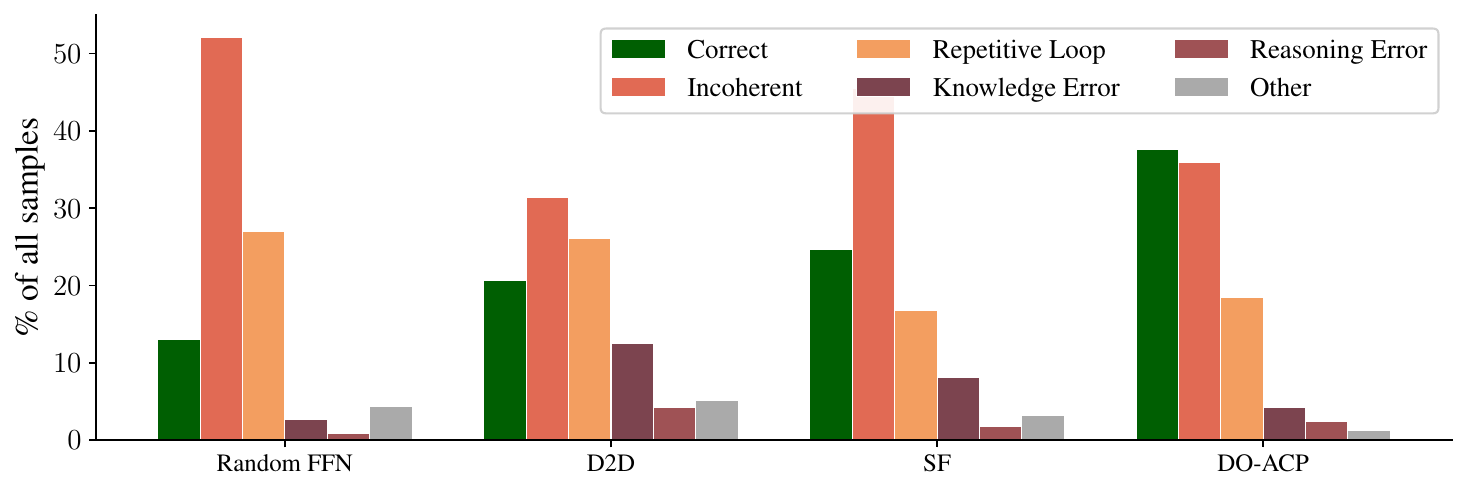}
    \caption{Error distribution on MMLU chain-of-thought zero-shot (567 samples, 4B-token models).  As model quality improves (left to right), catastrophic failures (incoherent, repetitive) decrease and the correct rate rises.  DO-ACP achieves the lowest total failure rate while maintaining the lowest knowledge error rate.}
    \label{fig:error_analysis}
\end{figure}

\paragraph{Catastrophic failures.}
Figure~\ref{fig:error_analysis} shows the error distribution.  We define catastrophic failures as responses that never reach meaningful reasoning: incoherent outputs and repetitive loops.  DO-ACP has the lowest total catastrophic failure rate (54.5\%), followed by D2D (57.5\%), SF (62.3\%), and Random FFN (79.0\%).  The pattern differs by failure mode: D2D has the lowest incoherent rate (31.4\%), while SF has the lowest repetitive loop rate (16.8\%).  DO-ACP achieves the best overall rate by reducing both failure modes simultaneously.

\paragraph{Knowledge errors.}
Random FFN produces almost no knowledge errors (2.6\%) because the vast majority of its outputs are catastrophic failures that never reach the stage of factual reasoning.  Among the three models that do attempt reasoning, knowledge error rates differ substantially: D2D shows 12.5\%, SF shows 8.1\%, and DO-ACP shows only 4.2\%.  DO-ACP achieves the highest accuracy (37.6\%) with the lowest knowledge error rate, indicating that diversity-aware expert selection preserves factual knowledge more effectively than both dense-to-dense pruning and frequency-based expert selection.

\paragraph{Subject-level breakdown.}
Breaking down accuracy by MMLU subject category confirms this pattern.  DO-ACP reaches 49.2\% on humanities, compared to 25.4\% (SF), 19.2\% (D2D), and 18.5\% (Random FFN).  The gap is largest on knowledge-intensive subjects (humanities $+$24~pp over SF, social sciences $+$11~pp) and smallest on STEM ($+$6~pp), where all models struggle with mathematical reasoning.  This suggests that the experts selected by DO-ACP carry richer factual and cultural knowledge, consistent with the low knowledge error rate observed above.

\subsection{Cross-model validation}
\label{sec:cross_model}

To test whether our findings generalize beyond Qwen3, we apply the pipeline to two additional MoE architectures: \textbf{DeepSeek-V2-Lite}~\citep{deepseekai2024deepseekv2} (16B, 64 routed + 2 shared experts, top-6) and \textbf{GPT-OSS-20B}~\citep{openai2025gptoss} (21B, 32 experts, top-4).  These span different expert counts (32--128), routing widths (4--8), and training stages (DeepSeek-V2-Lite is a base model; GPT-OSS-20B is post-trained).  Figure~\ref{fig:cross_model} summarizes results; full per-benchmark tables are in Appendix~\ref{app:cross_model_tables}.

\paragraph{Architectural adjustments.}
For each model, the dense FFN width is set to match the total active FFN computation per token.  Full details are in Appendix~\ref{app:architecture}.
\begin{itemize}[nosep]
    \item \textbf{DeepSeek-V2-Lite} has two shared (always-on) experts per MoE layer alongside $k{=}6$ routed experts.  Since $2 + 6 = 8$ FFN components are always active, $d_{\text{dense}} = 8 \times 1408 = 11{,}264$.  The shared experts are copied directly with no DP scaling (scale $= 1.0$), since router weighting does not apply to them, and $K$ applies only to routed experts.  The model does not renormalize routing probabilities after top-$k$, so the routed experts' weights do not sum to one.  We therefore use each group's average conditional probability as the DP scaling ratio, matching the average scaling the MoE applies to each expert's output.  The model's very first layer is a standard dense FFN whose intermediate dimension is 10{,}944 and is thus zero-padded to 11{,}264 to match subsequent layers.
    \item \textbf{GPT-OSS-20B} has $E{=}32$ experts with $k{=}4$, giving $d_{\text{dense}} = 4 \times 2880 = 11{,}520$.  Both Qwen3 and GPT-OSS renormalize after top-$k$, so we sweep uniform and proportional scaling (Section~\ref{sec:scaling}).  GPT-OSS is a post-trained reasoning model that generates chain-of-thought traces before answers; we evaluate all models (teacher and students) in standard completion mode for consistency with the other two models, which underestimates the teacher's native-format capability (e.g., MMLU 49\% in completion mode vs.\ 72\% with chat template).
\end{itemize}

\begin{figure}[t]
    \centering
    \includegraphics[width=\linewidth]{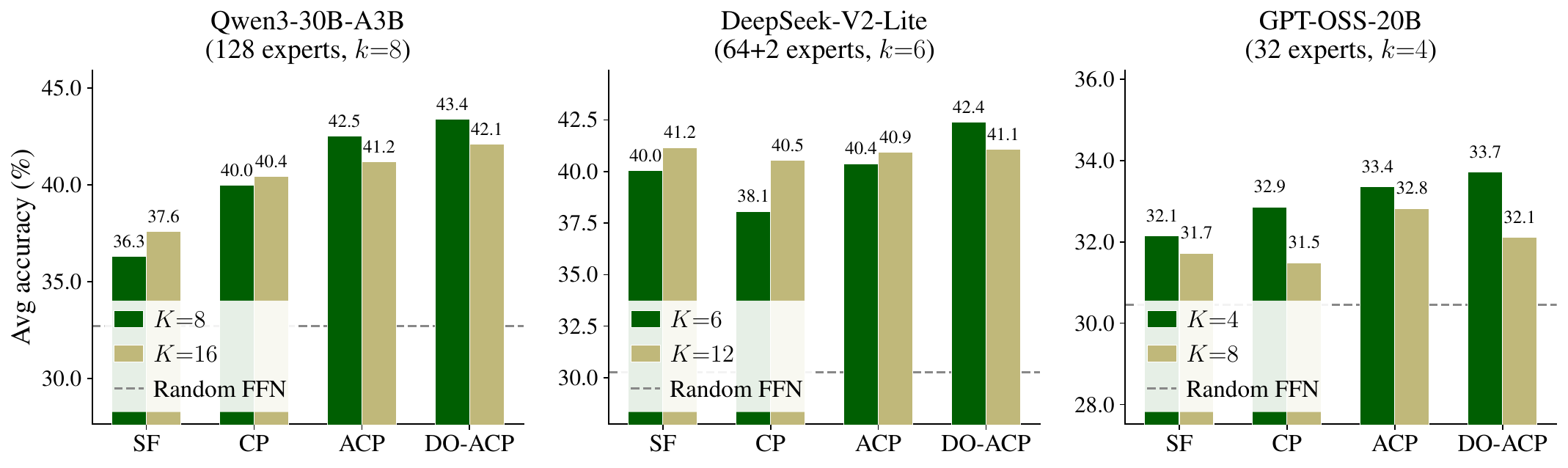}
    \caption{Cross-model validation: post-distillation accuracy (0.3B tokens) by scoring method and $K$ on three MoE architectures.  Green bars: pure pruning ($K{=}k$, one expert per slot), coral bars: merging ($K{>}k$, multiple experts averaged per slot).  Dashed line: random FFN baseline.  The scoring hierarchy and the benefit of pure pruning both persist across architectures, though the gap compresses with smaller expert pools.}
    \label{fig:cross_model}
\end{figure}

\paragraph{The best configuration uses pure pruning on all architectures.}
On all three models, the single best configuration uses pure pruning ($K{=}k$): DO-ACP at $K{=}8$ on Qwen3 (Section~\ref{sec:design_eval}), DO-ACP at $K{=}6$ on DeepSeek (42.39\% vs.\ 41.07\% at $K{=}12$), and DO-ACP at $K{=}4$ on GPT-OSS (33.71\% vs.\ 32.11\% at $K{=}8$).  However, the interaction between scoring and $K$ is more nuanced: on DeepSeek, merging ($K{=}12$) outperforms pure pruning ($K{=}6$) for SF ($+$1.1~pp), CP ($+$2.5~pp), and ACP ($+$0.6~pp), while only DO-ACP benefits from pure pruning ($+$1.3~pp).  This suggests that diversity-aware scoring selects experts that are individually strong enough to stand alone, whereas frequency-based methods select redundant generalists that benefit from averaging.

\paragraph{Expert pool size modulates the benefit of expert selection.}
DO-ACP and ACP remain among the top scoring methods on both additional models, but the advantage of diversity-aware scoring diminishes with smaller expert pools.  The scoring gap (best vs.\ worst method at $K{=}k$) shrinks from 7.1~pp on Qwen3 (128 experts) to 4.3~pp on DeepSeek (64 experts) to 1.6~pp on GPT-OSS (32 experts).  The overall benefit of MoE-to-dense over random baselines follows the same trend: $+$13.3~pp, $+$12.1~pp, and $+$3.7~pp respectively.  We attribute this to redundancy in the expert pool: with 128 experts and $K{=}8$, many candidates are interchangeable, giving diversity-aware methods room to outperform naive selection, while with only 32 experts and $K{=}4$, each expert handles a larger share of tokens, leaving less room for scoring to differentiate.

\subsection{Compatibility with modularity-aware pretraining}
\label{sec:emo}

\begin{wrapfigure}{r}{0.47\textwidth}
    \vspace{-12pt}
    \centering
    \includegraphics[width=0.46\textwidth]{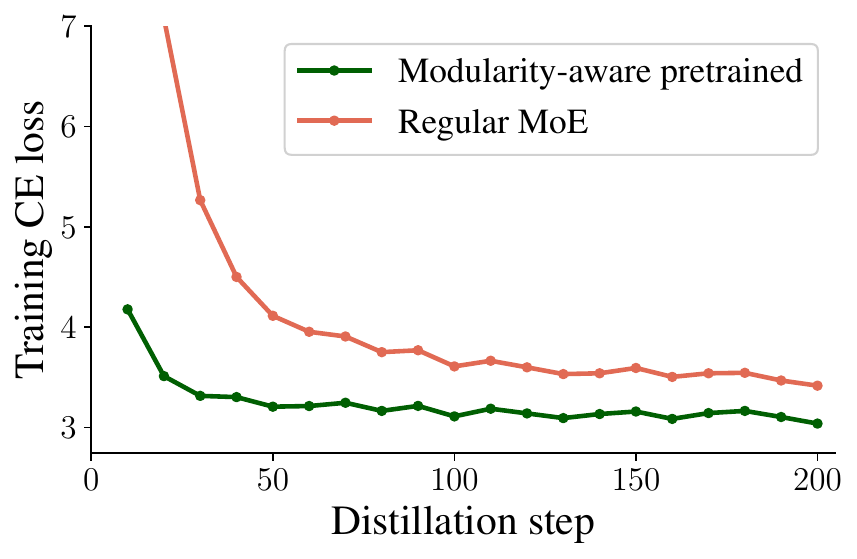}
    \caption{Cross-entropy on the training data during MoE-to-dense distillation (DO-ACP, $K{=}k$, forward-KL objective) from matched modularity-aware pretrained and regular MoE teachers.  The modularity-aware student remains lower throughout.}
    \label{fig:emo_ce_curves}
    \vspace{-10pt}
\end{wrapfigure}

We have so far evaluated our pipeline on standard MoE teachers.
An approach complementary to ours, introduced by \citet{wang2026emopretrainingmixtureexperts}, enhances the compressibility of MoE models during pretraining by constraining each training document to route through a sampled per-document expert pool.  The resulting model retains downstream performance under expert subset pruning.
While the original focus of the method is on expert-pruned models that retain the router structure, we investigate its compatibility with our pipeline, which produces a fully dense student.

We experiment with the open-source weights of \citet{wang2026emopretrainingmixtureexperts}: a 14B-parameter MoE with $E{=}127$ routed experts plus 1 shared expert and $k{=}7$ (1B active).
As a control, we use the matched regularly pretrained MoE released alongside, sharing architecture and training data.
We apply our strongest configuration (DO-ACP scoring with pure pruning at $K{=}k{=}7$ routed experts plus the model's single shared expert) to both teachers, yielding two 1.5B dense students.
Distillation follows the same protocol as Section~\ref{sec:setup}.

Table~\ref{tab:emo_pair} reports downstream accuracy and WikiText-2 perplexity before and after distillation.
The modularity-aware pretrained teacher yields a $+$3.61~pp downstream lift and an ${\sim}87{\times}$ lower pre-distill PPL relative to the regular MoE control.
Figure~\ref{fig:emo_ce_curves} shows that this initialization advantage persists throughout distillation: the student distilled from the modularity-aware teacher begins at substantially lower training CE and remains below its regular MoE counterpart.
We view this as preliminary evidence that compression-aware pretraining could pair well with our MoE-to-dense pipeline, and leave a fuller co-design of the two stages to future work.

\begin{table}[t]
\caption{Compatibility with modularity-aware pretraining (DO-ACP, $K{=}k{=}7$).  The modularity-aware pretrained teacher yields both lower pre-distill PPL and higher downstream accuracy than the regular MoE control.  \emph{Pre}/\emph{Post} = WikiText-2 PPL before/after distillation.}
\label{tab:emo_pair}
\centering
\small
\begin{tabular}{l cc cccccc}
\toprule
 & \multicolumn{2}{c}{\textbf{PPL} ($\downarrow$)} & \multicolumn{6}{c}{\textbf{Downstream Accuracy (\%)}} \\
\cmidrule(lr){2-3} \cmidrule(lr){4-9}
\textbf{Teacher} & \textbf{Pre} & \textbf{Post} & \textbf{Wino} & \textbf{Hella} & \textbf{ARC-E} & \textbf{ARC-C} & \textbf{MMLU} & \textbf{Avg} \\
\midrule
Regular MoE                & 13{,}549 & 31.13 & 52.6 & 33.8 & 44.5 & 23.0 & 24.3 & 35.67 \\
Modularity-aware pretrained & \textbf{156} & \textbf{19.76} & \textbf{56.9} & \textbf{40.1} & \textbf{49.1} & \textbf{25.1} & \textbf{25.2} & \textbf{39.28} \\
\bottomrule
\end{tabular}
\end{table}

\section{Conclusion}
\label{sec:conclusion}

We presented the first systematic framework for converting a Mixture-of-Experts language model into a fully dense architecture via expert scoring, grouping, and concatenating, followed by knowledge distillation.  We introduced a diversity-aware expert selection criterion (DO-ACP) based on the Gram log-determinant that jointly maximizes expert importance and mutual diversity.  We evaluated 350 scoring$\times$grouping$\times$scaling$\times$$K$ combinations on Qwen3-30B-A3B (128~experts) and validated findings on DeepSeek-V2-Lite (64+2~experts) and GPT-OSS-20B (32~experts).

Our evaluation reveals that expert scoring is the dominant design axis (5.7~pp spread vs.\ ${\sim}$1~pp for grouping), with DO-ACP achieving the best accuracy across all configurations and all three models.  Diversity-aware scoring enables effective pure pruning: on every architecture, the best configuration retains exactly $K{=}k$ experts with no weight averaging.  MoE-to-dense outperforms dense-to-dense pruning by $+$6.3~pp after ${\sim}$4B-token distillation at 1.6$\times$ faster training wall-clock speed.  Together, these findings point to a simple recipe: scoring with DO-ACP, retaining exactly the top-$k$ experts per layer, and distilling with forward KL.

\section{Limitations}
\label{sec:limitations}

When merging is used ($K > k$), our framework ties merge weights to selection scores, and decoupling these may improve such configurations.  Our extended training reaches ${\sim}$4B tokens, and scaling to tens of billions is needed to establish the quality ceiling.  The benefit of our method over random FFN initialization on GPT-OSS ($+$3.3~pp with 32~experts) is smaller than on Qwen3 ($+$10.7~pp with 128~experts), suggesting effectiveness depends on expert pool size.

\bibliography{references}

\begin{thebibliography}{38}
\providecommand{\natexlab}[1]{#1}
\providecommand{\url}[1]{\texttt{#1}}
\expandafter\ifx\csname urlstyle\endcsname\relax
  \providecommand{\doi}[1]{doi: #1}\else
  \providecommand{\doi}{doi: \begingroup \urlstyle{rm}\Url}\fi

\bibitem[Agarwal et~al.(2025)Agarwal, Ahmad, Ai, Altman, Applebaum, Arbus,
  Arora, Bai, Baker, Bao, et~al.]{openai2025gptoss}
S.~Agarwal, L.~Ahmad, J.~Ai, S.~Altman, A.~Applebaum, E.~Arbus, R.~K. Arora,
  Y.~Bai, B.~Baker, H.~Bao, et~al.
\newblock gpt-oss-120b \& gpt-oss-20b model card.
\newblock \emph{arXiv preprint arXiv:2508.10925}, 2025.

\bibitem[Bai et~al.(2025)Bai, Li, Zhang, Hong, and Guo]{bai2025diep}
S.~Bai, H.~Li, J.~Zhang, Z.~Hong, and S.~Guo.
\newblock {DiEP}: Adaptive mixture-of-experts compression through
  differentiable expert pruning.
\newblock \emph{arXiv preprint arXiv:2509.16105}, 2025.

\bibitem[Chen et~al.(2025)Chen, Liu, Sun, Chao, Hsu, and Lee]{chen2025hcsmoe}
I.-C. Chen, H.-S. Liu, W.-F. Sun, C.-H. Chao, Y.-C. Hsu, and C.-Y. Lee.
\newblock Retraining-free merging of sparse {MoE} via hierarchical clustering.
\newblock In \emph{International Conference on Machine Learning}, 2025.

\bibitem[Chen et~al.(2022)Chen, Huang, Xie, Jiao, Jiang, Zhou, Li, and
  Wei]{chen2022task}
T.~Chen, S.~Huang, Y.~Xie, B.~Jiao, D.~Jiang, H.~Zhou, J.~Li, and F.~Wei.
\newblock Task-specific expert pruning for sparse mixture-of-experts.
\newblock \emph{arXiv preprint arXiv:2206.00277}, 2022.

\bibitem[Dai et~al.(2024)Dai, Deng, Zhao, Xu, Gao, Chen, Li, Zeng, Yu, Wu, Xie,
  Li, Huang, Luo, Ruan, Sui, and Liang]{dai2024deepseekmoe}
D.~Dai, C.~Deng, C.~Zhao, R.~Xu, H.~Gao, D.~Chen, J.~Li, W.~Zeng, X.~Yu, Y.~Wu,
  Z.~Xie, Y.~Li, P.~Huang, F.~Luo, C.~Ruan, Z.~Sui, and W.~Liang.
\newblock {DeepSeekMoE}: Towards ultimate expert specialization in
  mixture-of-experts language models.
\newblock \emph{arXiv preprint arXiv:2401.06066}, 2024.

\bibitem[{DeepSeek-AI}(2024)]{deepseekai2024deepseekv2}
{DeepSeek-AI}.
\newblock {DeepSeek-V2}: A strong, economical, and efficient mixture-of-experts
  language model.
\newblock \emph{arXiv preprint arXiv:2405.04434}, 2024.

\bibitem[{DeepSeek-AI}(2026)]{deepseekai2026deepseekv4}
{DeepSeek-AI}.
\newblock Deepseek-v4: Towards highly efficient million-token context
  intelligence.
\newblock Technical report, 2026.
\newblock URL
  \url{https://huggingface.co/deepseek-ai/DeepSeek-V4-Pro/blob/main/DeepSeek_V4.pdf}.

\bibitem[Fedus et~al.(2022)Fedus, Zoph, and Shazeer]{fedus2022switch}
W.~Fedus, B.~Zoph, and N.~Shazeer.
\newblock Switch transformers: Scaling to trillion parameter models with simple
  and efficient sparsity.
\newblock \emph{Journal of Machine Learning Research}, 23:\penalty0 1--39,
  2022.

\bibitem[{Google DeepMind}(2026)]{gemma4}
{Google DeepMind}.
\newblock Gemma 4.
\newblock \url{https://deepmind.google/models/gemma/gemma-4/}, 2026.

\bibitem[Hinton et~al.(2015)Hinton, Vinyals, and Dean]{hinton2015distilling}
G.~Hinton, O.~Vinyals, and J.~Dean.
\newblock Distilling the knowledge in a neural network.
\newblock \emph{arXiv preprint arXiv:1503.02531}, 2015.

\bibitem[Jha et~al.(2026)Jha, Hashemzadeh, Saheb~Pasand, Parviz, Lee, and
  Knyazev]{jha2026ream}
S.~Jha, M.~Hashemzadeh, A.~Saheb~Pasand, A.~Parviz, M.-J. Lee, and B.~Knyazev.
\newblock {REAM}: Merging improves pruning of experts in {LLMs}.
\newblock \emph{arXiv preprint arXiv:2604.04356}, 2026.

\bibitem[Kim et~al.(2024)Kim, Kim, Kim, Castells, Choi, Shin, and
  Song]{kim2024shortened}
B.-K. Kim, G.~Kim, T.-H. Kim, T.~Castells, S.~Choi, J.~Shin, and H.-K. Song.
\newblock Shortened llama: Depth pruning for large language models with
  comparison of retraining methods.
\newblock \emph{arXiv preprint arXiv:2402.02834}, 2024.

\bibitem[Kim et~al.(2025)Kim, Chu, and Yang]{kim2025expert}
G.~Kim, G.~Chu, and E.~Yang.
\newblock Every expert matters: Towards effective knowledge distillation for
  mixture-of-experts language models.
\newblock \emph{arXiv preprint arXiv:2502.12947}, 2025.

\bibitem[Lasby et~al.(2025)Lasby, Lazarevich, Sinnadurai, Lie, Ioannou, and
  Thangarasa]{lasby2025reap}
M.~Lasby, I.~Lazarevich, N.~Sinnadurai, S.~Lie, Y.~Ioannou, and V.~Thangarasa.
\newblock Reap the experts: Why pruning prevails for one-shot moe compression.
\newblock \emph{arXiv preprint arXiv:2510.13999}, 2025.

\bibitem[Li et~al.(2025{\natexlab{a}})Li, Qiyuan, Wang, Li, Gu, Han, and
  Guo]{li2025submoe}
L.~Li, Z.~Qiyuan, J.~Wang, W.~Li, H.~Gu, S.~Han, and Y.~Guo.
\newblock {Sub-MoE}: Efficient mixture-of-expert {LLMs} compression via
  subspace expert merging.
\newblock \emph{arXiv preprint arXiv:2506.23266}, 2025{\natexlab{a}}.

\bibitem[Li et~al.(2024)Li, Zhang, Yadav, Sung, Cheng, Bansal, and
  Chen]{li2024mcsmoe}
P.~Li, Z.~Zhang, P.~Yadav, Y.-L. Sung, Y.~Cheng, M.~Bansal, and T.~Chen.
\newblock Merge, then compress: Demystify efficient {SMoE} with hints from its
  routing policy.
\newblock In \emph{International Conference on Learning Representations}, 2024.

\bibitem[Li et~al.(2025{\natexlab{b}})Li, Liang, Zhang, Hong, Kim, Chen, and
  Zhao]{li2025slimmoe}
Z.~Li, C.~Liang, Z.~Zhang, I.~Hong, Y.~J. Kim, W.~Chen, and T.~Zhao.
\newblock {SlimMoE}: Structured compression of large {MoE} models via expert
  slimming and distillation.
\newblock \emph{arXiv preprint arXiv:2506.18349}, 2025{\natexlab{b}}.

\bibitem[Liu et~al.(2026)Liu, Khandelwal, Subramanian, Jouault,
  et~al.]{ministral3}
A.~H. Liu, K.~Khandelwal, S.~Subramanian, V.~Jouault, et~al.
\newblock Ministral 3.
\newblock \emph{arXiv preprint arXiv:2601.08584}, 2026.

\bibitem[Merity et~al.(2017)Merity, Xiong, Bradbury, and
  Socher]{merity2017wikitext}
S.~Merity, C.~Xiong, J.~Bradbury, and R.~Socher.
\newblock Pointer sentinel mixture models.
\newblock \emph{arXiv preprint arXiv:1609.07843}, 2017.

\bibitem[{Meta}(2025)]{llama4}
{Meta}.
\newblock The llama 4 herd: The beginning of a new era of natively multimodal
  ai innovation.
\newblock \url{https://ai.meta.com/blog/llama-4-multimodal-intelligence/},
  2025.

\bibitem[Miao et~al.(2025)Miao, Yao, Wang, Wang, Yi, Liu, Zhao, and
  Yang]{miao2025mergemoe}
R.~Miao, Y.~Yao, Z.~Wang, Z.~Wang, B.~Yi, L.~Liu, Y.~Zhao, and T.~Yang.
\newblock {MergeMoE}: Efficient compression of {MoE} models via expert output
  merging.
\newblock \emph{arXiv preprint arXiv:2510.14436}, 2025.

\bibitem[Muralidharan et~al.(2024)Muralidharan, Sreenivas, Joshi, Chochowski,
  Patwary, Shoeybi, Catanzaro, Kautz, and Molchanov]{muralidharan2024compact}
S.~Muralidharan, S.~T. Sreenivas, R.~Joshi, M.~Chochowski, M.~Patwary,
  M.~Shoeybi, B.~Catanzaro, J.~Kautz, and P.~Molchanov.
\newblock Compact language models via pruning and knowledge distillation.
\newblock \emph{arXiv preprint arXiv:2407.14679}, 2024.

\bibitem[Nemhauser et~al.(1978)Nemhauser, Wolsey, and
  Fisher]{nemhauser1978analysis}
G.~L. Nemhauser, L.~A. Wolsey, and M.~L. Fisher.
\newblock An analysis of approximations for maximizing submodular set
  functions---{I}.
\newblock \emph{Mathematical Programming}, 14\penalty0 (1):\penalty0 265--294,
  1978.

\bibitem[Nguyen et~al.(2025)Nguyen, Nguyen, Nguyen, Nguyen, Jiang, Fetaya,
  Tran, Chechik, and Nguyen]{nguyen2025namex}
D.~V. Nguyen, A.~T. Nguyen, M.~H. Nguyen, L.~Q. Nguyen, S.~Jiang, E.~Fetaya,
  L.~D. Tran, G.~Chechik, and T.~M. Nguyen.
\newblock Expert merging in sparse mixture of experts with nash bargaining.
\newblock \emph{arXiv preprint arXiv:2510.16138}, 2025.

\bibitem[Penedo et~al.(2024)Penedo, Kydl{\'\i}{\v{c}}ek, Lozhkov, Mitchell,
  Raffel, Von~Werra, Wolf, et~al.]{fineweb}
G.~Penedo, H.~Kydl{\'\i}{\v{c}}ek, A.~Lozhkov, M.~Mitchell, C.~Raffel,
  L.~Von~Werra, T.~Wolf, et~al.
\newblock The fineweb datasets: Decanting the web for the finest text data at
  scale.
\newblock \emph{Advances in Neural Information Processing Systems},
  37:\penalty0 30811--30849, 2024.

\bibitem[Pukelsheim(2006)]{pukelsheim2006optimal}
F.~Pukelsheim.
\newblock \emph{Optimal Design of Experiments}.
\newblock SIAM, 2006.

\bibitem[Roy and Vetterli(2007)]{roy2007effective}
O.~Roy and M.~Vetterli.
\newblock The effective rank: A measure of effective dimensionality.
\newblock \emph{15th European Signal Processing Conference}, pages 606--610,
  2007.

\bibitem[Shazeer(2020)]{shazeer2020glu}
N.~Shazeer.
\newblock {GLU} variants improve transformer.
\newblock \emph{arXiv preprint arXiv:2002.05202}, 2020.

\bibitem[Shazeer et~al.(2017)Shazeer, Mirhoseini, Maziarz, Davis, Le, Hinton,
  and Dean]{shazeer2017outrageously}
N.~Shazeer, A.~Mirhoseini, K.~Maziarz, A.~Davis, Q.~Le, G.~Hinton, and J.~Dean.
\newblock Outrageously large neural networks: The sparsely-gated
  mixture-of-experts layer.
\newblock \emph{arXiv preprint arXiv:1701.06538}, 2017.

\bibitem[Sreenivas et~al.(2024)Sreenivas, Muralidharan, Joshi, Chochowski,
  Patwary, Shoeybi, Catanzaro, Kautz, and Molchanov]{sreenivas2024llmpruning}
S.~T. Sreenivas, S.~Muralidharan, R.~Joshi, M.~Chochowski, M.~Patwary,
  M.~Shoeybi, B.~Catanzaro, J.~Kautz, and P.~Molchanov.
\newblock {LLM} pruning and distillation in practice: The minitron approach.
\newblock \emph{arXiv preprint arXiv:2408.11796}, 2024.

\bibitem[Sun et~al.(2023)Sun, Liu, Bair, and Kolter]{sun2023wanda}
M.~Sun, Z.~Liu, A.~Bair, and J.~Z. Kolter.
\newblock A simple and effective pruning approach for large language models.
\newblock \emph{arXiv preprint arXiv:2306.11695}, 2023.

\bibitem[Wang et~al.(2026)Wang, Bhagia, and
  Min]{wang2026emopretrainingmixtureexperts}
R.~Wang, A.~Bhagia, and S.~Min.
\newblock {EMO}: Pretraining mixture of experts for emergent modularity.
\newblock \emph{arXiv preprint arXiv:2605.06663}, 2026.

\bibitem[Xia et~al.(2024)Xia, Gao, Zeng, and Chen]{xia2024sheared}
M.~Xia, T.~Gao, Z.~Zeng, and D.~Chen.
\newblock Sheared {LLaMA}: Accelerating language model pre-training via
  structured pruning.
\newblock \emph{arXiv preprint arXiv:2310.06694}, 2024.

\bibitem[Xie et~al.(2024)Xie, Zhang, Zhou, Xie, Song, Liu, Wang, Lin, and
  Xu]{xie2024moepruner}
Y.~Xie, Z.~Zhang, D.~Zhou, C.~Xie, Z.~Song, X.~Liu, Y.~Wang, X.~Lin, and A.~Xu.
\newblock {MoE-Pruner}: Pruning mixture-of-experts large language model using
  the hints from its router.
\newblock \emph{arXiv preprint arXiv:2410.12013}, 2024.

\bibitem[Yang et~al.(2025)Yang, Li, Yang, Zhang, Hui, Zheng, Yu, Gao, Huang,
  Lv, et~al.]{qwen3}
A.~Yang, A.~Li, B.~Yang, B.~Zhang, B.~Hui, B.~Zheng, B.~Yu, C.~Gao, C.~Huang,
  C.~Lv, et~al.
\newblock Qwen3 technical report.
\newblock \emph{arXiv preprint arXiv:2505.09388}, 2025.

\bibitem[Yang et~al.(2024)Yang, Sui, Xiao, Huang, Gong, Duan, Jia, Yin, Cheng,
  and Yuan]{yang2024moei2}
C.~Yang, Y.~Sui, J.~Xiao, L.~Huang, Y.~Gong, Y.~Duan, W.~Jia, M.~Yin, Y.~Cheng,
  and B.~Yuan.
\newblock {MoE-I$^2$}: Compressing mixture of experts models through
  inter-expert pruning and intra-expert low-rank decomposition.
\newblock In \emph{Findings of EMNLP}, 2024.

\bibitem[Zeng et~al.(2026)Zeng, Lv, Hou, Du, Zheng, Chen, Yin, Ge, Huang, Xie,
  et~al.]{zeng2026glm}
A.~Zeng, X.~Lv, Z.~Hou, Z.~Du, Q.~Zheng, B.~Chen, D.~Yin, C.~Ge, C.~Huang,
  C.~Xie, et~al.
\newblock Glm-5: from vibe coding to agentic engineering.
\newblock \emph{arXiv preprint arXiv:2602.15763}, 2026.

\bibitem[Zhao et~al.(2025)Zhao, Wang, and Zhang]{zhao2025puzzlemoe}
Y.~Zhao, Z.~Wang, and M.~Zhang.
\newblock {PuzzleMoE}: Efficient compression of large mixture-of-experts models
  via sparse expert merging and bit-packed inference.
\newblock \emph{arXiv preprint arXiv:2511.04805}, 2025.

\end{thebibliography}

\clearpage
\appendix

\makeatletter
\@ifundefined{c@theorem}{%
  \newtheorem{theorem}{Theorem}[section]
  \newtheorem{lemma}[theorem]{Lemma}
  \newtheorem{proposition}[theorem]{Proposition}
  \newtheorem{corollary}[theorem]{Corollary}
  \newtheorem{definition}[theorem]{Definition}
  \newtheorem{remark}[theorem]{Remark}
  \newtheorem{assumption}[theorem]{Assumption}
}{}
\makeatother

\section{Block concatenation preserves representative activations}
\label{app:concat_equivalence}

We show that, once a layer has been converted into $k$ representative FFNs, block concatenation produces the same intermediate activations as those representatives and differs only in the final static aggregation weights.
When $K{=}k$, the representatives are copied experts, so this is an exact statement about the selected original experts.
When $K{>}k$, the representatives are first formed by parameter-space averaging; the result below applies to those constructed representatives, not to every original expert before averaging.

Consider a single representative FFN with SwiGLU activation.  Representative $g$ computes:
\begin{equation}
    f_g(\mathbf{h}) = \mathbf{W}_{\text{down}}^{(g)} \bigl(\sigma(\mathbf{W}_{\text{gate}}^{(g)} \mathbf{h}) \odot \mathbf{W}_{\text{up}}^{(g)} \mathbf{h}\bigr),
\end{equation}
where $\sigma$ is the SiLU activation and $\odot$ is element-wise multiplication.  The original MoE output uses token-dependent routing weights:
\begin{equation}
    f_{\text{MoE}}(\mathbf{h}) = \sum_{e \in \mathcal{S}(\mathbf{h})} w_e(\mathbf{h}) \cdot f_e(\mathbf{h}),
\end{equation}
where $\mathcal{S}(\mathbf{h})$ is the top-$k$ routed expert set and $w_e(\mathbf{h})$ are the router weights.

Now consider the concatenated dense FFN with $\mathbf{W}_{\text{gate}} = [\mathbf{W}_{\text{gate}}^{(1)}; \ldots; \mathbf{W}_{\text{gate}}^{(k)}]$ and similarly for $\mathbf{W}_{\text{up}}$.  Since row-concatenation distributes over matrix--vector multiplication:
\begin{equation}
    \mathbf{W}_{\text{gate}} \mathbf{h} = \begin{bmatrix} \mathbf{W}_{\text{gate}}^{(1)} \mathbf{h} \\ \vdots \\ \mathbf{W}_{\text{gate}}^{(k)} \mathbf{h} \end{bmatrix}, \qquad
    \mathbf{W}_{\text{up}} \mathbf{h} = \begin{bmatrix} \mathbf{W}_{\text{up}}^{(1)} \mathbf{h} \\ \vdots \\ \mathbf{W}_{\text{up}}^{(k)} \mathbf{h} \end{bmatrix}.
\end{equation}
The SwiGLU activation applies element-wise, so the intermediate activation decomposes into $k$ independent blocks, each identical to the corresponding representative's intermediate activation.  The column-concatenated down-projection then sums over these blocks:
\begin{equation}
    f_{\text{dense}}(\mathbf{h}) = \mathbf{W}_{\text{down}} \bigl(\sigma(\mathbf{W}_{\text{gate}} \mathbf{h}) \odot \mathbf{W}_{\text{up}} \mathbf{h}\bigr) = \sum_{g=1}^{k} \alpha_g \cdot f_g(\mathbf{h}).
\end{equation}
The representative intermediate activations (gate projections, up projections, and SwiGLU outputs) are \emph{exactly} preserved by block concatenation.  Within this block-concatenation step, the only approximation is that the dense model uses static $\alpha_g$ from down-projection scaling (Section~\ref{sec:scaling}) instead of token-dependent router weights; the full MoE-to-dense conversion also fixes the selected experts and, when $K{>}k$, replaces groups of original experts by parameter-averaged representatives.  Knowledge distillation compensates for these approximations.

\clearpage
\section{Algorithms}
\label{app:algorithm}

This section presents the two core procedures of our pipeline: the per-layer MoE-to-dense conversion (Algorithm~\ref{alg:merging}) and the greedy D-Optimal subset selection used by the DO scoring family (Algorithm~\ref{alg:do_greedy}).

\vspace{-0.5em}
\paragraph{MoE-to-dense conversion.}
Algorithm~\ref{alg:merging} gives the per-layer conversion, where $E$ is the number of experts, $k$ the MoE top-$k$ count (also the number of groups), $K \geq k$ the experts selected for conversion, $\mathcal{G}_g \subseteq [E]$ the experts assigned to group $g \in [k]$, and $\alpha_g$ the down-projection scaling factor (uniform $1/k$ or score-proportional, Section~\ref{sec:scaling}).

\begin{algorithm}[h]
\caption{MoE-to-dense conversion for layer $\ell$}
\label{alg:merging}
\begin{algorithmic}[1]
\REQUIRE Experts $\{\mathbf{W}^{(e)}\}_{e=1}^{E}$, scores $\{s^{(e)}\}_{e=1}^{E}$, top-$K$, number of groups $k$; scoring, grouping, and scaling method choices.
\ENSURE Dense projections $\mathbf{W}_{\text{gate}}, \mathbf{W}_{\text{up}} \in \mathbb{R}^{d_{\text{dense}} \times d}$, $\mathbf{W}_{\text{down}} \in \mathbb{R}^{d \times d_{\text{dense}}}$
\STATE \textbf{Score:} Compute scores $s^{(e)}$ for all $E$ experts; select top-$K$.
\STATE \textbf{Group:} Partition the $K$ selected experts into $k$ groups using the chosen grouping method.
\FOR{each group $g$}
    \STATE Compute within-group weights: $w^{(e)} = s^{(e)} \big/ \sum_{e' \in \mathcal{G}_g} s^{(e')}$ for each $e \in \mathcal{G}_g$.
    \FOR{$\text{proj} \in \{\text{gate}, \text{up}, \text{down}\}$}
        \STATE \textbf{Merge:} $\mathbf{W}_{\text{proj}}^{(g)} = \sum_{e \in \mathcal{G}_g} w^{(e)} \, \mathbf{W}_{\text{proj}}^{(e)}$ \hfill $\triangleright$ Weighted average (identity for $|\mathcal{G}_g|=1$)
    \ENDFOR
    \STATE \textbf{Scale:} $\tilde{\mathbf{W}}_{\text{down}}^{(g)} = \alpha_g \cdot \mathbf{W}_{\text{down}}^{(g)}$ \hfill $\triangleright$ Uniform or proportional
\ENDFOR
\STATE \textbf{Concatenate:} $\mathbf{W}_{\text{gate}} = [\mathbf{W}_{\text{gate}}^{(1)}; \ldots; \mathbf{W}_{\text{gate}}^{(k)}]$, \; $\mathbf{W}_{\text{up}} = [\mathbf{W}_{\text{up}}^{(1)}; \ldots; \mathbf{W}_{\text{up}}^{(k)}]$ \hfill $\triangleright$ Row-concat
\STATE \hphantom{\textbf{Concatenate:}} $\mathbf{W}_{\text{down}} = [\tilde{\mathbf{W}}_{\text{down}}^{(1)}, \ldots, \tilde{\mathbf{W}}_{\text{down}}^{(k)}]$ \hfill $\triangleright$ Column-concat
\end{algorithmic}
\end{algorithm}

The algorithm applies independently to each of the $L$ MoE layers.  All non-MoE parameters (attention, embeddings, layer norms) are copied unchanged from teacher to student.

\vspace{-0.5em}
\paragraph{Greedy D-Optimal selection.}
Algorithm~\ref{alg:do_greedy} gives the greedy size-$K$ subset selection used by the DO scoring family, instantiating the (1${-}$1$/e$)-approximation of Proposition~\ref{prop:greedy}.  The importance-weighted Gram matrix $\boldsymbol{\mathcal{K}} \in \mathbb{R}^{E\times E}$ has entries $\boldsymbol{\mathcal{K}}_{ij} = \sqrt{I_i I_j}\,\mathbf{G}_{ij}$, with base importance $I_e = s^{(e)} \geq 0$ (CP or ACP) and Gram entries $\mathbf{G}_{ij} = \mathbb{E}_t[\langle f_i(t), f_j(t) \rangle]$ over calibration tokens.  For $S \subseteq [E]$, $\boldsymbol{\mathcal{K}}_S$ is the principal submatrix indexed by $S$, and $\boldsymbol{\mathcal{K}}_{eS}$, $\boldsymbol{\mathcal{K}}_{Se}$ the corresponding row and column slices for $e \notin S$.  The objective is $F(S) := \log\det(\boldsymbol{\mathcal{K}}_S + \lambda_{\mathrm{reg}}\mathbf{I})$ with $\lambda_{\mathrm{reg}} = \frac{1}{KE}\sum_{e=1}^{E}\boldsymbol{\mathcal{K}}_{ee}$.

\begin{algorithm}[h]
\caption{Greedy D-Optimal expert selection (per layer)}
\label{alg:do_greedy}
\begin{algorithmic}[1]
\REQUIRE Importance-weighted Gram $\boldsymbol{\mathcal{K}}$ (positive semidefinite), target subset size $K$, regularizer $\lambda_{\mathrm{reg}}$.
\ENSURE Ordered subset $S \subseteq [E]$ with $|S| = K$
\STATE $S \gets \emptyset$
\FOR{$\text{step} = 1, \dots, K$}
    \IF{$S = \emptyset$}
        \STATE $\mathrm{gain}(e) \gets \log(\boldsymbol{\mathcal{K}}_{ee} + \lambda_{\mathrm{reg}})$ for each $e \in [E]$
    \ELSE
        \STATE $A_S \gets \boldsymbol{\mathcal{K}}_S + \lambda_{\mathrm{reg}}\mathbf{I}$;\; precompute $A_S^{-1}$ \hfill $\triangleright$ once per step
        \FOR{each $e \notin S$}
            \STATE $\sigma_e \gets \boldsymbol{\mathcal{K}}_{ee} + \lambda_{\mathrm{reg}} - \boldsymbol{\mathcal{K}}_{eS}\,A_S^{-1}\,\boldsymbol{\mathcal{K}}_{Se}$ \hfill $\triangleright$ Schur complement
            \STATE $\mathrm{gain}(e) \gets \log\sigma_e$
        \ENDFOR
    \ENDIF
    \STATE $e^\star \gets \arg\max_{e \notin S} \mathrm{gain}(e)$
    \STATE $S \gets S \cup \{e^\star\}$
\ENDFOR
\STATE \textbf{return} $S$
\end{algorithmic}
\end{algorithm}

$A_S^{-1}$ is recomputed each step at cost $O(K^3)$, and the Schur complement is evaluated for each of the $E$ candidates at $O(K^2)$ per evaluation, yielding overall $O(K^3 E)$ time.

\section{Scoring baseline definitions}
\label{app:scoring_defs}

For completeness, we provide the formal definitions of the three routing-based baseline scoring methods (Section~\ref{sec:scoring}).  Let $p_\ell^{(e)}(t)$ be the softmax routing probability and $\mathcal{S}_\ell(t)$ the set of top-$k$ selected experts for token $t$ in layer $\ell$.

\paragraph{Selection frequency (SF).}
$s_\ell^{(e)} = \frac{1}{N}\sum_{t=1}^{N} \mathbb{1}[e \in \mathcal{S}_\ell(t)].$

\paragraph{Pre-selection probability (PP).}
$s_\ell^{(e)} = \frac{1}{N}\sum_{t=1}^{N} p_\ell^{(e)}(t).$

\paragraph{Post-selection probability (PS).}
$s_\ell^{(e)} = \frac{1}{N}\sum_{t : e \in \mathcal{S}_\ell(t)} p_\ell^{(e)}(t).$

PS decomposes as $\text{SF} \times \text{CP}$: $s_\ell^{(e)} = f_\ell^{(e)} \cdot \mathbb{E}[p_\ell^{(e)}(t) \mid e \in \mathcal{S}_\ell(t)]$.

\section{Excluded scoring and grouping methods}
\label{app:excluded_methods}

\paragraph{Scoring methods excluded.}
We do not evaluate Fisher information scoring, which MC-SMoE~\citep{li2024mcsmoe} showed is dominated by frequency weighting (their Table~8).  We omit loss-degradation scoring from MoE-I$^2$~\citep{yang2024moei2}, requiring $E \times L = 6{,}144$ forward passes, impractical at our scale.  We exclude Nash Bargaining coefficients (NAMEx~\citep{nguyen2025namex}), which lack validation against standard baselines.  Per-weight saliency metrics (Wanda~\citep{sun2023wanda}) operate at intra-expert granularity, orthogonal to our per-expert framework.

\paragraph{Grouping methods excluded.}
We do not evaluate K-means++ clustering, which HC-SMoE~\citep{chen2025hcsmoe} showed is inferior to agglomerative clustering by a 12.96\% variance gap (their Table~5).  We omit entry-wise weight similarity from PuzzleMoE~\citep{zhao2025puzzlemoe}, designed for their sparse dual-mask paradigm requiring custom CUDA kernels incompatible with dense inference.

\section{Proofs for D-Optimal expert selection}
\label{app:theory_proofs}

This section provides proofs for the theoretical results stated in Section~\ref{sec:theory}. We use the notation established there: $f_e(t) \in \mathbb{R}^d$ is the output of expert $e$ on token $t$, $I_e = s^{(e)} \geq 0$ is a base importance score (CP or ACP), and $\mathcal{K}_{ij} = \sqrt{I_i I_j} \cdot \mathbb{E}_{t}[\langle f_i(t), f_j(t) \rangle]$ is the importance-weighted kernel. For any subset $S \subseteq [E]$, we define:
\begin{align}
    F(S) &:= \log\det \left(\boldsymbol{\mathcal{K}}_S + \lambda_{\mathrm{reg}} \mathbf{I}\right), &
    \widetilde{F}(S) &:= \log\det\!\left(\mathbf{I} + \lambda_{\mathrm{reg}}^{-1} \boldsymbol{\mathcal{K}}_S\right).
\end{align}
For fixed cardinality $|S| = K$, $F$ and $\widetilde{F}$ differ by the additive constant $K\log\lambda_{\mathrm{reg}}$.  We use $\widetilde{F}$ for the submodularity argument (it is monotone for any $\lambda_{\mathrm{reg}}>0$, while $F$ is not) and $F$ in the incoherence and stability proofs (it factors cleanly with the diagonal proxy $G$).

\subsection{Redundancy counterexample (Proof of Theorem~\ref{thm:redundancy})}
\label{app:proof_redundancy}

\begin{proof}
Fix $K \geq 2$ and set $E = 2K - 1$.  Let the calibration space be
\begin{align}
    \mathcal{X} := \{a_1, \ldots, a_K, b_2, \ldots, b_K\},
\end{align}
and let $\mathcal{D}$ be the uniform distribution on $\mathcal{X}$.  We construct a top-1 MoE layer with scalar expert outputs.  Define
\begin{align}
    f_e(x) &= \mathbf{1}\{x \in \{a_1, \ldots, a_K\}\}, \quad e = 1, \ldots, K, \\
    f_{K+j-1}(x) &= \mathbf{1}\{x = b_j\}, \quad j = 2, \ldots, K.
\end{align}
Thus the first $K$ experts are identical, while the remaining $K-1$ experts are pairwise orthogonal specialists.

Define a deterministic router $r : \mathcal{X} \to [E]$ by
\begin{align}
    r(a_j) &= j, \quad j = 1, \ldots, K, \\
    r(b_j) &= K+j-1, \quad j = 2, \ldots, K,
\end{align}
and let the routing probabilities be $p^{(e)}(x) := \mathbf{1}\{e = r(x)\}$.  The resulting MoE teacher output is
\begin{align}
    F_{\mathrm{MoE}}(x)
    := \sum_{e=1}^{E} p^{(e)}(x) f_e(x)
    = 1
    \qquad \text{for every } x \in \mathcal{X}.
\end{align}

We take ACP as the base importance score.  Because the router is deterministic, every selected expert has conditional probability $1$, so the ACP score of expert $e$ reduces to $\sqrt{\mathbb{E}[f_e(x)^2]}$.  Therefore
\begin{align}
    I_1 = \cdots = I_K
    =: I_A
    = \sqrt{\frac{K}{2K-1}},
    \qquad
    I_{K+1} = \cdots = I_{2K-1}
    =: I_B
    = \sqrt{\frac{1}{2K-1}},
\end{align}
and hence $I_A > I_B > 0$.  Independent top-$K$ ranking by the base importances therefore selects
\begin{align}
    S_{\mathrm{ind}} = \{1, \ldots, K\}.
\end{align}

For any subset $S \subseteq [E]$, define its best linear reconstruction error against the teacher output by
\begin{align}
    \mathcal{L}(S)
    := \inf_{a \in \mathbb{R}^{|S|}}
    \mathbb{E}_{x \sim \mathcal{D}}
    \left[
        \left(
            \sum_{e \in S} a_e f_e(x) - F_{\mathrm{MoE}}(x)
        \right)^2
    \right].
\end{align}

\paragraph{Independent ranking incurs constant error.}
For $S_{\mathrm{ind}}$, every selected feature equals $\mathbf{1}\{x \in \{a_1, \ldots, a_K\}\}$.  Hence every linear combination over $S_{\mathrm{ind}}$ has the form
\begin{align}
    x \mapsto c \, \mathbf{1}\{x \in \{a_1, \ldots, a_K\}\}
\end{align}
for some scalar $c$.  The choice $c = 1$ is optimal, because it matches $F_{\mathrm{MoE}}(x) = 1$ on the $K$ points $a_1, \ldots, a_K$.  On the remaining $K-1$ points $b_2, \ldots, b_K$, the reconstruction is zero, so the squared error is one.  Therefore
\begin{align}
    \mathcal{L}(S_{\mathrm{ind}})
    = \frac{K-1}{2K-1}.
\end{align}

\paragraph{Log-det selects a zero-error subset.}
Now define
\begin{align}
    S_{\mathrm{good}} := \{1\} \cup \{K+1, \ldots, 2K-1\}.
\end{align}
For this subset, choosing coefficient vector $a \equiv 1$ gives
\begin{align}
    \sum_{e \in S_{\mathrm{good}}} a_e f_e(x)
    = \mathbf{1}\{x \in \{a_1, \ldots, a_K\}\}
      + \sum_{j=2}^{K} \mathbf{1}\{x = b_j\}
    = 1
    = F_{\mathrm{MoE}}(x)
\end{align}
for every $x \in \mathcal{X}$, so $\mathcal{L}(S_{\mathrm{good}}) = 0$.

To identify the log-determinant maximizer, let
\begin{align}
    \alpha := \mathcal{K}_{11}
    = I_A \, \mathbb{E}[f_1(x)^2]
    = \left(\frac{K}{2K-1}\right)^{3/2},
    \qquad
    \beta := \mathcal{K}_{K+1,K+1}
    = I_B \, \mathbb{E}[f_{K+1}(x)^2]
    = \left(\frac{1}{2K-1}\right)^{3/2}.
\end{align}
Because the first block of experts is identical and the specialist block is orthogonal to it and to itself off-diagonal, every size-$K$ subset $S$ is characterized by
\begin{align}
    t := |S \cap \{1, \ldots, K\}| \in \{1, \ldots, K\},
\end{align}
and the nonzero eigenvalues of $\boldsymbol{\mathcal{K}}_S$ are
\begin{align}
    t\alpha
    \qquad \text{and} \qquad
    \underbrace{\beta, \ldots, \beta}_{K-t \text{ times}}.
\end{align}
The remaining $t-1$ eigenvalues are zero.  Setting $\lambda_{\mathrm{reg}} := \beta$, we obtain
\begin{align}
    \det(\boldsymbol{\mathcal{K}}_S + \lambda_{\mathrm{reg}} \mathbf{I})
    = (\beta + t\alpha)\, \beta^{\,t-1} (2\beta)^{K-t}
    = \beta^{K-1} 2^{K-t} (\beta + t\alpha).
\end{align}
Let $D(t) := \beta^{K-1} 2^{K-t} (\beta + t\alpha)$.  For every $t \geq 1$,
\begin{align}
    \frac{D(t+1)}{D(t)}
    = \frac{\beta + (t+1)\alpha}{2(\beta + t\alpha)}
    < 1
\end{align}
because
\begin{align}
    \beta + (t+1)\alpha < 2\beta + 2t\alpha
    \iff (1-t)\alpha < \beta,
\end{align}
and the right-hand side is true for all $t \geq 1$ since $\beta > 0$.  Thus $D(t)$ is strictly decreasing in $t$, so every maximizer of the log-determinant objective has $t=1$.  Every such maximizer has the form
\begin{align}
    \{i\} \cup \{K+1, \ldots, 2K-1\}
    \qquad \text{for some } i \in \{1, \ldots, K\}.
\end{align}
Because all first-block experts are identical, each of these maximizing subsets has the same reconstruction error as $S_{\mathrm{good}}$, namely zero.  Consequently, any size-$K$ subset maximizing the log-determinant objective achieves zero reconstruction error.
\end{proof}

\subsection{Submodularity and greedy guarantee (Proof of Proposition~\ref{prop:greedy})}
\label{app:proof_submodularity}

\begin{proof}
Fix $S \subseteq [E]$ and $e \notin S$.  Let $A_S := \boldsymbol{\mathcal{K}}_S + \lambda_{\mathrm{reg}} \mathbf{I}$ and define the Schur complement $\sigma_e(S) := \mathcal{K}_{ee} + \lambda_{\mathrm{reg}} - \boldsymbol{\mathcal{K}}_{eS} A_S^{-1} \boldsymbol{\mathcal{K}}_{Se}$.  By the block-determinant formula:
\begin{align}
    \det\!\left(\boldsymbol{\mathcal{K}}_{S \cup \{e\}} + \lambda_{\mathrm{reg}} \mathbf{I}\right)
    = \det(A_S) \cdot \sigma_e(S).
\end{align}
Now
\begin{align}
    \begin{bmatrix}
        A_S & \boldsymbol{\mathcal{K}}_{Se} \\
        \boldsymbol{\mathcal{K}}_{eS} & \mathcal{K}_{ee}
    \end{bmatrix}
    =
    \begin{bmatrix}
        \boldsymbol{\mathcal{K}}_S & \boldsymbol{\mathcal{K}}_{Se} \\
        \boldsymbol{\mathcal{K}}_{eS} & \mathcal{K}_{ee}
    \end{bmatrix}
    +
    \begin{bmatrix}
        \lambda_{\mathrm{reg}} \mathbf{I} & 0 \\
        0 & 0
    \end{bmatrix}
    \succeq 0,
\end{align}
because both summands are positive semidefinite and $A_S \succ 0$.  Taking the Schur complement with respect to the positive-definite block $A_S$ gives
\begin{align}
    \mathcal{K}_{ee} - \boldsymbol{\mathcal{K}}_{eS} A_S^{-1} \boldsymbol{\mathcal{K}}_{Se} \geq 0.
\end{align}
Therefore $\sigma_e(S) \geq \lambda_{\mathrm{reg}}$, and hence
\begin{align}
    \widetilde{F}(S \cup \{e\}) - \widetilde{F}(S)
    = \log\!\left(\frac{\sigma_e(S)}{\lambda_{\mathrm{reg}}}\right)
    \geq 0,
\end{align}
which proves monotonicity.

For diminishing returns, fix $S \subseteq T \subseteq [E]$ and $e \notin T$.  We show $\sigma_e(S) \geq \sigma_e(T)$, i.e., $\boldsymbol{\mathcal{K}}_{eT} A_T^{-1} \boldsymbol{\mathcal{K}}_{Te} \geq \boldsymbol{\mathcal{K}}_{eS} A_S^{-1} \boldsymbol{\mathcal{K}}_{Se}$.  By the variational characterization of quadratic forms with positive-definite matrices:
\begin{align}
    \boldsymbol{\mathcal{K}}_{eT} A_T^{-1} \boldsymbol{\mathcal{K}}_{Te}
    = \max_{z \in \mathbb{R}^{|T|}} \bigl\{2z^\top \boldsymbol{\mathcal{K}}_{Te} - z^\top A_T z\bigr\}.
\end{align}
Restricting $z$ to have support only on $S$ (i.e., $z = (z_S, \mathbf{0}_{T \setminus S})$), the $S \times S$ principal subblock of $A_T$ is $A_S$, so the restricted maximum equals $\boldsymbol{\mathcal{K}}_{eS} A_S^{-1} \boldsymbol{\mathcal{K}}_{Se}$.  Since the unrestricted maximum is at least the restricted one:
\begin{align}
    \boldsymbol{\mathcal{K}}_{eT} A_T^{-1} \boldsymbol{\mathcal{K}}_{Te} \geq \boldsymbol{\mathcal{K}}_{eS} A_S^{-1} \boldsymbol{\mathcal{K}}_{Se}.
\end{align}
Hence $\widetilde{F}(S \cup \{e\}) - \widetilde{F}(S) = \log(\sigma_e(S)/\lambda_{\mathrm{reg}}) \geq \log(\sigma_e(T)/\lambda_{\mathrm{reg}}) = \widetilde{F}(T \cup \{e\}) - \widetilde{F}(T)$, which is submodularity.  The $(1{-}1/e)$ greedy bound follows from~\citet{nemhauser1978analysis}.
\end{proof}

\subsection{Incoherence bound (Proof of Theorem~\ref{thm:incoherence})}
\label{app:proof_incoherence}

\begin{definition}[Mutual coherence]
For $i \neq j$ with $\mathcal{K}_{ii}\mathcal{K}_{jj} > 0$, define $\rho_{ij} := \mathcal{K}_{ij} / \sqrt{\mathcal{K}_{ii}\mathcal{K}_{jj}}$ and $\mu := \max_{i \neq j} |\rho_{ij}|$.
\end{definition}

\begin{proof}
Fix $S$ with $|S| = K$.  Define the diagonal proxy $G(S) := \sum_{e \in S} \log(\mathcal{K}_{ee} + \lambda_{\mathrm{reg}})$ and the matrices:
\begin{align}
    B_S &:= \mathrm{diag}(\mathcal{K}_{ee} + \lambda_{\mathrm{reg}})_{e \in S}, &
    R_S &:= B_S^{-1/2}(\boldsymbol{\mathcal{K}}_S - \mathrm{diag}(\mathcal{K}_{ee})_{e \in S}) B_S^{-1/2}.
\end{align}
Then $\boldsymbol{\mathcal{K}}_S + \lambda_{\mathrm{reg}}\mathbf{I} = B_S^{1/2}(\mathbf{I} + R_S)B_S^{1/2}$, so $F(S) = G(S) + \log\det(\mathbf{I} + R_S)$.

The matrix $R_S$ is symmetric with zero diagonal, and every off-diagonal entry has magnitude at most $\mu$.  By Gershgorin's theorem, every eigenvalue of $R_S$ lies in $[-(K{-}1)\mu,\, (K{-}1)\mu]$.  Since $(K{-}1)\mu < 1$ by assumption, $\mathbf{I} + R_S$ is positive definite with eigenvalues in $[1{-}(K{-}1)\mu,\, 1{+}(K{-}1)\mu]$.  Taking logarithms and summing over the $K$ eigenvalues:
\begin{align}
    K\log\!\bigl(1 - (K{-}1)\mu\bigr) \leq \log\det(\mathbf{I} + R_S) \leq K\log\!\bigl(1 + (K{-}1)\mu\bigr).
\end{align}
Applying the upper bound to $S^\star = \argmax_{|S|=K} F(S)$ and the lower bound to $S_{\mathrm{diag}} = \argmax_{|S|=K} G(S)$, and using $G(S_{\mathrm{diag}}) \geq G(S^\star)$:
\begin{align}
    F(S_{\mathrm{diag}}) \geq F(S^\star) - K\log\!\left(\frac{1 + (K{-}1)\mu}{1 - (K{-}1)\mu}\right).
\end{align}
\end{proof}

\section{Additional theoretical results}
\label{app:theory_extensions}

\subsection{Finite-sample calibration guarantee}
\label{app:finite_sample}

In practice, the DO kernel is estimated from a finite calibration set using the paper's actual CP or ACP scores.  The
following theorem analyzes those estimators directly.

\begin{theorem}[Uniform stability of empirical DO-CP and DO-ACP]
\label{thm:finite_sample}
For each token $t$, let $\mathcal{S}(t) = \operatorname{top\text{-}k}\{p^{(e)}(t)\}_{e=1}^{E}$, with any fixed tie-breaking
rule, and define
\begin{align}
    z_e(t) &:= \mathbf{1}\{e \in \mathcal{S}(t)\}, \\
    q_e &:= \mathbb{E}[z_e(t)], &
    a_e &:= \mathbb{E}[z_e(t)\,p^{(e)}(t)], &
    v_e &:= \mathbb{E}[\|f_e(t)\|_2^2], \\
    \mathrm{CP}_e &:= \frac{a_e}{q_e}, &
    \mathrm{ACP}_e &:= \frac{a_e}{q_e}\sqrt{v_e}, &
    G_{ij} &:= \mathbb{E}[\langle f_i(t), f_j(t)\rangle].
\end{align}
Given i.i.d.\ calibration tokens $t_1,\dots,t_n \sim \mathcal{D}$, define empirical quantities
\begin{align}
    \widehat q_e &:= \frac{1}{n}\sum_{m=1}^{n} z_e(t_m), &
    \widehat a_e &:= \frac{1}{n}\sum_{m=1}^{n} z_e(t_m)\,p^{(e)}(t_m), \\
    \widehat v_e &:= \frac{1}{n}\sum_{m=1}^{n} \|f_e(t_m)\|_2^2, &
    \widehat G_{ij} &:= \frac{1}{n}\sum_{m=1}^{n} \langle f_i(t_m), f_j(t_m)\rangle.
\end{align}
Set
\begin{align}
    \widehat{\mathrm{CP}}_e
    &:=
    \begin{cases}
        \widehat a_e / \widehat q_e, & \widehat q_e > 0, \\
        0, & \widehat q_e = 0,
    \end{cases} \\
    \widehat{\mathrm{ACP}}_e
    &:=
    \widehat{\mathrm{CP}}_e \sqrt{\widehat v_e}.
\end{align}
Choose the base score either as DO-CP, with
\begin{align}
    I_e := \mathrm{CP}_e,
    \qquad
    \widehat I_e := \widehat{\mathrm{CP}}_e,
\end{align}
or as DO-ACP, with
\begin{align}
    I_e := \mathrm{ACP}_e,
    \qquad
    \widehat I_e := \widehat{\mathrm{ACP}}_e.
\end{align}
Let
\begin{align}
    \mathcal{K}_{ij} := \sqrt{I_i I_j}\,G_{ij},
    \qquad
    \widehat{\mathcal{K}}_{ij} := \sqrt{\widehat I_i \widehat I_j}\,\widehat G_{ij}.
\end{align}
For a target subset size $K$, set the population and empirical regularizers by the same diagonal rule as Section~\ref{sec:scoring}:
\begin{align}
    \lambda_{\mathrm{reg}}
    :=
    \frac{1}{K E}\sum_{e=1}^{E}\mathcal{K}_{ee},
    \qquad
    \widehat{\lambda}_{\mathrm{reg}}
    :=
    \frac{1}{K E}\sum_{e=1}^{E}\widehat{\mathcal{K}}_{ee},
\end{align}
Assume $\lambda_{\mathrm{reg}}>0$, and define
\begin{align}
    F(S) &:= \log\det(\boldsymbol{\mathcal{K}}_S + \lambda_{\mathrm{reg}}\mathbf{I}), &
    \widehat F(S) &:= \log\det(\widehat{\boldsymbol{\mathcal{K}}}_S + \widehat{\lambda}_{\mathrm{reg}}\mathbf{I}).
\end{align}

Assume $\|f_e(t)\|_2 \leq B_f$ almost surely for all $e \in [E]$, $q_e \geq q_{\min} > 0$ for all $e$, and
$I_e \in [I_{\min}, I_{\max}]$ for all $e$.  In the DO-ACP case, also assume $v_e \geq v_{\min} > 0$ for all $e$.
Define
\begin{align}
    r_n &:= \sqrt{\frac{2\log(8E^2/\delta)}{n}}, \\
    C_{\mathrm{CP}} &:= \frac{4}{q_{\min}}, \\
    C_{\mathrm{ACP}} &:= \frac{4B_f}{q_{\min}} + \frac{B_f^2}{\sqrt{2v_{\min}}}, \\
    C_I &:=
    \begin{cases}
        C_{\mathrm{CP}}, & \text{for DO-CP}, \\
        C_{\mathrm{ACP}}, & \text{for DO-ACP},
    \end{cases} \\
    C_h &:= B_f^2\sqrt{\frac{2I_{\max}}{I_{\min}} + 1} + I_{\max} + \frac{I_{\min}}{2}, \\
    \varepsilon_n &:= C_h \max\{C_I, B_f^2\}\,r_n, \\
    \Delta_n &:= \left(K + \frac{1}{K}\right)\varepsilon_n.
\end{align}
If
\begin{align}
    r_n \leq \frac{q_{\min}}{2},
    \qquad
    C_I r_n \leq \frac{I_{\min}}{2},
\end{align}
and, in the DO-ACP case,
\begin{align}
    B_f^2 r_n \leq \frac{v_{\min}}{2},
\end{align}
then with probability at least $1-\delta$:
\begin{align}
    \max_{i,j \in [E]} |\widehat{\mathcal{K}}_{ij} - \mathcal{K}_{ij}| \leq \varepsilon_n.
\end{align}
Consequently, if $\Delta_n < \lambda_{\mathrm{reg}}$, then
\begin{align}
    \bigl|\widehat{F}(S) - F(S)\bigr|
    \leq
    -K\log\!\left(1 - \frac{\Delta_n}{\lambda_{\mathrm{reg}}}\right)
    \qquad \text{for all } |S| = K,
\end{align}
and if $\widehat{S}$ maximizes $\widehat{F}$ while $S^\star$ maximizes $F$, then
\begin{align}
    F(\widehat{S})
    \geq
    F(S^\star)
    - K\log\!\left(
        \frac{1 + \Delta_n/\lambda_{\mathrm{reg}}}
             {1 - \Delta_n/\lambda_{\mathrm{reg}}}
    \right).
\end{align}
\end{theorem}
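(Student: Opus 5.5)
The argument has three stages: (i) a uniform entrywise concentration bound for the primitive empirical averages, (ii) a deterministic perturbation bound that propagates these errors through the nonlinear maps to $\widehat{\mathcal{K}}$, and (iii) a log-det perturbation bound on $\widehat F - F$ for every $|S|=K$. The subset comparison then follows by a standard sandwich.

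\textbf{Concentration.} Each of $z_e$, $z_e p^{(e)}$, $\|f_e\|^2/B_f^2$ and $\langle f_i,f_j\rangle/B_f^2$ lies in an interval of length at most $2$. Hoeffding's inequality plus a union bound over at most $2E + E + E^2 \le 4E^2$ scalar averages gives, with probability $1-\delta$,
\begin{align*}
|\widehat q_e - q_e| \le r_n, \qquad |\widehat a_e - a_e| \le r_n, \qquad |\widehat v_e - v_e| \le B_f^2 r_n, \qquad |\widehat G_{ij} - G_{ij}| \le B_f^2 r_n .
\end{align*}
The constant $8E^2$ inside $r_n$ is chosen to absorb this union bound.

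\textbf{Propagation to the scores.} Work on this event.
\begin{itemize}
\item Since $r_n \le q_{\min}/2$, we have $\widehat q_e \ge q_{\min}/2 > 0$. Writing
\begin{align*}
\widehat a/\widehat q - a/q = \frac{\widehat a - a}{\widehat q} + \frac{a\,(q - \widehat q)}{q\,\widehat q}
\end{align*}
and using $a_e \le q_e$, we get $|\widehat{\mathrm{CP}}_e - \mathrm{CP}_e| \le 4 r_n/q_{\min}$. This is $C_{\mathrm{CP}} r_n$.
\item For ACP, split
\begin{align*}
\widehat{\mathrm{CP}}\sqrt{\widehat v} - \mathrm{CP}\sqrt v = (\widehat{\mathrm{CP}} - \mathrm{CP})\sqrt{\widehat v} + \mathrm{CP}\,(\sqrt{\widehat v} - \sqrt v).
\end{align*}
Use $\sqrt{\widehat v} \le B_f$, $\mathrm{CP} \le 1$, and $|\sqrt{\widehat v} - \sqrt v| \le |\widehat v - v|/(\sqrt{\widehat v} + \sqrt v) \le B_f^2 r_n/\sqrt{2 v_{\min}}$; the last step uses $\widehat v \ge v_{\min}/2$, guaranteed by the extra hypothesis. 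This yields $C_{\mathrm{ACP}} r_n$.
\end{itemize}

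\textbf{Propagation to the kernel.} The condition $C_I r_n \le I_{\min}/2$ gives $\widehat I_e \ge I_{\min}/2$. Decompose
\begin{align*}
\widehat{\mathcal{K}}_{ij} - \mathcal{K}_{ij} = \bigl(\sqrt{\widehat I_i \widehat I_j} - \sqrt{I_i I_j}\bigr)\widehat G_{ij} + \sqrt{I_i I_j}\,\bigl(\widehat G_{ij} - G_{ij}\bigr).
\end{align*}
The second term is at most $I_{\max} B_f^2 r_n$. For the first term, write $\sqrt{xy}$-differences via $|\sqrt{\widehat I_i \widehat I_j} - \sqrt{I_i I_j}| \le |\widehat I_i \widehat I_j - I_i I_j|/\sqrt{I_i I_j}$. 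Bounding $|\widehat I_i \widehat I_j - I_i I_j|$ by $(2I_{\max} + C_I r_n)\,C_I r_n$ and $|\widehat G| \le B_f^2$ gives a bound of the form (constants)$\cdot \max\{C_I, B_f^2\}\, r_n$.

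\textbf{Main obstacle.} Matching the exact constant $C_h$, with its odd $\sqrt{2I_{\max}/I_{\min} + 1}$ and $I_{\min}/2$ terms, is the fiddly bookkeeping step. I would first aim for any constant of this form and then tighten it by bounding $\sqrt{\widehat I_i \widehat I_j} \le \sqrt{(I_{\max} + I_{\min}/2)^2}$ and similar elementary inequalities. Only $\max_{ij}|\widehat{\mathcal K}_{ij} - \mathcal K_{ij}| \le \varepsilon_n$ is needed downstream.

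\textbf{Log-det stability.} Write $\widehat{\mathcal{K}}_S + \widehat\lambda\mathbf{I} = (\mathcal{K}_S + \lambda\mathbf{I}) + P$, where
\begin{align*}
P := (\widehat{\mathcal{K}}_S - \mathcal{K}_S) + (\widehat\lambda - \lambda)\mathbf{I}.
\end{align*}
By Gershgorin (or Frobenius), $\|\widehat{\mathcal{K}}_S - \mathcal{K}_S\|_{\mathrm{op}} \le K\varepsilon_n$. Also $|\widehat\lambda - \lambda| \le \varepsilon_n/K$, since it is an average of $E$ diagonal errors divided by $K$. Hence $\|P\|_{\mathrm{op}} \le \Delta_n$.

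Since $\mathcal{K}_S + \lambda\mathbf{I} \succeq \lambda\mathbf{I}$, the matrix $M := A^{-1/2} P A^{-1/2}$ with $A := \mathcal{K}_S + \lambda\mathbf{I}$ has $\|M\|_{\mathrm{op}} \le \Delta_n/\lambda < 1$. Then
\begin{align*}
\widehat F(S) - F(S) = \log\det(\mathbf{I} + M) \in \bigl[K\log(1 - \Delta_n/\lambda),\, K\log(1 + \Delta_n/\lambda)\bigr].
\end{align*}
Because $\log(1+x) \le -\log(1-x)$ for $x \in [0,1)$, this gives the uniform bound $|\widehat F(S) - F(S)| \le -K\log(1 - \Delta_n/\lambda)$.

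\textbf{Subset comparison.} As in the proof of Theorem~\ref{thm:incoherence}, chain
\begin{align*}
F(\widehat S) \ge \widehat F(\widehat S) - K\log(1 + \Delta_n/\lambda) \ge \widehat F(S^\star) - K\log(1 + \Delta_n/\lambda) \ge F(S^\star) + K\log(1 - \Delta_n/\lambda) - K\log(1 + \Delta_n/\lambda),
\end{align*}
using the one-sided bounds from the interval above. This is the stated ratio bound.
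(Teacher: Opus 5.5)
Outside the kernel step your proposal is sound and follows the paper's structure: Hoeffding plus a union bound, ratio perturbation for $\widehat{\mathrm{CP}}$ and $\widehat{\mathrm{ACP}}$, an operator-norm bound of $K\varepsilon_n+\varepsilon_n/K$, and the sandwich. Your two local variants also work. A uniform range-$2$ Hoeffding bound over at most $4E^2$ averages uses exactly the $8E^2$ in $r_n$. The congruence $A^{-1/2}PA^{-1/2}$ is a valid replacement for the paper's Weyl-plus-$\sigma_i\ge\lambda_{\mathrm{reg}}$ argument.

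The genuine gap is the step that fixes the constant $C_h$, which you defer as bookkeeping. Your chain $|\sqrt{\widehat I_i\widehat I_j}-\sqrt{I_iI_j}|\le|\widehat I_i\widehat I_j-I_iI_j|/\sqrt{I_iI_j}\le(2I_{\max}+C_Ir_n)C_Ir_n/I_{\min}$ bounds the numerator by $I_{\max}$ and the denominator by $I_{\min}$ separately. The square-root difference therefore picks up a factor linear in $I_{\max}/I_{\min}$ rather than its square root. What your argument actually delivers is
\[
|\widehat{\mathcal K}_{ij}-\mathcal K_{ij}|\le\Bigl[B_f^2\Bigl(\tfrac{2I_{\max}}{I_{\min}}+\tfrac12\Bigr)+I_{\max}\Bigr]\max\{C_I,B_f^2\}\,r_n .
\]
For $x=2I_{\max}/I_{\min}\ge 2$ we have $x+\tfrac12-\sqrt{x+1}\ge\tfrac52-\sqrt3>\tfrac12$. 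So this constant is strictly larger than $C_h$ whenever $B_f^2\ge I_{\min}$, and the excess grows with $I_{\max}/I_{\min}$. Consequently the stated $\varepsilon_n$ is not established, and neither are $\Delta_n$, the hypothesis $\Delta_n<\lambda_{\mathrm{reg}}$, or the two log-det conclusions. Your proposed tightening, $\sqrt{\widehat I_i\widehat I_j}\le I_{\max}+I_{\min}/2$, only affects the coefficient of $\widehat G_{ij}-G_{ij}$, which is not where the loss occurs.

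The missing idea is a derivative bound for the geometric mean on the box where both true and estimated scores lie.
\begin{itemize}
\item The condition $C_Ir_n\le I_{\min}/2$ places both $(I_i,I_j)$ and $(\widehat I_i,\widehat I_j)$ in the convex box $[I_{\min}/2,\,I_{\max}+I_{\min}/2]^2$.
\item On this box, $|\partial_a\sqrt{ab}|=\tfrac12\sqrt{b/a}\le\tfrac12\sqrt{2I_{\max}/I_{\min}+1}$, and symmetrically in $b$.
\item The mean value theorem then gives $|\sqrt{\widehat I_i\widehat I_j}-\sqrt{I_iI_j}|\le\sqrt{2I_{\max}/I_{\min}+1}\,C_Ir_n$.
\item Multiplying by $|\widehat G_{ij}|\le B_f^2$ and adding $\sqrt{I_iI_j}\,|\widehat G_{ij}-G_{ij}|\le I_{\max}B_f^2r_n$ gives at most $C_h\max\{C_I,B_f^2\}r_n$.
\end{itemize}
The paper does this in one step. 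It applies the mean value theorem to $h(a,b,c)=\sqrt{ab}\,c$ on $[I_{\min}/2,I_{\max}+I_{\min}/2]^2\times[-B_f^2,B_f^2]$ and sums the three partial-derivative bounds. That is exactly where the terms $B_f^2\sqrt{2I_{\max}/I_{\min}+1}$ and $I_{\max}+I_{\min}/2$ in $C_h$ come from.
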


\begin{proof}
We proceed in four steps.  Write $\lambda := \lambda_{\mathrm{reg}}$ for brevity.

\paragraph{Step 1: Uniform concentration of the empirical moments.}
For each $e \in [E]$, the random variables $z_e(t)$ and $z_e(t)p^{(e)}(t)$ lie in $[0,1]$, while
$\|f_e(t)\|_2^2$ lies in $[0,B_f^2]$.  Also, for every $i,j \in [E]$,
\begin{align}
    |\langle f_i(t), f_j(t)\rangle|
    \leq \|f_i(t)\|_2 \|f_j(t)\|_2
    \leq B_f^2
\end{align}
almost surely.  Therefore Hoeffding's inequality gives
\begin{align}
    \Pr\!\left(
        \max_{e \in [E]} |\widehat q_e - q_e| > r_n
    \right)
    &\leq 2E \exp(-2nr_n^2), \\
    \Pr\!\left(
        \max_{e \in [E]} |\widehat a_e - a_e| > r_n
    \right)
    &\leq 2E \exp(-2nr_n^2), \\
    \Pr\!\left(
        \max_{e \in [E]} |\widehat v_e - v_e| > B_f^2 r_n
    \right)
    &\leq 2E \exp(-2nr_n^2), \\
    \Pr\!\left(
        \max_{i,j \in [E]} |\widehat G_{ij} - G_{ij}| > B_f^2 r_n
    \right)
    &\leq 2E^2 \exp\!\left(-\frac{nr_n^2}{2}\right).
\end{align}
By the definition of $r_n$ and a union bound over these four events, with probability at least $1-\delta$ we are on an
event $\Omega$ such that
\begin{align}
    \max_{e \in [E]} |\widehat q_e - q_e| &\leq r_n, &
    \max_{e \in [E]} |\widehat a_e - a_e| &\leq r_n, \\
    \max_{e \in [E]} |\widehat v_e - v_e| &\leq B_f^2 r_n, &
    \max_{i,j \in [E]} |\widehat G_{ij} - G_{ij}| &\leq B_f^2 r_n.
\end{align}

\paragraph{Step 2: Concentration of the paper's actual CP and ACP scores.}
Fix the event $\Omega$.  Since $r_n \leq q_{\min}/2$ and $q_e \geq q_{\min}$, we have $\widehat q_e \geq q_{\min}/2 > 0$
for every $e$, so $\widehat{\mathrm{CP}}_e$ is well-defined on $\Omega$.
Moreover, $0 \leq a_e \leq q_e$ because $0 \leq p^{(e)}(t) \leq 1$, so
\begin{align}
    |\widehat{\mathrm{CP}}_e - \mathrm{CP}_e|
    &= \left| \frac{\widehat a_e}{\widehat q_e} - \frac{a_e}{q_e} \right| \\
    &\leq \frac{|\widehat a_e - a_e|}{\widehat q_e}
        + a_e \left| \frac{1}{\widehat q_e} - \frac{1}{q_e} \right| \\
    &\leq \frac{2r_n}{q_{\min}}
        + q_e \frac{|\widehat q_e - q_e|}{q_e \widehat q_e} \\
    &\leq \frac{2r_n}{q_{\min}} + \frac{2r_n}{q_{\min}}
    = C_{\mathrm{CP}} r_n.
\end{align}
This proves the score concentration bound for DO-CP.

In the DO-ACP case, the condition $B_f^2 r_n \leq v_{\min}/2$ implies $\widehat v_e \geq v_{\min}/2$ for every $e$.
Since the derivative of $x \mapsto \sqrt{x}$ is bounded by $1/\sqrt{2v_{\min}}$ on $[v_{\min}/2,\infty)$, the mean-value
theorem gives
\begin{align}
    |\sqrt{\widehat v_e} - \sqrt{v_e}|
    \leq \frac{|\widehat v_e - v_e|}{\sqrt{2v_{\min}}}
    \leq \frac{B_f^2}{\sqrt{2v_{\min}}} r_n.
\end{align}
Also $\widehat v_e \leq B_f^2$, hence $\sqrt{\widehat v_e} \leq B_f$, and $\mathrm{CP}_e \leq 1$.  Therefore
\begin{align}
    |\widehat{\mathrm{ACP}}_e - \mathrm{ACP}_e|
    &= \left| \widehat{\mathrm{CP}}_e \sqrt{\widehat v_e} - \mathrm{CP}_e \sqrt{v_e} \right| \\
    &\leq \sqrt{\widehat v_e}\,|\widehat{\mathrm{CP}}_e - \mathrm{CP}_e|
        + \mathrm{CP}_e\,|\sqrt{\widehat v_e} - \sqrt{v_e}| \\
    &\leq B_f C_{\mathrm{CP}} r_n + \frac{B_f^2}{\sqrt{2v_{\min}}} r_n
    = C_{\mathrm{ACP}} r_n.
\end{align}
Thus, in either score choice,
\begin{align}
    \max_{e \in [E]} |\widehat I_e - I_e| \leq C_I r_n.
\end{align}

\paragraph{Step 3: Entrywise kernel concentration.}
Because $C_I r_n \leq I_{\min}/2$ and $I_e \geq I_{\min}$, we have
\begin{align}
    \widehat I_e \in \left[\frac{I_{\min}}{2},\, I_{\max} + \frac{I_{\min}}{2}\right]
    \qquad \text{for every } e \in [E].
\end{align}
Consider the map $h(a,b,c) := \sqrt{ab}\,c$ on the compact domain
\begin{align}
    D := \left[\frac{I_{\min}}{2},\, I_{\max} + \frac{I_{\min}}{2}\right]^2 \times [-B_f^2, B_f^2].
\end{align}
On $D$, its partial derivatives satisfy
\begin{align}
    \left|\frac{\partial h}{\partial a}\right|,
    \left|\frac{\partial h}{\partial b}\right|
    &\leq \frac{B_f^2}{2}\sqrt{\frac{2I_{\max}}{I_{\min}} + 1}, \\
    \left|\frac{\partial h}{\partial c}\right|
    &\leq I_{\max} + \frac{I_{\min}}{2}.
\end{align}
By the mean-value theorem,
\begin{align}
    |\widehat{\mathcal{K}}_{ij} - \mathcal{K}_{ij}|
    &=
    |h(\widehat I_i,\widehat I_j,\widehat G_{ij}) - h(I_i,I_j,G_{ij})| \\
    &\leq
    C_h \max\!\left\{
        |\widehat I_i - I_i|,
        |\widehat I_j - I_j|,
        |\widehat G_{ij} - G_{ij}|
    \right\} \\
    &\leq C_h \max\{C_I, B_f^2\}\,r_n
    = \varepsilon_n
\end{align}
for all $i,j \in [E]$.

\paragraph{Step 4: Transfer to the log-det objective.}
The diagonal regularizer is stable under the same entrywise bound:
\begin{align}
    |\widehat{\lambda}_{\mathrm{reg}} - \lambda_{\mathrm{reg}}|
    \leq
    \frac{\varepsilon_n}{K}.
\end{align}
For any $S$ with $|S| = K$, the matrix $\widehat{\boldsymbol{\mathcal{K}}}_S - \boldsymbol{\mathcal{K}}_S$ is $K \times K$
and has entrywise bound $\varepsilon_n$, so
\begin{align}
    \left\|
    \widehat{\boldsymbol{\mathcal{K}}}_S + \widehat{\lambda}_{\mathrm{reg}}\mathbf{I}
    - \boldsymbol{\mathcal{K}}_S - \lambda_{\mathrm{reg}}\mathbf{I}
    \right\|_{\mathrm{op}}
    \leq K\varepsilon_n + \frac{\varepsilon_n}{K}
    = \Delta_n.
\end{align}
Let $\sigma_1,\dots,\sigma_K$ be the eigenvalues of $\boldsymbol{\mathcal{K}}_S + \lambda \mathbf{I}$ and
$\widehat{\sigma}_1,\dots,\widehat{\sigma}_K$ those of $\widehat{\boldsymbol{\mathcal{K}}}_S + \widehat{\lambda}_{\mathrm{reg}} \mathbf{I}$.
Weyl's inequality gives
\begin{align}
    |\widehat{\sigma}_i - \sigma_i| \leq \Delta_n
    \qquad \text{for each } i \in [K].
\end{align}
Since $\Delta_n < \lambda$, every $\widehat{\sigma}_i$ is positive.  Therefore
\begin{align}
    \widehat{F}(S) - F(S)
    &= \sum_{i=1}^{K}\log\!\left(\frac{\widehat{\sigma}_i}{\sigma_i}\right)
    = \sum_{i=1}^{K}\log\!\left(1 + \frac{\widehat{\sigma}_i - \sigma_i}{\sigma_i}\right).
\end{align}
Because $\sigma_i \geq \lambda$ and $|\widehat{\sigma}_i - \sigma_i| \leq \Delta_n$, we have
\begin{align}
    K\log\!\left(1 - \frac{\Delta_n}{\lambda}\right)
    \leq \widehat{F}(S) - F(S)
    \leq
    K\log\!\left(1 + \frac{\Delta_n}{\lambda}\right).
\end{align}
Taking absolute values yields
\begin{align}
    \bigl|\widehat{F}(S) - F(S)\bigr|
    \leq
    -K\log\!\left(1 - \frac{\Delta_n}{\lambda}\right).
\end{align}
Applying the one-sided bounds to $\widehat{S}$ and $S^\star$ gives
\begin{align}
    F(\widehat{S})
    &\geq \widehat{F}(\widehat{S}) - K\log\!\left(1 + \frac{\Delta_n}{\lambda}\right) \\
    &\geq \widehat{F}(S^\star) - K\log\!\left(1 + \frac{\Delta_n}{\lambda}\right) \\
    &\geq F(S^\star) + K\log\!\left(1 - \frac{\Delta_n}{\lambda}\right)
      - K\log\!\left(1 + \frac{\Delta_n}{\lambda}\right),
\end{align}
which is exactly the claimed transfer bound.
\end{proof}

\subsection{Grouping recovery}
\label{app:theory_grouping}

This subsection analyzes the specific OC variant defined in Appendix~\ref{app:grouping_defs}: average-linkage agglomerative
clustering on empirical cosine dissimilarities.

Fix a selected expert set $S \subseteq [E]$ with $|S| = K$ and a target group count $G$.
Define the population second moments
\begin{align}
    M_{ij} := \mathbb{E}[\langle f_i(t), f_j(t)\rangle],
    \qquad i,j \in S,
\end{align}
and assume throughout this subsection that
\begin{align}
    M_{ee} > 0 \qquad \text{for every } e \in S,
\end{align}
so that the population cosine similarities below are well defined.  Define
\begin{align}
    \rho_{ij} := \frac{M_{ij}}{\sqrt{M_{ii}M_{jj}}},
    \qquad i,j \in S,
\end{align}
the population cosine similarities, and
\begin{align}
    d(i,j) := 1 - \rho_{ij}.
\end{align}
For two nonempty disjoint clusters $C,C' \subseteq S$, define the average-linkage distance
\begin{align}
    d_{\mathrm{avg}}(C,C')
    :=
    \frac{1}{|C|\,|C'|}
    \sum_{i \in C}\sum_{j \in C'} d(i,j).
\end{align}

\begin{definition}[Average-linkage output clustering]
Starting from the singleton partition of $S$, repeatedly merge the pair of clusters with minimum
$d_{\mathrm{avg}}(C,C')$.  Stop when exactly $G$ clusters remain.
\end{definition}

\begin{assumption}[Output-space separation]
\label{assump:oc-sep}
There exists a partition $\mathcal{P}^\star = \{\mathcal{G}_1^\star,\dots,\mathcal{G}_G^\star\}$ of $S$ and constants
$\Delta_{\mathrm{in}}, \Delta_{\mathrm{out}}$ with $\Delta_{\mathrm{in}} < \Delta_{\mathrm{out}}$ such that
\begin{align}
    \max_{g \in [G]} \max_{i,j \in \mathcal{G}_g^\star} d(i,j)
    \leq \Delta_{\mathrm{in}}
    <
    \Delta_{\mathrm{out}}
    \leq
    \min_{g \neq h} \min_{\substack{i \in \mathcal{G}_g^\star \\ j \in \mathcal{G}_h^\star}} d(i,j).
\end{align}
\end{assumption}

\begin{theorem}[Exact recovery of output clustering]
\label{thm:oc-exact}
Assume Assumption~\ref{assump:oc-sep}.  Then average-linkage output clustering returns $\mathcal{P}^\star$
(up to permutation of the group labels).
\end{theorem}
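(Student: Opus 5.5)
The proof will be an induction over the merge steps of the agglomerative procedure, maintaining the following invariant. After every merge performed so far, each current cluster is contained in a single true group $\mathcal{G}_g^\star$, so the current partition refines $\mathcal{P}^\star$. The invariant holds trivially at the start, since the partition is the singleton partition.

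Two elementary consequences of Assumption~\ref{assump:oc-sep} drive the argument. First, if $C$ and $C'$ are nonempty disjoint clusters contained in the same true group, then every pairwise dissimilarity $d(i,j)$ with $i\in C$ and $j\in C'$ is at most $\Delta_{\mathrm{in}}$. Hence $d_{\mathrm{avg}}(C,C')\le \Delta_{\mathrm{in}}$, because an average never exceeds a maximum. Second, if $C\subseteq\mathcal{G}_g^\star$ and $C'\subseteq\mathcal{G}_h^\star$ with $g\neq h$, then every cross pair has $d(i,j)\ge\Delta_{\mathrm{out}}$. Hence $d_{\mathrm{avg}}(C,C')\ge\Delta_{\mathrm{out}}>\Delta_{\mathrm{in}}$. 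Both statements need the invariant, which guarantees that every current cluster lies inside one true group.

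For the inductive step, suppose the current partition refines $\mathcal{P}^\star$ and has $m>G$ clusters. Because it refines a partition with exactly $G$ blocks and has more than $G$ clusters, the pigeonhole principle gives some true group containing at least two current clusters. So at least one "within-group" pair exists, and by the first consequence its linkage is at most $\Delta_{\mathrm{in}}$. Every cross-group pair has linkage at least $\Delta_{\mathrm{out}}>\Delta_{\mathrm{in}}$. The minimizing pair is therefore a within-group pair, and this holds under any tie-breaking, since the inequality is strict. Merging that pair keeps every cluster inside a single true group, so the invariant is preserved.

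Each merge reduces the number of clusters by exactly one, so the procedure stops after $K-G$ merges with exactly $G$ clusters. At that point the partition refines $\mathcal{P}^\star$ and has the same number of blocks, $G$. Each true group is nonempty and is a union of current clusters, so each true group contains at least one cluster. With $G$ clusters and $G$ groups, each group contains exactly one cluster, so the final partition equals $\mathcal{P}^\star$ up to labeling.

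The only delicate point is the pigeonhole step, which guarantees that a within-group candidate exists whenever more than $G$ clusters remain. This uses that $\mathcal{P}^\star$ has exactly $G$ nonempty blocks, which is implicit in calling it a partition of $S$ into $G$ groups. Everything else follows from the fact that average linkage is sandwiched between the minimum and maximum pairwise dissimilarity.
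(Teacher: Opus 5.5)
Your proof is correct and follows essentially the same route as the paper. Both arguments use cluster-level separation (average linkage lies between the minimum and maximum pairwise dissimilarity), an induction on merges that preserves the invariant that every cluster lies in one true group, and termination at $G$ clusters; your explicit pigeonhole step, which guarantees a within-group candidate pair whenever more than $G$ clusters remain, spells out a point the paper's Step~3 leaves implicit.
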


\begin{proof}
\emph{Step 1: Cluster-level separation.}
Let $C \subseteq \mathcal{G}_g^\star$ and $C' \subseteq \mathcal{G}_g^\star$ be disjoint clusters contained in the same true
group.  Then every pair $(i,j) \in C \times C'$ satisfies $d(i,j) \leq \Delta_{\mathrm{in}}$, hence
\begin{align}
    d_{\mathrm{avg}}(C,C') \leq \Delta_{\mathrm{in}}.
\end{align}
Similarly, if $C \subseteq \mathcal{G}_g^\star$ and $C' \subseteq \mathcal{G}_h^\star$ with $g \neq h$, then every pair
$(i,j) \in C \times C'$ satisfies $d(i,j) \geq \Delta_{\mathrm{out}}$, so
\begin{align}
    d_{\mathrm{avg}}(C,C') \geq \Delta_{\mathrm{out}}.
\end{align}

\emph{Step 2: Induction on the merge steps.}
Initially every cluster is a singleton and is therefore contained in some true group.  Suppose inductively that, at a given
iteration, every current cluster is contained in a true group.  If a true group $\mathcal{G}_g^\star$ is represented by at
least two current clusters $C_1,C_2 \subseteq \mathcal{G}_g^\star$, then Step~1 gives
\begin{align}
    d_{\mathrm{avg}}(C_1,C_2) \leq \Delta_{\mathrm{in}}.
\end{align}
By contrast, any pair of clusters drawn from different true groups has average-linkage distance at least
$\Delta_{\mathrm{out}}$.  Since $\Delta_{\mathrm{in}} < \Delta_{\mathrm{out}}$, the globally closest pair must lie within the
same true group.  Therefore the algorithm performs an intra-group merge, and the merged cluster remains contained in a true
group.  This preserves the induction hypothesis.

\emph{Step 3: Termination.}
By Step~2, the algorithm performs only intra-group merges until each true group has been merged into a single cluster.
At that moment exactly $G$ clusters remain, namely $\mathcal{G}_1^\star,\dots,\mathcal{G}_G^\star$.
\end{proof}

\begin{assumption}[Bounded outputs and nondegenerate norms for output clustering]
\label{assump:oc-bounded}
There exist constants $B_f,\sigma_{\min} > 0$ such that
\begin{align}
    \|f_e(t)\|_2 \leq B_f \quad \text{almost surely}
    \qquad \text{and} \qquad
    M_{ee} \geq \sigma_{\min}^2
\end{align}
for every $e \in S$.
\end{assumption}

\begin{lemma}[Uniform concentration of empirical cosine similarities]
\label{lem:oc-conc}
Assume Assumption~\ref{assump:oc-bounded}.  Given i.i.d.\ calibration tokens $t_1,\dots,t_n \sim \mathcal{D}$, define
\begin{align}
    \widehat M_{ij}
    &:= \frac{1}{n}\sum_{m=1}^{n} \langle f_i(t_m), f_j(t_m)\rangle, \\
    \widehat\rho_{ij}
    &:=
    \begin{cases}
        \widehat M_{ij} / \sqrt{\widehat M_{ii}\widehat M_{jj}}, & \widehat M_{ii}\widehat M_{jj} > 0, \\
        0, & \widehat M_{ii}\widehat M_{jj} = 0,
    \end{cases} \\
    \widehat d(i,j) &:= 1 - \widehat\rho_{ij}.
\end{align}
Let
\begin{align}
    C_{\mathrm{oc}} := \frac{2}{\sigma_{\min}^2} + \frac{4B_f^2}{\sigma_{\min}^4}.
\end{align}
Then for any $\tau \in (0,\sigma_{\min}^2/2]$,
\begin{align}
    \Pr\!\left(
        \max_{i,j \in S} |\widehat M_{ij} - M_{ij}| > \tau
        \;\text{or}\;
        \max_{i,j \in S} |\widehat\rho_{ij} - \rho_{ij}| > C_{\mathrm{oc}}\tau
    \right)
    \leq
    2|S|^2 \exp\!\left(-\frac{n\tau^2}{2B_f^4}\right).
\end{align}
\end{lemma}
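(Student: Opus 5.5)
Hoeffding gives concentration of the empirical second-moment matrix $\widehat M$ uniformly over all pairs. A deterministic perturbation bound then converts entrywise control of $\widehat M$ into entrywise control of $\widehat\rho$, with the norm floor $\sigma_{\min}^2$ keeping the denominators away from zero.

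\emph{Step 1 (concentration of $\widehat M$).} For each pair $i,j\in S$, the summands $\langle f_i(t_m),f_j(t_m)\rangle$ are i.i.d. By Cauchy--Schwarz and Assumption~\ref{assump:oc-bounded} they lie in $[-B_f^2,B_f^2]$, a range of width $2B_f^2$. Hoeffding's inequality gives
\begin{align}
\Pr\bigl(|\widehat M_{ij}-M_{ij}|>\tau\bigr)\le 2\exp\!\left(-\frac{2n\tau^2}{4B_f^4}\right)=2\exp\!\left(-\frac{n\tau^2}{2B_f^4}\right).
\end{align}
A union bound over the at most $|S|^2$ ordered pairs, diagonal included, yields the stated probability for the event $\max_{i,j}|\widehat M_{ij}-M_{ij}|>\tau$. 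This matches the tail used for $\widehat G$ in the proof of Theorem~\ref{thm:finite_sample}. It then suffices to prove the deterministic inclusion: on the complement event $\max_{i,j}|\widehat M_{ij}-M_{ij}|\le\tau$, we have $\max_{i,j}|\widehat\rho_{ij}-\rho_{ij}|\le C_{\mathrm{oc}}\tau$. With this inclusion, the union ``or'' event is contained in the first event, and the bound follows.

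\emph{Step 2 (deterministic perturbation).} On the good event, $\tau\le\sigma_{\min}^2/2$ and $M_{ee}\ge\sigma_{\min}^2$ give $\widehat M_{ee}\ge\sigma_{\min}^2/2>0$. Hence the first branch of the definition of $\widehat\rho$ applies. Write $\widehat\rho_{ij}-\rho_{ij}$ as
\begin{align}
\frac{\widehat M_{ij}-M_{ij}}{\sqrt{\widehat M_{ii}\widehat M_{jj}}}+M_{ij}\left(\frac{1}{\sqrt{\widehat M_{ii}\widehat M_{jj}}}-\frac{1}{\sqrt{M_{ii}M_{jj}}}\right).
\end{align}
The first term is at most $\tau/(\sigma_{\min}^2/2)=2\tau/\sigma_{\min}^2$. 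For the second term, use $|M_{ij}|\le B_f^2$ and $|1/\sqrt{x}-1/\sqrt{y}|=|x-y|/\bigl(\sqrt{xy}(\sqrt x+\sqrt y)\bigr)$ with $x=\widehat M_{ii}\widehat M_{jj}$ and $y=M_{ii}M_{jj}$.

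Bound the numerator by
\begin{align}
|x-y|\le|\widehat M_{ii}-M_{ii}|\,\widehat M_{jj}+M_{ii}\,|\widehat M_{jj}-M_{jj}|.
\end{align}
For the denominator, use $x\ge\sigma_{\min}^4/4$ and $y\ge\sigma_{\min}^4$. The target constant $4B_f^2/\sigma_{\min}^4$ should come out of a careful combination of these. One option is to bound the ratio factor by factor: the factor $1/\sqrt{\widehat M_{ii}}-1/\sqrt{M_{ii}}$ is at most $\tau/(\sqrt{\widehat M_{ii}}\,M_{ii}\cdot\ldots)$, and $|M_{ij}|\le\sqrt{M_{ii}M_{jj}}$ is used to cancel one normalization. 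Using $|M_{ij}|\le\sqrt{M_{ii}M_{jj}}$ may in fact give a sharper constant, which is still dominated by $C_{\mathrm{oc}}$.

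\emph{Main obstacle.} The only delicate point is the constant bookkeeping in the second term, so that it lands at or below $4B_f^2\tau/\sigma_{\min}^4$. I would first try a factorized telescoping,
\begin{align}
\frac{1}{\sqrt{\widehat M_{ii}\widehat M_{jj}}}-\frac{1}{\sqrt{M_{ii}M_{jj}}}
=\frac{1}{\sqrt{\widehat M_{jj}}}\left(\frac{1}{\sqrt{\widehat M_{ii}}}-\frac{1}{\sqrt{M_{ii}}}\right)+\frac{1}{\sqrt{M_{ii}}}\left(\frac{1}{\sqrt{\widehat M_{jj}}}-\frac{1}{\sqrt{M_{jj}}}\right),
\end{align}
and bound each difference via the mean value theorem for $x^{-1/2}$ on $[\sigma_{\min}^2/2,\infty)$, where the derivative is at most $\tfrac12(\sigma_{\min}^2/2)^{-3/2}$. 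If this loose route overshoots, I would fall back on the $|x-y|$ form above with the crude bounds $\widehat M_{jj}\le B_f^2$ and $M_{ii}\le B_f^2$. Either way, the step is elementary, and any slack is absorbed since the lemma only claims an upper bound.
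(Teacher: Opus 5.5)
Your proposal is correct and follows the paper's proof step for step. Both use Hoeffding on the $[-B_f^2,B_f^2]$-bounded inner products with a union bound over $S\times S$, then the floor $\widehat M_{ee}\ge\sigma_{\min}^2/2$, the same two-term split of $\widehat\rho_{ij}-\rho_{ij}$ with $|M_{ij}|\le B_f^2$, and the same factorized telescoping using the $\sqrt2/\sigma_{\min}^3$-Lipschitz bound for $x^{-1/2}$ on $[\sigma_{\min}^2/2,\infty)$.

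Your first route already closes, so no fallback is needed. The bounds $1/\sqrt{\widehat M_{jj}}\le\sqrt2/\sigma_{\min}$ and $1/\sqrt{M_{ii}}\le1/\sigma_{\min}$ give $(2+\sqrt2)\tau/\sigma_{\min}^4\le4\tau/\sigma_{\min}^4$, exactly as in the paper. The crude fallback, which bounds $\widehat M_{jj}$, $M_{ii}$ and $|M_{ij}|$ all by $B_f^2$, would produce a $B_f^4\tau/\sigma_{\min}^6$ term that $C_{\mathrm{oc}}\tau$ does not dominate in general. So your remark that ``slack is absorbed'' holds only when the slack goes in the favorable direction.
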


\begin{proof}
Fix $i,j \in S$ and define $X_m^{(ij)} := \langle f_i(t_m), f_j(t_m)\rangle$.  By Cauchy--Schwarz and
Assumption~\ref{assump:oc-bounded},
\begin{align}
    |X_m^{(ij)}|
    \leq \|f_i(t_m)\|_2 \|f_j(t_m)\|_2
    \leq B_f^2
\end{align}
almost surely, and $\mathbb{E}[X_m^{(ij)}] = M_{ij}$.  Hoeffding's inequality therefore gives
\begin{align}
    \Pr\!\left(
        |\widehat M_{ij} - M_{ij}| > \tau
    \right)
    \leq
    2\exp\!\left(-\frac{n\tau^2}{2B_f^4}\right).
\end{align}
Applying a union bound over $(i,j) \in S \times S$, we obtain an event $\Omega_\tau$ of probability at least
\begin{align}
    1 - 2|S|^2\exp\!\left(-\frac{n\tau^2}{2B_f^4}\right)
\end{align}
on which
\begin{align}
    \max_{i,j \in S} |\widehat M_{ij} - M_{ij}| \leq \tau.
\end{align}
Fix this event.  Since $M_{ii} \geq \sigma_{\min}^2$ and $\tau \leq \sigma_{\min}^2/2$, we have
\begin{align}
    \widehat M_{ii} \geq M_{ii} - \tau \geq \sigma_{\min}^2/2 > 0
\end{align}
for every $i \in S$.  Hence $\widehat\rho_{ij} = \widehat M_{ij}/\sqrt{\widehat M_{ii}\widehat M_{jj}}$ on $\Omega_\tau$.
Now fix $i,j \in S$.  Using $\sqrt{\widehat M_{ii}\widehat M_{jj}} \geq \sigma_{\min}^2/2$,
\begin{align}
    |\widehat\rho_{ij} - \rho_{ij}|
    &\leq
    \frac{|\widehat M_{ij} - M_{ij}|}{\sqrt{\widehat M_{ii}\widehat M_{jj}}}
    +
    |M_{ij}|
    \left|
        \frac{1}{\sqrt{\widehat M_{ii}\widehat M_{jj}}}
        -
        \frac{1}{\sqrt{M_{ii}M_{jj}}}
    \right| \\
    &\leq
    \frac{2\tau}{\sigma_{\min}^2}
    +
    B_f^2
    \left|
        \frac{1}{\sqrt{\widehat M_{ii}\widehat M_{jj}}}
        -
        \frac{1}{\sqrt{M_{ii}M_{jj}}}
    \right|.
\end{align}
For the reciprocal term,
\begin{align}
    \left|
        \frac{1}{\sqrt{\widehat M_{ii}\widehat M_{jj}}}
        -
        \frac{1}{\sqrt{M_{ii}M_{jj}}}
    \right|
    &\leq
    \frac{1}{\sqrt{\widehat M_{jj}}}
    \left|
        \frac{1}{\sqrt{\widehat M_{ii}}}
        -
        \frac{1}{\sqrt{M_{ii}}}
    \right|
    +
    \frac{1}{\sqrt{M_{ii}}}
    \left|
        \frac{1}{\sqrt{\widehat M_{jj}}}
        -
        \frac{1}{\sqrt{M_{jj}}}
    \right|.
\end{align}
Since $x \mapsto x^{-1/2}$ has derivative $-\frac{1}{2}x^{-3/2}$, it is $\sqrt{2}/\sigma_{\min}^3$-Lipschitz on
$[\sigma_{\min}^2/2,\infty)$.  Therefore
\begin{align}
    \left|
        \frac{1}{\sqrt{\widehat M_{ii}}}
        -
        \frac{1}{\sqrt{M_{ii}}}
    \right|,
    \left|
        \frac{1}{\sqrt{\widehat M_{jj}}}
        -
        \frac{1}{\sqrt{M_{jj}}}
    \right|
    \leq
    \frac{\sqrt{2}}{\sigma_{\min}^3}\tau.
\end{align}
Using $1/\sqrt{\widehat M_{jj}} \leq \sqrt{2}/\sigma_{\min}$ and $1/\sqrt{M_{ii}} \leq 1/\sigma_{\min}$, we obtain
\begin{align}
    \left|
        \frac{1}{\sqrt{\widehat M_{ii}\widehat M_{jj}}}
        -
        \frac{1}{\sqrt{M_{ii}M_{jj}}}
    \right|
    \leq
    \frac{2+\sqrt{2}}{\sigma_{\min}^4}\tau
    \leq
    \frac{4}{\sigma_{\min}^4}\tau.
\end{align}
Combining the preceding bounds yields
\begin{align}
    |\widehat\rho_{ij} - \rho_{ij}|
    \leq
    \left(
        \frac{2}{\sigma_{\min}^2} + \frac{4B_f^2}{\sigma_{\min}^4}
    \right)\tau
    = C_{\mathrm{oc}}\tau.
\end{align}
This holds uniformly over $i,j \in S$ on $\Omega_\tau$.
\end{proof}

\begin{theorem}[Finite-sample recovery of output clustering]
\label{thm:oc-finite}
Assume Assumptions~\ref{assump:oc-sep} and~\ref{assump:oc-bounded}.  Let
\begin{align}
    \gamma := \Delta_{\mathrm{out}} - \Delta_{\mathrm{in}} > 0,
    \qquad
    \tau_\star := \min\!\left\{
        \frac{\sigma_{\min}^2}{2},
        \frac{\gamma}{4C_{\mathrm{oc}}}
    \right\}.
\end{align}
If
\begin{align}
    n \geq \frac{2B_f^4}{\tau_\star^2}\log\!\left(\frac{2|S|^2}{\delta}\right),
\end{align}
then average-linkage output clustering on the empirical dissimilarities $\widehat d(i,j)$ returns $\mathcal{P}^\star$
with probability at least $1-\delta$.
\end{theorem}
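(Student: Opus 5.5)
The plan is to combine the population recovery result, Theorem~\ref{thm:oc-exact}, with the uniform concentration bound of Lemma~\ref{lem:oc-conc}. The point is that average-linkage clustering only needs a strict gap between within-group and between-group dissimilarities. A uniform perturbation of size less than $\gamma/2$ in every pairwise dissimilarity preserves such a gap.

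\emph{Step 1: choose $\tau$ and invoke the lemma.} Set $\tau := \tau_\star$. Then $\tau \le \sigma_{\min}^2/2$, so Lemma~\ref{lem:oc-conc} applies and gives
\[
\Pr\bigl(\max_{i,j\in S}|\widehat\rho_{ij}-\rho_{ij}| > C_{\mathrm{oc}}\tau_\star\bigr) \le 2|S|^2\exp\!\left(-\frac{n\tau_\star^2}{2B_f^4}\right).
\]
Under the stated lower bound on $n$, the right-hand side is at most $\delta$. Hence with probability at least $1-\delta$ we have, for all $i,j\in S$,
\[
|\widehat d(i,j) - d(i,j)| = |\widehat\rho_{ij}-\rho_{ij}| \le C_{\mathrm{oc}}\tau_\star \le \gamma/4.
\]

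\emph{Step 2: transfer the separation.} Work on this event. For $i,j$ in the same true group,
\[
\widehat d(i,j) \le \Delta_{\mathrm{in}} + \gamma/4 =: \widehat\Delta_{\mathrm{in}}.
\]
For $i,j$ in different groups,
\[
\widehat d(i,j) \ge \Delta_{\mathrm{out}} - \gamma/4 =: \widehat\Delta_{\mathrm{out}}.
\]
Then $\widehat\Delta_{\mathrm{out}} - \widehat\Delta_{\mathrm{in}} = \gamma/2 > 0$. So Assumption~\ref{assump:oc-sep} holds for the empirical dissimilarities $\widehat d$ with the same partition $\mathcal{P}^\star$.

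\emph{Step 3: conclude.} The proof of Theorem~\ref{thm:oc-exact} uses only the separation inequality for an arbitrary dissimilarity function, through the cluster-level averaging bounds, induction and termination. It never uses the particular form $d = 1-\rho$. So it applies verbatim to $\widehat d$ and returns $\mathcal{P}^\star$. I expect the main obstacle to be mostly bookkeeping. One must confirm that Theorem~\ref{thm:oc-exact} does not rely on properties of population quantities such as symmetry or diagonal structure that might fail empirically. $\widehat d$ is symmetric, and only off-diagonal pairs matter, so this should be fine. One must also check the slack in the constant: $\gamma/4$ is used where any amount below $\gamma/2$ would suffice.
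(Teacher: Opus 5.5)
Your proposal is correct and matches the paper's proof essentially step for step: you apply Lemma~\ref{lem:oc-conc} at $\tau=\tau_\star$ to get $|\widehat d(i,j)-d(i,j)|\le C_{\mathrm{oc}}\tau_\star\le\gamma/4$ uniformly with probability at least $1-\delta$, note that $\widehat d$ then satisfies the separation assumption with thresholds $\Delta_{\mathrm{in}}+\gamma/4<\Delta_{\mathrm{out}}-\gamma/4$, and apply Theorem~\ref{thm:oc-exact} to $\widehat d$. You also spell out two points the paper uses only implicitly: that the bound on $n$ makes the failure probability at most $\delta$, and that the exact-recovery argument depends only on the separation property of the dissimilarity.
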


\begin{proof}
By Lemma~\ref{lem:oc-conc} with $\tau = \tau_\star$, the stated lower bound on $n$ guarantees that with probability
at least $1-\delta$,
\begin{align}
    \max_{i,j \in S} |\widehat\rho_{ij} - \rho_{ij}|
    \leq C_{\mathrm{oc}}\tau_\star
    \leq \frac{\gamma}{4}.
\end{align}
Fix this event and set $\varepsilon := \gamma/4$.  Then
\begin{align}
    |\widehat d(i,j) - d(i,j)| = |\widehat\rho_{ij} - \rho_{ij}| \leq \varepsilon
    \qquad \text{for all } i,j \in S.
\end{align}
Hence, if $i,j$ lie in the same true group, Assumption~\ref{assump:oc-sep} gives
\begin{align}
    \widehat d(i,j) \leq d(i,j) + \varepsilon \leq \Delta_{\mathrm{in}} + \varepsilon.
\end{align}
If $i,j$ lie in different true groups, then
\begin{align}
    \widehat d(i,j) \geq d(i,j) - \varepsilon \geq \Delta_{\mathrm{out}} - \varepsilon.
\end{align}
Since $\varepsilon = \gamma/4$, we have
\begin{align}
    \Delta_{\mathrm{in}} + \varepsilon
    <
    \Delta_{\mathrm{out}} - \varepsilon.
\end{align}
Thus the empirical dissimilarity $\widehat d$ satisfies the same strict separation property as in
Assumption~\ref{assump:oc-sep}, with thresholds $\Delta_{\mathrm{in}} + \varepsilon$ and
$\Delta_{\mathrm{out}} - \varepsilon$.  Theorem~\ref{thm:oc-exact} therefore applies to $\widehat d$ and yields exact
recovery of $\mathcal{P}^\star$.
\end{proof}

\subsection{Merging optimality}
\label{app:theory_merging}

Fix a selected subset $S \subseteq [E]$ and a partition $\{\mathcal{G}_g\}_{g=1}^{G}$ of $S$.
View each expert output $f_e$ as an element of the Hilbert space
\begin{align}
    \mathcal{H} := L^2(\mathcal{D}; \mathbb{R}^d)
\end{align}
with inner product
\begin{align}
    \langle u, v \rangle_{\mathcal{H}}
    :=
    \mathbb{E}_{t \sim \mathcal{D}}[\langle u(t), v(t)\rangle].
\end{align}

\begin{theorem}[Score-weighted output averaging is the oracle proxy representative]
\label{thm:oracle_merge}
Let $s^{(e)} \geq 0$ be the expert scores used for within-group weighting, and assume that every group has strictly positive
score mass
\begin{align}
    S_g := \sum_{e \in \mathcal{G}_g} s^{(e)} > 0.
\end{align}
For group representatives $\mu_1,\dots,\mu_G \in \mathcal{H}$, define the quadratic merge distortion
\begin{align}
    \mathcal{L}(\mu_1,\dots,\mu_G)
    :=
    \sum_{g=1}^{G}\sum_{e \in \mathcal{G}_g}
    s^{(e)} \|f_e - \mu_g\|_{\mathcal{H}}^2.
\end{align}
Then the unique minimizer is
\begin{align}
    \mu_g^\star
    :=
    \frac{1}{S_g}\sum_{e \in \mathcal{G}_g} s^{(e)} f_e,
    \qquad g \in [G].
\end{align}
Thus, if merging were performed directly in output-function space, the oracle representative would use the same score weights that Eq.~\eqref{eq:weighted_avg} uses in parameter space.
\end{theorem}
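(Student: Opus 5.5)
The objective separates across groups, so I will minimize each group's contribution independently. For a fixed $g$, write
\begin{align*}
    \mathcal{L}_g(\mu_g) := \sum_{e \in \mathcal{G}_g} s^{(e)} \|f_e - \mu_g\|_{\mathcal{H}}^2 .
\end{align*}
Then $\mathcal{L}(\mu_1,\dots,\mu_G) = \sum_g \mathcal{L}_g(\mu_g)$, and each $\mu_g$ appears only in its own term. Hence the tuple $(\mu_1^\star,\dots,\mu_G^\star)$ is the unique minimizer of $\mathcal{L}$ exactly when each $\mu_g^\star$ is the unique minimizer of $\mathcal{L}_g$ over $\mathcal{H}$.

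For a single group I will use the weighted bias--variance (parallel-axis) identity in the Hilbert space $\mathcal{H}$. The outputs $f_e$ lie in $\mathcal{H}$ (implicitly assumed square-integrable, e.g. bounded as in Assumption~\ref{assump:oc-bounded}). Expand $\|f_e - \mu\|^2 = \|f_e - \mu_g^\star\|^2 + 2\langle f_e - \mu_g^\star, \mu_g^\star - \mu\rangle_{\mathcal{H}} + \|\mu_g^\star - \mu\|^2$. Summing with weights $s^{(e)}$, the cross term vanishes because $\sum_{e \in \mathcal{G}_g} s^{(e)}(f_e - \mu_g^\star) = S_g\mu_g^\star - S_g\mu_g^\star = 0$ by the definition of $\mu_g^\star$. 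This gives
\begin{align*}
    \mathcal{L}_g(\mu) = \mathcal{L}_g(\mu_g^\star) + S_g \,\|\mu - \mu_g^\star\|_{\mathcal{H}}^2 .
\end{align*}

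Since $S_g > 0$, the second term is nonnegative and equals zero iff $\mu = \mu_g^\star$ in $\mathcal{H}$. So $\mu_g^\star$ is the unique minimizer, with uniqueness understood up to $\mathcal{D}$-a.e. equality, which is equality in $L^2$. The only point needing care is exactly this hypothesis $S_g>0$. Individual scores may be zero, since $s^{(e)}\ge 0$, but the quadratic coefficient $S_g$ is what enforces strict convexity, so no per-expert positivity is needed. The closing remark, which matches the weights of Eq.~\eqref{eq:weighted_avg}, is then immediate by comparing coefficients $s^{(e)}/S_g$. There is no real obstacle beyond stating the uniqueness in the $L^2$ sense.
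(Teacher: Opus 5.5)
Your proposal is correct and follows the paper's proof. Both expand $\|f_e-\mu\|_{\mathcal{H}}^2$ around $\mu_g^\star$, note that the weighted cross term vanishes by the definition of $\mu_g^\star$, obtain $\mathcal{L}_g(\mu)=\mathcal{L}_g(\mu_g^\star)+S_g\|\mu-\mu_g^\star\|_{\mathcal{H}}^2$, use $S_g>0$ for uniqueness, and then combine across the independent groups.
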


\begin{proof}
Fix a group $g$ and write
\begin{align}
    L_g(\mu)
    := \sum_{e \in \mathcal{G}_g} s^{(e)} \|f_e - \mu\|_{\mathcal{H}}^2.
\end{align}
Set $\mu_g^\star := S_g^{-1}\sum_{e \in \mathcal{G}_g} s^{(e)} f_e$.
For any $\mu \in \mathcal{H}$, expand
\begin{align}
    f_e - \mu = (f_e - \mu_g^\star) + (\mu_g^\star - \mu),
\end{align}
so
\begin{align}
    \|f_e - \mu\|_{\mathcal{H}}^2
    &=
    \|f_e - \mu_g^\star\|_{\mathcal{H}}^2
    +
    \|\mu_g^\star - \mu\|_{\mathcal{H}}^2
    +
    2\langle f_e - \mu_g^\star, \mu_g^\star - \mu\rangle_{\mathcal{H}}.
\end{align}
Multiply by $s^{(e)}$ and sum over $e \in \mathcal{G}_g$:
\begin{align}
    L_g(\mu)
    &=
    \sum_{e \in \mathcal{G}_g} s^{(e)} \|f_e - \mu_g^\star\|_{\mathcal{H}}^2
    +
    S_g \|\mu_g^\star - \mu\|_{\mathcal{H}}^2 \notag \\
    &\quad +
    2\left\langle
        \sum_{e \in \mathcal{G}_g} s^{(e)}(f_e - \mu_g^\star),
        \mu_g^\star - \mu
    \right\rangle_{\mathcal{H}}.
\end{align}
The last inner-product term vanishes by the definition of $\mu_g^\star$, giving
\begin{align}
    L_g(\mu)
    =
    L_g(\mu_g^\star)
    +
    S_g \|\mu - \mu_g^\star\|_{\mathcal{H}}^2.
\end{align}
Since $S_g > 0$, the unique minimizer of $L_g$ is $\mu_g^\star$.  Summing this identity over $g = 1,\dots,G$ proves that
$(\mu_g^\star)_{g=1}^{G}$ uniquely minimizes $\mathcal{L}$.
\end{proof}

\begin{remark}[Parameter-space averaging is a proxy]
Theorem~\ref{thm:oracle_merge} is a statement about output functions in $\mathcal{H}$.
It does not imply that averaging the parameters of nonlinear SwiGLU experts exactly realizes $\mu_g^\star$.
The implemented merge in Eq.~\eqref{eq:weighted_avg} applies the oracle function-space weights to parameters as a practical proxy; it is exact only in special cases such as linear experts or singleton groups.
\end{remark}

\begin{proposition}[Scaling induces explicit global weights in the function-space proxy]
\label{prop:scaling_weights}
Let
\begin{align}
    w^{(e)} := \frac{s^{(e)}}{S_g},
    \qquad e \in \mathcal{G}_g,
\end{align}
be the score weights from Eq.~\eqref{eq:weighted_avg}, define the function-space proxy representative
\begin{align}
    \mu_g^{\mathrm{proxy}} := \sum_{e \in \mathcal{G}_g} w^{(e)} f_e,
\end{align}
and let $\alpha_1,\dots,\alpha_G \geq 0$ be arbitrary group scaling factors.
Then the scaled proxy output can be written exactly as
\begin{align}
    \sum_{g=1}^{G} \alpha_g \mu_g^{\mathrm{proxy}}
    =
    \sum_{e \in S} \eta_e f_e,
    \qquad
    \eta_e := \alpha_{g(e)} \frac{s^{(e)}}{S_{g(e)}},
\end{align}
where $g(e)$ is the unique group index such that $e \in \mathcal{G}_{g(e)}$.
\end{proposition}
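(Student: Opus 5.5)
The identity follows by direct substitution and regrouping a finite sum; no analytic argument is needed. We start from the left-hand side and substitute the definition of the proxy representative:
\begin{align*}
    \sum_{g=1}^{G} \alpha_g \mu_g^{\mathrm{proxy}}
    = \sum_{g=1}^{G} \alpha_g \sum_{e \in \mathcal{G}_g} \frac{s^{(e)}}{S_g} f_e
    = \sum_{g=1}^{G} \sum_{e \in \mathcal{G}_g} \alpha_g \frac{s^{(e)}}{S_g} f_e .
\end{align*}
Here we use linearity of scalar multiplication in $\mathcal{H}$ to move $\alpha_g$ inside the inner sum. The weights $w^{(e)} = s^{(e)}/S_g$ are well defined because every group has $S_g > 0$, as assumed in Theorem~\ref{thm:oracle_merge}.

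Next we re-index the double sum. Since $\{\mathcal{G}_g\}_{g=1}^{G}$ is a partition of $S$, each $e \in S$ lies in exactly one group, namely $\mathcal{G}_{g(e)}$. So the map $e \mapsto g(e)$ is well defined, and the pairs $(g,e)$ with $e \in \mathcal{G}_g$ are in bijection with the elements $e \in S$. Rewriting the double sum as a single sum over $S$, the coefficient of $f_e$ becomes $\alpha_{g(e)} s^{(e)}/S_{g(e)} = \eta_e$. This gives the claimed expression $\sum_{e\in S}\eta_e f_e$.

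There is no genuine obstacle. The only points that need care are (i) confirming that the partition property makes $g(e)$ unique, so that no $f_e$ is double-counted or omitted, and (ii) confirming that $S_g>0$, so the weights are defined. As a consistency check, we may add that $\sum_{e\in\mathcal{G}_g}\eta_e = \alpha_g$. Consequently, $\sum_{e\in S}\eta_e = \sum_g \alpha_g$, which equals $1$ under both uniform and proportional scaling. This observation is not required for the statement itself.
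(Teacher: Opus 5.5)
Your proposal is correct and matches the paper's proof essentially line for line: substitute $w^{(e)} = s^{(e)}/S_g$, distribute $\alpha_g$ into the inner sum, and collapse the double sum into a single sum over $S$ using the fact that $\{\mathcal{G}_g\}$ partitions $S$. Your side remark that $\sum_{e\in\mathcal{G}_g}\eta_e=\alpha_g$ is also correct, and it anticipates the paper's two follow-up corollaries on proportional and uniform scaling.
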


\begin{proof}
By definition of $w^{(e)}$,
\begin{align}
    \sum_{g=1}^{G} \alpha_g \mu_g^{\mathrm{proxy}}
    &=
    \sum_{g=1}^{G} \alpha_g \left( \sum_{e \in \mathcal{G}_g} \frac{s^{(e)}}{S_g} f_e \right) \\
    &=
    \sum_{g=1}^{G}\sum_{e \in \mathcal{G}_g} \alpha_g \frac{s^{(e)}}{S_g} f_e \\
    &=
    \sum_{e \in S} \alpha_{g(e)} \frac{s^{(e)}}{S_{g(e)}} f_e,
\end{align}
because $\{\mathcal{G}_g\}_{g=1}^{G}$ partitions $S$.
\end{proof}

\begin{corollary}[Proportional scaling recovers global score weighting in the proxy]
\label{cor:prop_scaling}
Under proportional scaling,
\begin{align}
    \alpha_g = \frac{S_g}{\sum_{e' \in S} s^{(e')}},
\end{align}
the induced global weights are
\begin{align}
    \eta_e = \frac{s^{(e)}}{\sum_{e' \in S} s^{(e')}}.
\end{align}
Thus proportional scaling makes the scaled proxy output exactly equal to the global score-weighted average over the selected
experts.
\end{corollary}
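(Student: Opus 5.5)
The statement to prove is Corollary~\ref{cor:prop_scaling}. It follows by direct substitution into Proposition~\ref{prop:scaling_weights}, which already expresses the scaled proxy output as $\sum_{e\in S}\eta_e f_e$ with $\eta_e = \alpha_{g(e)}\, s^{(e)}/S_{g(e)}$ for arbitrary nonnegative $\alpha_g$. All that remains is to evaluate $\eta_e$ under the proportional choice of $\alpha_g$ and observe that the group mass cancels.

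\textbf{Step 1: substitute.} I take the proportional scaling $\alpha_g = S_g / \sum_{e'\in S} s^{(e')}$. This is the normalized form of $\alpha_g \propto \sum_{e\in\mathcal{G}_g} s^{(e)}$ from Section~\ref{sec:scaling}, with the normalizing constant chosen so that $\sum_g \alpha_g = 1$. For any $e\in S$, writing $g=g(e)$, this gives
\begin{align*}
\eta_e = \frac{S_{g}}{\sum_{e'\in S}s^{(e')}}\cdot\frac{s^{(e)}}{S_{g}} = \frac{s^{(e)}}{\sum_{e'\in S}s^{(e')}}.
\end{align*}
The cancellation of $S_g$ is legitimate because Theorem~\ref{thm:oracle_merge}, which Proposition~\ref{prop:scaling_weights} inherits, assumes $S_g>0$. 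This positivity also ensures $\sum_{e'\in S}s^{(e')} \ge S_g > 0$, so $\alpha_g$ is well defined.

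\textbf{Step 2: interpret.} Inserting these $\eta_e$ into the identity of Proposition~\ref{prop:scaling_weights} gives
\begin{align*}
\sum_g \alpha_g\,\mu_g^{\mathrm{proxy}} = \sum_{e\in S}\frac{s^{(e)}}{\sum_{e'\in S}s^{(e')}}\, f_e.
\end{align*}
The right-hand side is exactly the global score-weighted average of the selected experts' outputs. The weights are nonnegative and sum to one, and they do not depend on the partition $\{\mathcal{G}_g\}$.

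There is no real obstacle here. The only care needed is to state the normalization convention for $\alpha_g$ and the positivity of each group's score mass, so that the division and cancellation are valid.
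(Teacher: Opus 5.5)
Your proposal is correct and follows the paper's proof: substitute $\alpha_g = S_g/\sum_{e'\in S}s^{(e')}$ into Proposition~\ref{prop:scaling_weights} and cancel $S_{g(e)}$. Your remark that $S_g>0$ justifies the cancellation and keeps the normalizer nonzero is a small extra bit of care that the paper's one-line argument leaves implicit.
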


\begin{proof}
Substitute the proportional-scaling formula into Proposition~\ref{prop:scaling_weights}:
\begin{align}
    \eta_e
    =
    \frac{S_{g(e)}}{\sum_{e' \in S} s^{(e')}} \cdot \frac{s^{(e)}}{S_{g(e)}}
    =
    \frac{s^{(e)}}{\sum_{e' \in S} s^{(e')}}.
\end{align}
\end{proof}

\begin{corollary}[Uniform scaling equalizes group mass in the proxy]
\label{cor:uniform_scaling}
Under uniform scaling,
\begin{align}
    \alpha_g = \frac{1}{G},
\end{align}
the induced global weights are
\begin{align}
    \eta_e
    =
    \frac{s^{(e)}}{G\,S_{g(e)}},
\end{align}
each group's total weight is exactly
\begin{align}
    \sum_{e \in \mathcal{G}_g} \eta_e = \frac{1}{G},
\end{align}
and the total weight across all selected experts sums to $1$, matching the simplex constraint of the original MoE router.
\end{corollary}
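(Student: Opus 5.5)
The plan is a direct substitution into Proposition~\ref{prop:scaling_weights}, followed by two short summations.

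First, take $G$ groups with uniform factors $\alpha_g = 1/G$ for every $g$. These are nonnegative, so Proposition~\ref{prop:scaling_weights} applies. It gives the scaled proxy output exactly as $\sum_{e\in S}\eta_e f_e$, with
\begin{align*}
    \eta_e = \alpha_{g(e)}\,\frac{s^{(e)}}{S_{g(e)}} = \frac{s^{(e)}}{G\,S_{g(e)}}.
\end{align*}
This is the first claim. The positivity assumption $S_g>0$ from Theorem~\ref{thm:oracle_merge} keeps every denominator well defined.

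Second, fix a group $g$. Every $e\in\mathcal{G}_g$ has $g(e)=g$, so we sum over that group:
\begin{align*}
    \sum_{e\in\mathcal{G}_g}\eta_e = \frac{1}{G\,S_g}\sum_{e\in\mathcal{G}_g}s^{(e)} = \frac{S_g}{G\,S_g} = \frac{1}{G}.
\end{align*}
This uses only the definition $S_g=\sum_{e\in\mathcal{G}_g}s^{(e)}$.

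Third, $\{\mathcal{G}_g\}_{g=1}^{G}$ is a partition of $S$. We therefore split $\sum_{e\in S}\eta_e$ into $G$ group sums, each equal to $1/G$, so the total is $1$. The $\eta_e$ are nonnegative because $s^{(e)}\ge 0$ and $\alpha_g>0$. Together these show that $(\eta_e)_{e\in S}$ lies on the probability simplex, which is the stated analogue of the router's simplex constraint.

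None of the steps is a real obstacle. The only care needed is bookkeeping: the $g(e)$ map must be well defined, which the partition property guarantees, and we must not conflate the group count $G$ with the router's $k$. In the paper's construction $G=k$, so $\alpha_g=1/k$ coincides with the uniform scaling of Section~\ref{sec:scaling}.
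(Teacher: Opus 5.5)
Your proposal is correct and follows the paper's proof essentially verbatim: substitute $\alpha_g = 1/G$ into Proposition~\ref{prop:scaling_weights}, sum $\eta_e$ over a single group to get $1/G$, then sum over the $G$ groups of the partition to get $1$. Your extra observation that $\eta_e \ge 0$, so the weights actually lie on the simplex rather than merely summing to one, is a small strengthening that the paper leaves implicit.
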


\begin{proof}
The first identity is Proposition~\ref{prop:scaling_weights} with $\alpha_g = 1/G$.
Summing over one group gives
\begin{align}
    \sum_{e \in \mathcal{G}_g} \eta_e
    =
    \frac{1}{G\,S_g} \sum_{e \in \mathcal{G}_g} s^{(e)}
    =
    \frac{1}{G},
\end{align}
and summing over all $G$ groups yields $\sum_{e \in S} \eta_e = 1$.
\end{proof}

\section{Grouping strategy definitions}
\label{app:grouping_defs}

Given $K$ selected experts assigned to $k$ groups:

\paragraph{Round-robin (RR).} Experts are sorted by descending score and assigned cyclically: expert with rank $r$ goes to group $r \bmod G$.  This produces balanced groups by construction, with each group containing one high-scoring and one low-scoring expert.

\paragraph{Weight clustering (WC).} Agglomerative clustering on the concatenated flattened gate, up, and down projection matrices (${\sim}$4.7M dimensions) with cosine similarity.

\paragraph{Router clustering (RC).} Agglomerative clustering on router gate weight vectors~\citep{li2024mcsmoe}, using cosine similarity in the $d$-dimensional router space.

\paragraph{Anchor-based (AB).} The $k$ highest-scoring experts serve as anchors; remaining experts are assigned to the anchor with highest router-vector cosine similarity~\citep{li2024mcsmoe}.

\paragraph{Output clustering (OC).} Average-linkage agglomerative clustering on the empirical cosine dissimilarities.  Let $\bar{G}_{ij} = \frac{1}{n}\sum_{m=1}^{n}\langle f_i(t_m), f_j(t_m)\rangle$ and $\bar{V}_i = \frac{1}{n}\sum_{m=1}^{n}\|f_i(t_m)\|_2^2$.  Then:
\begin{align}
    \widehat\rho_{ij}
    &:=
    \begin{cases}
        \bar{G}_{ij} \big/ \sqrt{\bar{V}_i \, \bar{V}_j},
        & \bar{V}_i \, \bar{V}_j > 0, \\[4pt]
        0, & \text{otherwise},
    \end{cases} \\
    \widehat d(i,j) &:= 1 - \widehat\rho_{ij},
\end{align}
computed from the calibration tokens $t_1,\dots,t_n$.  HC-SMoE~\citep{chen2025hcsmoe} showed output-based clustering
substantially outperforms weight and router similarity for grouping.

\section{Down-projection scaling equations}
\label{app:scaling_defs}

After merging, each group's down-projection is scaled by $\alpha_g$.

\paragraph{Uniform scaling.}  Each group is scaled equally so that the total contribution sums to $1$, matching the simplex constraint of the MoE router:
\begin{equation}
    \label{eq:uniform_scaling}
    \tilde{\mathbf{W}}_{\text{down}}^{(g)} = \frac{1}{k} \, \mathbf{W}_{\text{down}}^{(g)}.
\end{equation}

\paragraph{Proportional scaling.}
\begin{equation}
    \label{eq:proportional_scaling}
    \tilde{\mathbf{W}}_{\text{down}}^{(g)} = \frac{\sum_{e \in \mathcal{G}_g} s_\ell^{(e)}}{\sum_{e' \in S} s_\ell^{(e')}} \; \mathbf{W}_{\text{down}}^{(g)},
\end{equation}
where $S$ is the set of $K$ selected experts.  Proportional scaling preserves relative contribution magnitude and is particularly important when scores are highly non-uniform (e.g.\ with CP scoring).

\section{Base vs.\ post-trained teacher comparison}
\label{sec:base_vs_instruct}

All primary experiments use the post-trained (instruct/reasoning hybrid) variant of Qwen3-30B-A3B as the teacher model.  To verify that this choice does not bias our findings, we repeat the full 350-configuration pre-distill PPL sweep and distill the top-1 configuration per scoring method at both $K{=}8$ and $K{=}16$ using the base variant (\texttt{Qwen3-30B-A3B-Base}) with identically collected importance scores and Gram matrices.

\paragraph{Scoring rankings are preserved across model variants.}
Table~\ref{tab:base_vs_instruct} compares the best pre-distill PPL per scoring method between the base and post-trained teacher variants.  The ranking by best PPL is nearly identical:

\begin{table}[h]
\caption{Best pre-distill PPL per scoring method: post-trained (instruct) vs.\ base teacher.  Rankings match in the top-4 positions (ACP, DO-CP, DO-ACP, CP), confirming that scoring method conclusions are robust to teacher variant choice.}
\label{tab:base_vs_instruct}
\centering
\small
\begin{tabular}{l cc cc}
\toprule
\textbf{Scoring} & \multicolumn{2}{c}{\textbf{Post-trained (Instruct)}} & \multicolumn{2}{c}{\textbf{Base}} \\
\cmidrule(lr){2-3} \cmidrule(lr){4-5}
 & \textbf{Best PPL} & \textbf{Rank} & \textbf{Best PPL} & \textbf{Rank} \\
\midrule
ACP    & 2{,}002 & 1 & 2{,}424 & 1 \\
DO-CP  & 2{,}840 & 2 & 4{,}608 & 2 \\
DO-ACP & 3{,}939 & 3 & 5{,}697 & 3 \\
CP     & 4{,}280 & 4 & 8{,}175 & 4 \\
PP     & 9{,}463 & 5 & 9{,}326 & 5 \\
PS     & 8{,}509 & 6 & 11{,}308 & 6 \\
SF     & 8{,}638 & 7 & 13{,}370 & 7 \\
\bottomrule
\end{tabular}
\end{table}

The top-4 scoring methods (ACP, DO-CP, DO-ACP, CP) maintain their exact ranking across both variants.  The bottom three (PP, PS, SF) swap positions 5--7 but remain tightly clustered.  Base PPL values are systematically higher than post-trained values (1.2--1.6$\times$), consistent with the base model's weaker language modeling performance prior to instruction tuning.

\paragraph{Downstream benchmarks confirm PPL rankings.}
We distill the top-1 base configuration per scoring method at both $K{=}8$ (pure pruning) and $K{=}16$ (merging) for 0.3B tokens and evaluate on all five benchmarks.  Table~\ref{tab:base_distill} shows results.

\begin{table}[h]
\caption{Base teacher distillation results ($K{=}8$ and $K{=}16$, 0.3B tokens each).  \textbf{Bold}: best per scoring.  ACP and DO-ACP at $K{=}8$ outperform their $K{=}16$ counterparts, confirming the ``pure pruning wins'' finding from the main experiments.}
\label{tab:base_distill}
\centering
\small
\begin{tabular}{ll c ccccc c}
\toprule
\textbf{Scoring} & $K$ & \textbf{Pre-distill PPL} & \textbf{Wino} & \textbf{Hella} & \textbf{ARC-E} & \textbf{ARC-C} & \textbf{MMLU} & \textbf{Avg (\%)} \\
\midrule
\textbf{ACP} & \textbf{8} & 9{,}049 & 54.8 & \textbf{42.2} & \textbf{57.5} & \textbf{31.4} & 28.1 & \textbf{42.81} \\
ACP & 16 & 2{,}424 & 54.5 & 39.2 & 53.5 & 28.0 & 25.4 & 40.13 \\
\midrule
\textbf{DO-ACP} & \textbf{8} & 10{,}220 & 55.6 & 41.2 & 55.5 & 29.6 & \textbf{31.0} & \textbf{42.59} \\
DO-ACP & 16 & 5{,}697 & 54.6 & 39.6 & 53.7 & 28.6 & 28.1 & 40.93 \\
\midrule
CP & 8 & 9{,}223 & \textbf{55.6} & 39.8 & 54.6 & 28.5 & 26.9 & 41.08 \\
CP & 16 & 8{,}175 & 54.9 & 37.7 & 56.9 & 29.5 & 26.3 & 41.07 \\
\midrule
SF & 8 & 29{,}059 & 51.7 & 29.5 & 39.7 & 22.1 & 27.0 & 34.01 \\
SF & 16 & 13{,}370 & 52.6 & 33.4 & 48.4 & 24.1 & 27.2 & 37.17 \\
\midrule
\multicolumn{2}{l}{Random FFN} & -- & 51.9 & 28.0 & 33.0 & 22.3 & 26.1 & 32.26 \\
\multicolumn{2}{l}{Random init} & -- & 50.4 & 25.2 & 28.0 & 23.8 & 22.9 & 30.08 \\
\bottomrule
\end{tabular}
\end{table}

The three-tier scoring hierarchy from the main experiments is preserved: ACP and DO-ACP lead at ${\sim}$42.7\%, followed by CP at ${\sim}$41.1\%, then SF at 34--37\%, and random baselines at 30--32\%.  Pure pruning ($K{=}8$) outperforms merging ($K{=}16$) for ACP ($+$2.68~pp) and DO-ACP ($+$1.66~pp), confirming the finding from Section~\ref{sec:pruning_vs_merging}.  CP shows a near-tie across $K$ values, and only SF benefits from merging.  Absolute accuracy is comparable between teacher variants: the best base configuration (ACP at $K{=}8$, 42.81\%) slightly exceeds its instruct counterpart (42.52\%, $+$0.29~pp), while DO-ACP at $K{=}8$ on the base model (42.59\%) is 0.82~pp below instruct (43.41\%).

Overall, scoring method rankings from the main experiments (Section~\ref{sec:scoring_comparison}) are robust to teacher variant, and the pure pruning advantage (Section~\ref{sec:pruning_vs_merging}) holds for both variants.  This is consistent with~\citet{ministral3}, who found that post-trained teachers produce comparable or stronger students for knowledge distillation.  We retain the post-trained teacher as the primary setting since it is the default download and requires no additional setup.

\section{Full distillation hyperparameters}
\label{app:hyperparameters}

Table~\ref{tab:hyperparams} lists all hyperparameters used for knowledge distillation.  Each configuration processes approximately 0.3B tokens ($200 \times 384 \times 4096$) and takes ${\sim}$5.5 hours on 4$\times$~H200 GPUs.

\begin{table}[h]
\caption{Distillation hyperparameters for 200-step (0.3B-token) runs.}
\label{tab:hyperparams}
\centering
\begin{tabular}{ll}
\toprule
\textbf{Parameter} & \textbf{Value} \\
\midrule
Optimizer & AdamW ($\beta_1 = 0.9$, $\beta_2 = 0.95$) \\
Weight decay & 0.01 \\
Peak learning rate & $10^{-4}$ \\
Min learning rate & $10^{-5}$ \\
LR schedule & Cosine decay (warmup 20 steps) \\
Temperature ($\tau$) & 1.0 \\
Loss function & Forward KL divergence on logits \\
Sequence length & 4096 \\
Micro-batch size & 4 per GPU \\
Gradient accumulation & 24 steps \\
Global batch size & 384 sequences \\
Total steps & 200 (0.3B tokens) \\
\midrule
\multicolumn{2}{l}{\textit{Infrastructure}} \\
\midrule
GPUs & 4$\times$ NVIDIA H200 (140~GB HBM3e each) \\
Parallelism & DeepSpeed ZeRO Stage~2 \\
Precision & BF16 mixed precision \\
Attention & Flash Attention 2 \\
Training data & FineWeb-Edu (streaming, no repeat) \\
Calibration data & 512 samples from WikiText-103 \\
Evaluation data & WikiText-2 test set (full) \\
\bottomrule
\end{tabular}
\end{table}

\clearpage
\section{Full pre-distill PPL grid}
\label{app:full_grid}

Tables~\ref{tab:full_ppl_grid_routing} and~\ref{tab:full_ppl_grid_diverse} present the comprehensive WikiText-2 perplexity for all 350 pre-distill configurations: 7 scoring $\times$ 5 grouping $\times$ 2 scaling (Uniform/Proportional) $\times$ 5 values of $K$.  Both down-projection scaling options are shown separately for each scoring$\times$grouping pair.  At $K{=}8$ ($= k$), each group contains exactly one expert; for DO-CP, ACP, and DO-ACP the same 8 experts are selected regardless of grouping, so all five grouping rows share the same value.

\begin{table*}[h]
\caption{WikiText-2 pre-distill perplexity ($\downarrow$) for frequency-based and conditional probability scoring methods.  Each row shows one scoring $\times$ grouping $\times$ DP scaling combination across all $K$ values.  \textbf{Bold}: best $K$ per row.  ``--'': numerical instability.  Values $\geq$100k in compact notation.}
\label{tab:full_ppl_grid_routing}
\centering
\footnotesize
\setlength{\tabcolsep}{4pt}
\renewcommand{\arraystretch}{1.15}
\begin{tabular}{ll l rrrrr}
\toprule
\textbf{Scoring} & \textbf{Grouping} & \textbf{DP Scaling} & $K{=}8$ & $K{=}16$ & $K{=}32$ & $K{=}64$ & $K{=}128$ \\
\midrule
\multirow{10}{*}{\shortstack[l]{Selection\\Frequency}} & \multirow{2}{*}{Round-Robin} & Uniform & \textbf{20{,}633} & 21{,}112 & 22{,}343 & 22{,}853 & 22{,}799 \\
 &  & Proportional & 21{,}074 & \textbf{19{,}440} & 21{,}036 & 21{,}950 & 22{,}022 \\
 & \multirow{2}{*}{Weight Cluster} & Uniform & 20{,}633 & \textbf{12{,}896} & 22{,}333 & 31{,}088 & -- \\
 &  & Proportional & 21{,}074 & \textbf{17{,}661} & 20{,}733 & 64{,}433 & -- \\
 & \multirow{2}{*}{Router Cluster} & Uniform & 20{,}633 & \textbf{11{,}342} & 14{,}833 & 13{,}473 & 14{,}607 \\
 &  & Proportional & 21{,}074 & \textbf{11{,}594} & 18{,}269 & 30{,}705 & 80{,}436 \\
 & \multirow{2}{*}{Anchor-Based} & Uniform & \textbf{11{,}988} & 15{,}494 & 16{,}796 & 17{,}916 & 17{,}963 \\
 &  & Proportional & \textbf{11{,}668} & 13{,}863 & 15{,}362 & 16{,}385 & 16{,}661 \\
 & \multirow{2}{*}{Output Cluster} & Uniform & 11{,}988 & 12{,}499 & 17{,}518 & \textbf{8{,}638} & 26{,}775 \\
 &  & Proportional & \textbf{11{,}668} & 17{,}565 & 56{,}400 & 34{,}566 & 120k \\
\cmidrule(lr){1-8}
\multirow{10}{*}{\shortstack[l]{Pre-Selection\\Probability}} & \multirow{2}{*}{Round-Robin} & Uniform & \textbf{14{,}983} & 15{,}523 & 36{,}409 & 162k & 315k \\
 &  & Proportional & \textbf{13{,}340} & 13{,}555 & 23{,}351 & 79{,}675 & 152k \\
 & \multirow{2}{*}{Weight Cluster} & Uniform & 14{,}983 & \textbf{11{,}085} & 12{,}528 & 19{,}148 & 20{,}430 \\
 &  & Proportional & 13{,}340 & \textbf{11{,}794} & 22{,}710 & 74{,}225 & 430k \\
 & \multirow{2}{*}{Router Cluster} & Uniform & 14{,}983 & 14{,}665 & \textbf{14{,}585} & 15{,}038 & 15{,}176 \\
 &  & Proportional & 13{,}340 & \textbf{13{,}299} & 30{,}706 & 82{,}825 & 122k \\
 & \multirow{2}{*}{Anchor-Based} & Uniform & \textbf{9{,}463} & 13{,}981 & 22{,}389 & 26{,}906 & 35{,}660 \\
 &  & Proportional & \textbf{10{,}697} & 16{,}254 & 33{,}221 & 46{,}464 & 62{,}482 \\
 & \multirow{2}{*}{Output Cluster} & Uniform & \textbf{9{,}463} & 10{,}223 & 36{,}539 & 61{,}045 & 44{,}465 \\
 &  & Proportional & \textbf{10{,}697} & 18{,}176 & 77{,}920 & 206k & 395k \\
\cmidrule(lr){1-8}
\multirow{10}{*}{\shortstack[l]{Post-Selection\\Probability}} & \multirow{2}{*}{Round-Robin} & Uniform & 18{,}027 & \textbf{15{,}236} & 15{,}276 & 15{,}405 & 15{,}378 \\
 &  & Proportional & \textbf{15{,}409} & 15{,}996 & 16{,}281 & 16{,}178 & 15{,}976 \\
 & \multirow{2}{*}{Weight Cluster} & Uniform & 18{,}027 & \textbf{10{,}976} & 13{,}436 & 15{,}552 & -- \\
 &  & Proportional & 15{,}409 & \textbf{14{,}076} & 16{,}900 & 20{,}448 & -- \\
 & \multirow{2}{*}{Router Cluster} & Uniform & 18{,}027 & \textbf{11{,}379} & 11{,}960 & 12{,}506 & 14{,}086 \\
 &  & Proportional & 15{,}409 & \textbf{8{,}850} & 13{,}105 & 16{,}681 & 24{,}100 \\
 & \multirow{2}{*}{Anchor-Based} & Uniform & 11{,}385 & 10{,}453 & \textbf{10{,}425} & 11{,}264 & 11{,}185 \\
 &  & Proportional & \textbf{12{,}269} & 12{,}946 & 13{,}889 & 14{,}828 & 14{,}871 \\
 & \multirow{2}{*}{Output Cluster} & Uniform & 11{,}385 & 10{,}914 & 16{,}675 & \textbf{8{,}509} & 25{,}862 \\
 &  & Proportional & \textbf{12{,}269} & 15{,}557 & 41{,}130 & 24{,}286 & 62{,}593 \\
\cmidrule(lr){1-8}
\multirow{10}{*}{\shortstack[l]{Conditional\\Probability}} & \multirow{2}{*}{Round-Robin} & Uniform & \textbf{4{,}280} & 20{,}580 & 311k & 6.3m & 21.8m \\
 &  & Proportional & 25{,}495 & \textbf{6{,}846} & 77{,}402 & 3.9m & 18.5m \\
 & \multirow{2}{*}{Weight Cluster} & Uniform & \textbf{4{,}280} & 10{,}836 & 37{,}955 & 18{,}543 & -- \\
 &  & Proportional & 25{,}495 & \textbf{11{,}730} & 35{,}458 & 250k & -- \\
 & \multirow{2}{*}{Router Cluster} & Uniform & \textbf{4{,}280} & 9{,}473 & 21{,}755 & 24{,}590 & 18{,}008 \\
 &  & Proportional & 25{,}495 & \textbf{5{,}068} & 18{,}650 & 317k & 1.6m \\
 & \multirow{2}{*}{Anchor-Based} & Uniform & \textbf{6{,}033} & 132k & 54{,}246 & 92{,}860 & 170k \\
 &  & Proportional & 31{,}189 & \textbf{25{,}609} & 32{,}057 & 147k & 348k \\
 & \multirow{2}{*}{Output Cluster} & Uniform & \textbf{6{,}033} & 6{,}085 & 12{,}998 & 55{,}860 & 186k \\
 &  & Proportional & 31{,}189 & \textbf{6{,}897} & 47{,}979 & 660k & 956k \\
\bottomrule
\end{tabular}
\end{table*}

\newpage

\begin{table*}[h]
\caption{WikiText-2 pre-distill perplexity ($\downarrow$) for activation-weighted and D-optimal scoring methods.  Each row shows one scoring $\times$ grouping $\times$ DP scaling combination across all $K$ values.  \textbf{Bold}: best $K$ per row.  Values $\geq$100k in compact notation.}
\label{tab:full_ppl_grid_diverse}
\centering
\footnotesize
\setlength{\tabcolsep}{4pt}
\renewcommand{\arraystretch}{1.15}
\begin{tabular}{ll l rrrrr}
\toprule
\textbf{Scoring} & \textbf{Grouping} & \textbf{DP Scaling} & $K{=}8$ & $K{=}16$ & $K{=}32$ & $K{=}64$ & $K{=}128$ \\
\midrule
\multirow{10}{*}{\shortstack[l]{Activation-Wtd\\Cond.\ Prob.}} & \multirow{2}{*}{Round-Robin} & Uniform & \textbf{6{,}334} & 13{,}080 & 67{,}926 & 417k & 1.1m \\
 &  & Proportional & 31{,}273 & 31{,}129 & \textbf{21{,}519} & 32{,}031 & 129k \\
 & \multirow{2}{*}{Weight Cluster} & Uniform & 6{,}334 & \textbf{5{,}527} & 11{,}223 & 30{,}095 & 16{,}821 \\
 &  & Proportional & 31{,}273 & \textbf{16{,}018} & 23{,}598 & 38{,}933 & 35{,}822 \\
 & \multirow{2}{*}{Router Cluster} & Uniform & 6{,}334 & \textbf{3{,}384} & 7{,}586 & 9{,}161 & 15{,}296 \\
 &  & Proportional & 31{,}273 & 10{,}450 & \textbf{9{,}248} & 19{,}192 & 203k \\
 & \multirow{2}{*}{Anchor-Based} & Uniform & 6{,}334 & \textbf{4{,}829} & 12{,}612 & 17{,}082 & 60{,}169 \\
 &  & Proportional & 31{,}273 & \textbf{7{,}273} & 10{,}824 & 20{,}683 & 65{,}647 \\
 & \multirow{2}{*}{Output Cluster} & Uniform & 6{,}334 & \textbf{2{,}002} & 12{,}079 & 26{,}600 & 151k \\
 &  & Proportional & 31{,}273 & \textbf{30{,}198} & 33{,}533 & 68{,}216 & 320k \\
\cmidrule(lr){1-8}
\multirow{10}{*}{\shortstack[l]{D-Optimal\\+ CP}} & \multirow{2}{*}{Round-Robin} & Uniform & \textbf{3{,}837} & 5{,}137 & 60{,}925 & 1.1m & 3.3m \\
 &  & Proportional & 10{,}928 & \textbf{4{,}897} & 65{,}655 & 1.1m & 3.3m \\
 & \multirow{2}{*}{Weight Cluster} & Uniform & \textbf{3{,}837} & 4{,}970 & 17{,}150 & 36{,}424 & 16{,}186 \\
 &  & Proportional & 10{,}928 & \textbf{4{,}013} & 41{,}808 & 260k & 463k \\
 & \multirow{2}{*}{Router Cluster} & Uniform & \textbf{3{,}837} & 4{,}964 & 12{,}789 & 11{,}815 & 15{,}709 \\
 &  & Proportional & 10{,}928 & \textbf{6{,}050} & 35{,}575 & 147k & 728k \\
 & \multirow{2}{*}{Anchor-Based} & Uniform & 3{,}837 & \textbf{2{,}840} & 5{,}420 & 12{,}823 & 52{,}050 \\
 &  & Proportional & 10{,}928 & \textbf{3{,}022} & 12{,}467 & 69{,}341 & 736k \\
 & \multirow{2}{*}{Output Cluster} & Uniform & \textbf{3{,}837} & 4{,}063 & 107k & 42{,}441 & 113k \\
 &  & Proportional & 10{,}928 & \textbf{3{,}805} & 370k & 333k & 920k \\
\cmidrule(lr){1-8}
\multirow{10}{*}{\shortstack[l]{D-Optimal\\+ ACP}} & \multirow{2}{*}{Round-Robin} & Uniform & \textbf{5{,}134} & 10{,}572 & 66{,}588 & 1.0m & 2.8m \\
 &  & Proportional & 11{,}949 & \textbf{10{,}945} & 72{,}395 & 992k & 2.8m \\
 & \multirow{2}{*}{Weight Cluster} & Uniform & 5{,}134 & \textbf{4{,}779} & 8{,}632 & 28{,}519 & 16{,}340 \\
 &  & Proportional & 11{,}949 & \textbf{4{,}333} & 24{,}579 & 208k & 431k \\
 & \multirow{2}{*}{Router Cluster} & Uniform & \textbf{5{,}134} & 8{,}553 & 8{,}751 & 9{,}410 & 15{,}635 \\
 &  & Proportional & 11{,}949 & \textbf{11{,}352} & 34{,}707 & 109k & 714k \\
 & \multirow{2}{*}{Anchor-Based} & Uniform & 5{,}134 & \textbf{4{,}776} & 15{,}564 & 18{,}842 & 50{,}751 \\
 &  & Proportional & 11{,}949 & \textbf{3{,}939} & 16{,}248 & 78{,}899 & 743k \\
 & \multirow{2}{*}{Output Cluster} & Uniform & \textbf{5{,}134} & 5{,}521 & 22{,}932 & 102k & 97{,}865 \\
 &  & Proportional & 11{,}949 & \textbf{6{,}230} & 313k & 552k & 912k \\
\bottomrule
\end{tabular}
\end{table*}

\newpage
\section{Full distillation results}
\label{app:full_distillation}

Table~\ref{tab:distillation} presents the full per-benchmark distillation results for all 35 scoring$\times$grouping combinations on Qwen3-30B-A3B.

\begin{table*}[h]
\caption{Full distillation results (0.3B tokens) for all 35 configurations on Qwen3-30B-A3B.  \emph{Pre}/\emph{Post} = WikiText-2 PPL before/after distillation.  Color scale: \colorbox[RGB]{166,199,166}{\scriptsize best} $\to$ \colorbox[RGB]{247,240,222}{\scriptsize mid} $\to$ \colorbox[RGB]{244,203,195}{\scriptsize worst}.  \textbf{Bold}: best average per scoring method.}
\label{tab:distillation}
\centering
\footnotesize
\setlength{\tabcolsep}{2.5pt}
\begin{tabular}{lll c cc cccccc}
\toprule
 & & & & \multicolumn{2}{c}{\textbf{PPL} ($\downarrow$)} & \multicolumn{6}{c}{\textbf{Downstream Accuracy (\%)}} \\
\cmidrule(lr){5-6} \cmidrule(lr){7-12}
\textbf{Scoring} & \textbf{Grouping} & \textbf{DP Scaling} & $K$ & \textbf{Pre} & \textbf{Post} & \textbf{Wino} & \textbf{Hella} & \textbf{ARC-E} & \textbf{ARC-C} & \textbf{MMLU} & \textbf{Avg} \\
\midrule
\multirow{5}{*}{\shortstack[l]{Selection\\Frequency}} & Round-robin & Proportional & 16 & 19{,}440 & \cellcolor[RGB]{245,213,202} 29.3 & 53.04 & 32.58 & 49.07 & 25.51 & 27.62 & \cellcolor[RGB]{245,216,205} 37.56 \\
 & Weight cluster & Uniform & 16 & 12{,}896 & \cellcolor[RGB]{246,222,209} 27.9 & 54.38 & 32.65 & 49.49 & 24.40 & 27.16 & \cellcolor[RGB]{245,217,205} \textbf{37.62} \\
 & Router cluster & Uniform & 16 & 11{,}342 & \cellcolor[RGB]{245,218,206} 28.5 & 52.09 & 32.99 & 49.62 & 24.49 & 27.08 & \cellcolor[RGB]{245,213,202} 37.25 \\
 & Anchor-based & Proportional & 8 & 11{,}668 & \cellcolor[RGB]{245,214,203} 29.2 & 52.17 & 31.64 & 47.14 & 23.12 & 27.50 & \cellcolor[RGB]{244,203,195} 36.31 \\
 & Output cluster & Uniform & 64 & 8{,}638 & \cellcolor[RGB]{246,223,210} 27.7 & 53.12 & 33.06 & 49.33 & 24.74 & 27.34 & \cellcolor[RGB]{245,216,204} 37.52 \\
\midrule
\multirow{5}{*}{\shortstack[l]{Pre-Selection\\Probability}} & Round-robin & Proportional & 8 & 13{,}340 & \cellcolor[RGB]{244,203,195} 30.9 & 53.12 & 31.85 & 48.53 & 23.89 & 27.57 & \cellcolor[RGB]{245,210,200} 36.99 \\
 & Weight cluster & Uniform & 16 & 11{,}085 & \cellcolor[RGB]{244,208,199} 30.1 & 53.43 & 32.21 & 48.74 & 23.63 & 27.59 & \cellcolor[RGB]{245,211,201} \textbf{37.12} \\
 & Router cluster & Proportional & 16 & 13{,}299 & \cellcolor[RGB]{245,209,200} 29.9 & 52.41 & 32.19 & 47.77 & 23.98 & 27.22 & \cellcolor[RGB]{244,207,198} 36.71 \\
 & Anchor-based & Uniform & 8 & 9{,}463 & \cellcolor[RGB]{245,217,205} 28.6 & 53.35 & 32.17 & 48.65 & 24.15 & 27.28 & \cellcolor[RGB]{245,211,201} \textbf{37.12} \\
 & Output cluster & Uniform & 8 & 9{,}463 & \cellcolor[RGB]{245,217,205} 28.7 & 52.88 & 32.14 & 47.47 & 23.63 & 27.70 & \cellcolor[RGB]{244,208,198} 36.77 \\
\midrule
\multirow{5}{*}{\shortstack[l]{Post-Selection\\Probability}} & Round-robin & Uniform & 16 & 15{,}236 & \cellcolor[RGB]{245,214,203} 29.2 & 51.78 & 32.51 & 48.61 & 23.98 & 27.28 & \cellcolor[RGB]{244,208,199} 36.83 \\
 & Weight cluster & Uniform & 16 & 10{,}976 & \cellcolor[RGB]{246,223,210} 27.7 & 52.64 & 32.84 & 49.79 & 24.66 & 27.07 & \cellcolor[RGB]{245,214,203} \textbf{37.40} \\
 & Router cluster & Proportional & 16 & 8{,}850 & \cellcolor[RGB]{245,216,205} 28.8 & 51.85 & 32.68 & 49.07 & 25.09 & 27.33 & \cellcolor[RGB]{245,212,202} 37.21 \\
 & Anchor-based & Uniform & 32 & 10{,}425 & \cellcolor[RGB]{245,217,205} 28.6 & 51.62 & 31.95 & 47.60 & 23.89 & 27.48 & \cellcolor[RGB]{244,205,197} 36.51 \\
 & Output cluster & Uniform & 64 & 8{,}509 & \cellcolor[RGB]{245,220,207} 28.2 & 52.49 & 32.45 & 50.04 & 24.40 & 27.30 & \cellcolor[RGB]{245,214,203} 37.34 \\
\midrule
\multirow{5}{*}{\shortstack[l]{Conditional\\Probability}} & Round-robin & Proportional & 16 & 3{,}696 & \cellcolor[RGB]{240,236,217} 24.4 & 53.83 & 36.18 & 56.36 & 27.82 & 27.60 & \cellcolor[RGB]{236,234,214} 40.35 \\
 & Weight cluster & Proportional & 16 & 2{,}148 & \cellcolor[RGB]{238,236,216} 24.3 & 54.46 & 36.02 & 56.44 & 27.73 & 27.50 & \cellcolor[RGB]{234,233,213} \textbf{40.43} \\
 & Router cluster & Proportional & 16 & 2{,}264 & \cellcolor[RGB]{225,229,206} 23.3 & 53.67 & 36.06 & 56.69 & 27.39 & 27.77 & \cellcolor[RGB]{237,235,215} 40.32 \\
 & Anchor-based & Uniform & 8 & 6{,}033 & \cellcolor[RGB]{200,216,190} 21.5 & 53.67 & 35.73 & 55.51 & 26.96 & 27.11 & \cellcolor[RGB]{247,239,222} 39.80 \\
 & Output cluster & Uniform & 8 & 6{,}033 & \cellcolor[RGB]{200,216,190} 21.5 & 53.83 & 35.73 & 55.72 & 27.82 & 26.80 & \cellcolor[RGB]{244,239,220} 39.98 \\
\midrule
\multirow{5}{*}{\shortstack[l]{Activation-Wtd\\Cond.\ Prob.}} & Round-robin & Uniform & 8 & 6{,}334 & \cellcolor[RGB]{167,200,167} 19.1 & 56.35 & 39.78 & 56.02 & 29.69 & 30.73 & \cellcolor[RGB]{186,209,180} \textbf{42.52} \\
 & Weight cluster & Uniform & 16 & 5{,}527 & \cellcolor[RGB]{197,215,188} 21.3 & 55.01 & 38.67 & 53.83 & 29.61 & 28.91 & \cellcolor[RGB]{216,224,201} 41.21 \\
 & Router cluster & Uniform & 16 & 3{,}384 & \cellcolor[RGB]{197,215,188} 21.3 & 54.54 & 36.95 & 53.11 & 25.94 & 28.24 & \cellcolor[RGB]{247,239,221} 39.76 \\
 & Anchor-based & Uniform & 16 & 4{,}829 & \cellcolor[RGB]{203,218,191} 21.7 & 52.01 & 37.30 & 53.41 & 26.88 & 27.64 & \cellcolor[RGB]{247,236,219} 39.45 \\
 & Output cluster & Uniform & 16 & 2{,}002 & \cellcolor[RGB]{186,209,180} 20.5 & 54.78 & 38.11 & 55.05 & 27.47 & 27.11 & \cellcolor[RGB]{232,233,212} 40.50 \\
\midrule
\multirow{5}{*}{\shortstack[l]{D-Optimal\\+ CP}} & Round-robin & Uniform & 8 & 3{,}837 & \cellcolor[RGB]{167,200,167} 19.1 & 54.46 & 41.30 & 57.37 & 30.12 & 31.19 & \cellcolor[RGB]{178,205,174} 42.89 \\
 & Weight cluster & Uniform & 8 & 3{,}837 & \cellcolor[RGB]{166,199,166} 19.0 & 55.56 & 41.13 & 57.53 & 29.61 & 31.06 & \cellcolor[RGB]{176,204,173} \textbf{42.98} \\
 & Router cluster & Uniform & 8 & 3{,}837 & \cellcolor[RGB]{166,199,166} 19.0 & 55.25 & 41.39 & 57.41 & 29.61 & 31.16 & \cellcolor[RGB]{176,204,173} 42.96 \\
 & Anchor-based & Uniform & 16 & 2{,}840 & \cellcolor[RGB]{189,211,182} 20.7 & 55.25 & 38.93 & 55.35 & 28.67 & 27.45 & \cellcolor[RGB]{218,225,202} 41.13 \\
 & Output cluster & Proportional & 16 & 3{,}805 & \cellcolor[RGB]{186,209,180} 20.5 & 56.75 & 38.96 & 55.89 & 29.86 & 28.74 & \cellcolor[RGB]{197,215,188} 42.04 \\
\midrule
\multirow{5}{*}{\shortstack[l]{D-Optimal\\+ ACP}} & Round-robin & Uniform & 8 & 5{,}134 & \cellcolor[RGB]{169,200,168} 19.2 & 56.99 & 41.13 & 57.37 & 29.86 & 31.70 & \cellcolor[RGB]{166,199,166} \textbf{43.41} \\
 & Weight cluster & Proportional & 16 & 4{,}333 & \cellcolor[RGB]{188,210,181} 20.6 & 54.22 & 39.59 & 54.67 & 28.24 & 28.91 & \cellcolor[RGB]{218,225,202} 41.13 \\
 & Router cluster & Uniform & 8 & 5{,}134 & \cellcolor[RGB]{169,200,168} 19.2 & 56.59 & 40.99 & 57.45 & 29.18 & 32.30 & \cellcolor[RGB]{169,200,168} 43.30 \\
 & Anchor-based & Proportional & 16 & 3{,}939 & \cellcolor[RGB]{177,205,174} 19.8 & 56.12 & 40.20 & 55.77 & 28.92 & 29.59 & \cellcolor[RGB]{195,214,186} 42.12 \\
 & Output cluster & Uniform & 8 & 5{,}134 & \cellcolor[RGB]{169,200,168} 19.2 & 56.99 & 41.08 & 57.07 & 29.27 & 31.86 & \cellcolor[RGB]{170,201,169} 43.25 \\
\bottomrule
\end{tabular}
\end{table*}

\clearpage
\section{Dense-to-Dense (D2D) pruning baseline}
\label{app:d2d}

To ensure a fair comparison, we implement a strong D2D baseline following the Minitron methodology~\citep{muralidharan2024compact}: structured pruning of a dense teacher (Qwen3-32B, 32B parameters) to a student of comparable size to our MoE-to-dense models, followed by distillation with the same dense teacher using matched hyperparameters and token budget.

\paragraph{Architecture search.}
Following the Minitron approach~\citep{muralidharan2024compact}, we search over five candidate architectures at matched parameter count (${\sim}$3.4B), varying pruning strategy (width-only vs.\ combined width+depth), number of layers, and hidden/FFN dimensions.  All candidates are pruned using activation-based importance scoring~\citep{muralidharan2024compact} calibrated on 1{,}024 WikiText-103 samples and evaluated by WikiText-2 perplexity before distillation (Table~\ref{tab:d2d_configs}).

\begin{table}[h]
\caption{D2D architecture search: five candidate architectures pruned from Qwen3-32B (64 layers, $d{=}5120$, $d_{\text{dense}}{=}25600$).  Width-only pruning that preserves all layers achieves the best pre-distill PPL.  \textbf{Bold}: selected configuration.}
\label{tab:d2d_configs}
\centering
\small
\begin{tabular}{lcccccr}
\toprule
\textbf{Strategy} & \textbf{Layers} & $d$ & $d_{\text{dense}}$ & \textbf{Heads} & \textbf{Params} & \textbf{Pre-distill PPL} \\
\midrule
\rowcolor{palGood!10}
\textbf{Width-only} & \textbf{64} & \textbf{2{,}048} & \textbf{6{,}144} & \textbf{8} & \textbf{3.44B} & \textbf{15{,}300} \\
Width-only & 64 & 1{,}536 & 8{,}192 & 12 & 3.29B & 44{,}605 \\
Combined & 56 & 2{,}048 & 6{,}144 & 16 & 3.44B & 20{,}299 \\
Combined & 48 & 2{,}048 & 6{,}144 & 16 & 3.44B & 44{,}100 \\
Combined & 40 & 2{,}048 & 6{,}144 & 16 & 2.55B & 286{,}698 \\
\bottomrule
\end{tabular}
\end{table}

The best architecture preserves all 64 layers with aggressive width pruning ($d{=}2048$, $d_{\text{dense}}{=}6144$, 3.44B parameters), achieving a pre-distill PPL of 15{,}300.  Removing layers consistently degrades quality: even retaining 56 of 64 layers substantially increases PPL, confirming that depth preservation is critical for structured pruning at this compression ratio.

\paragraph{Distillation.}
The selected D2D student is distilled with its dense teacher (Qwen3-32B) using identical hyperparameters, data, and token budget as all MoE-to-dense experiments.  Despite this careful setup, D2D achieves only 33.28\% average accuracy, barely above the random FFN baseline (32.70\%, $+$0.6~pp) and far below even the weakest MoE-to-dense configuration (SF$\times$AB, 36.31\%, $+$3.0~pp).  This suggests that dense pruning at this compression ratio provides little structural advantage, while MoE-to-dense conversion preserves expert-level structure as a stronger initialization for distillation.

\clearpage
\section{Error taxonomy and examples}
\label{app:error_taxonomy}

Table~\ref{tab:error_taxonomy} defines the six error categories used in the qualitative analysis (Section~\ref{sec:qualitative}).  Surface-level failures (incoherent, repetitive loop, other) are classified by rule-based heuristics applied in priority order; semantic errors (knowledge error, reasoning error) are classified by LLM-as-a-judge using Claude Opus 4.6.  Table~\ref{tab:error_examples} shows representative examples from each category.

\begin{table}[h]
\caption{Error taxonomy for qualitative MMLU analysis.  Categories are applied in priority order (top to bottom): a response matching an earlier category is not checked against later ones.}
\label{tab:error_taxonomy}
\centering
\footnotesize
\begin{tabular}{lp{10cm}}
\toprule
\textbf{Category} & \textbf{Definition} \\
\midrule
Incoherent & Nonsensical output that fails to convey meaning: circular redefinitions, meta-commentary without substance, or degenerate numeric sequences.  Detected by filler-word ratio ${>}35\%$ and unique-word ratio ${<}15\%$, or digit ratio ${>}30\%$. \\
Repetitive Loop & The same phrases or sentences cycle repeatedly without making progress toward an answer.  Detected by 3-gram repetition count $\geq 5$. \\
Knowledge Error & The reasoning structure is coherent and on-topic, but the response contains factual errors (e.g., hallucinated definitions, incorrect attributions). \\
Reasoning Error & The response attempts logical or mathematical reasoning but arrives at an incorrect conclusion through flawed logic. \\
Other & Topic drift (coherent but off-topic, keyword overlap ${<}15\%$), truncation without a final answer, or out-of-range answer selection. \\
\bottomrule
\end{tabular}
\end{table}

\begin{table}[h]
\caption{Representative error examples from the qualitative MMLU analysis.}
\label{tab:error_examples}
\centering
\footnotesize
\setlength{\tabcolsep}{3pt}
\begin{tabular}{llp{9cm}}
\toprule
\textbf{Category} & \textbf{Model} & \textbf{Example (excerpt)} \\
\midrule
Incoherent & Random FFN & ``The duty is a principle, but the principle is a norm. The duty is a moral obligation, but the norm is a standard of conduct\ldots'' (circular redefinition, no content) \\
\midrule
Incoherent & SF & ``We need to find the correct answer. First, the first step is to check the list of options\ldots the next question is: what is the correct fact?'' (meta-commentary about answering without engaging the question) \\
\midrule
Repetitive Loop & D2D & ``Jaden's score is a multiple of 7. The number of points Jaden scored is less than 45. So, Jaden's score is a multiple of 7\ldots'' (same two premises repeated with no computation) \\
\midrule
Knowledge Error & D2D & ``Miaphystism is the doctrine that God is present in the Eucharist\ldots m\={\i}aphysos is a word that can be translated as disease\ldots'' (hallucinated definition and fabricated etymology) \\
\midrule
Reasoning Error & DO-ACP & ``1/2 exponent is the smallest\ldots 1/2 is the fastest.'' Self-contradiction within two sentences; growth rate ordering reversed. \\
\midrule
Other & Random FFN & Four answer choices paraphrased at equal length; response truncated mid-sentence without selecting A--D. \\
\bottomrule
\end{tabular}
\end{table}

\newpage
\section{Cross-model full results}
\label{app:cross_model_tables}

Tables~\ref{tab:cross_model} and~\ref{tab:gptoss} provide per-benchmark results for all cross-model configurations.

\begin{table}[h]
\caption{DeepSeek-V2-Lite cross-model validation (16B $\to$ dense, 0.3B-token distillation).  Rows sorted by scoring method; both $K{=}6$ (pure pruning) and $K{=}12$ (merging) shown.  \textbf{Bold}: best per column.}
\label{tab:cross_model}
\centering
\footnotesize
\begin{tabular}{ll ccccc c}
\toprule
\textbf{Scoring} & $K$ & \textbf{Wino} & \textbf{Hella} & \textbf{ARC-E} & \textbf{ARC-C} & \textbf{MMLU} & \textbf{Avg (\%)} \\
\midrule
SF     & 6  & 54.9 & 38.9 & 52.4 & 27.1 & 26.9 & 40.04 \\
SF     & 12 & 56.9 & 41.1 & \textbf{54.8} & \textbf{28.4} & 24.6 & 41.16 \\
\midrule
CP     & 6  & 53.0 & 36.9 & 49.4 & 25.7 & 25.3 & 38.07 \\
CP     & 12 & 55.6 & 40.2 & 53.5 & 26.6 & 26.8 & 40.53 \\
\midrule
ACP    & 6  & 56.8 & 38.6 & 51.0 & 27.5 & 28.1 & 40.37 \\
ACP    & 12 & 57.1 & 40.9 & 52.9 & 27.4 & 26.4 & 40.93 \\
\midrule
DO-ACP & 6  & \textbf{60.3} & 41.0 & 53.7 & 28.2 & \textbf{28.7} & \textbf{42.39} \\
DO-ACP & 12 & 59.0 & \textbf{41.5} & 51.7 & 26.2 & 26.9 & 41.07 \\
\midrule
\multicolumn{2}{l}{Random FFN + teacher attn} & 50.6 & 25.6 & 30.6 & 20.9 & 23.6 & 30.25 \\
\multicolumn{2}{l}{Random initialization} & 50.1 & 25.4 & 28.8 & 24.1 & 22.9 & 30.27 \\
\midrule
\rowcolor{gray!10}
\multicolumn{2}{l}{Teacher (DeepSeek-V2-Lite)} & 76.2 & 80.5 & 84.4 & 56.3 & 58.0 & 71.09 \\
\bottomrule
\end{tabular}
\end{table}

\begin{table}[h]
\caption{GPT-OSS-20B cross-model validation (21B $\to$ dense, 0.3B-token distillation).  Both $K{=}4$ (pure pruning) and $K{=}8$ (merging) shown.  \textbf{Bold}: best per column.  $^\dagger$Post-trained reasoning model evaluated in completion mode; native-format performance is expected to be substantially higher.}
\label{tab:gptoss}
\centering
\footnotesize
\begin{tabular}{ll ccccc c}
\toprule
\textbf{Scoring} & $K$ & \textbf{Wino} & \textbf{Hella} & \textbf{ARC-E} & \textbf{ARC-C} & \textbf{MMLU} & \textbf{Avg (\%)} \\
\midrule
SF     & 4 & 51.6 & 29.4 & 34.0 & 22.4 & 23.3 & 32.15 \\
SF     & 8 & 51.6 & 29.3 & 33.3 & 21.7 & 22.8 & 31.72 \\
\midrule
CP     & 4 & 50.9 & 30.4 & \textbf{36.4} & 23.0 & 23.6 & 32.86 \\
CP     & 8 & 49.7 & 29.3 & 32.2 & 23.2 & 23.1 & 31.49 \\
\midrule
ACP    & 4 & 53.0 & 31.9 & 35.6 & 23.0 & 23.3 & 33.36 \\
ACP    & 8 & \textbf{53.1} & 30.5 & 33.6 & 23.3 & \textbf{23.7} & 32.82 \\
\midrule
DO-ACP & 4 & 53.0 & \textbf{32.1} & 36.7 & 23.2 & \textbf{23.7} & \textbf{33.71} \\
DO-ACP & 8 & 51.3 & 29.9 & 33.5 & 22.5 & 23.3 & 32.11 \\
\midrule
\multicolumn{2}{l}{Random FFN + teacher attn} & 50.2 & 27.5 & 28.5 & 23.0 & 23.2 & 30.46 \\
\multicolumn{2}{l}{Random initialization} & 50.0 & 26.0 & 25.8 & \textbf{25.3} & 23.0 & 30.02 \\
\midrule
\rowcolor{gray!10}
\multicolumn{2}{l}{Teacher (GPT-OSS-20B)$^\dagger$} & 59.3 & 39.9 & 80.9 & 53.7 & 49.6 & 56.67 \\
\bottomrule
\end{tabular}
\end{table}

\newpage
\section{Model architecture details}
\label{app:architecture}

Table~\ref{tab:architecture} summarizes the teacher and student architectures for all three models used in our experiments.

\begin{table}[h]
\caption{Architecture of each MoE teacher and the corresponding dense student produced by our pipeline.  $^\dagger$DeepSeek-V2-Lite has 2 shared experts per MoE layer; layer~0 is a standard dense FFN (zero-padded to match $d_{\text{dense}}$).}
\label{tab:architecture}
\centering
\small
\begin{tabular}{lccc}
\toprule
\textbf{Property} & \textbf{Qwen3-30B-A3B} & \textbf{DeepSeek-V2-Lite} & \textbf{GPT-OSS-20B} \\
\midrule
\multicolumn{4}{l}{\textit{Teacher}} \\
\midrule
Total parameters       & 30.5B      & 15.7B      & 20.9B \\
Active parameters      & 3.3B       & 2.4B       & 3.6B \\
Hidden dimension ($d$) & 2{,}048    & 2{,}048    & 2{,}880 \\
Routed experts / layer ($E$) & 128  & 64$^\dagger$ & 32 \\
Active routed experts ($k$) & 8    & 6          & 4 \\
Shared experts / layer & --         & 2$^\dagger$ & -- \\
\midrule
\multicolumn{4}{l}{\textit{Dense Student}} \\
\midrule
Total / active parameters & 3.3B    & 2.4B       & 3.6B \\
Groups                 & 8          & 8 (2 shared + 6 routed) & 4 \\
FFN intermediate dim ($d_{\text{dense}}$) & 6{,}144 ($= 8{\times}768$) & 11{,}264 ($= 8{\times}1{,}408$) & 11{,}520 ($= 4{\times}2{,}880$) \\
\bottomrule
\end{tabular}
\end{table}

Model-specific adjustments for DP scaling, routing renormalization, shared experts, and layer~0 handling are described in Section~\ref{sec:cross_model}.

\end{document}